\def\isundefined#1{TT\fi\ifx#1\undefined}
\if\isundefined{\stocversion}
\newlength{\commentindent}
\renewcommand{\algorithmiccomment}[1]{\unskip\hfill\makebox[\commentindent][l]{//~#1}\par}
\LetLtxMacro{\oldalgorithmic}{\algorithmic}
\renewcommand{\algorithmic}[1][0]{%
	\oldalgorithmic[#1]%
	\renewcommand{\ALC@com}[1]{%
		\ifnum\pdfstrcmp{##1}{default}=0\else\algorithmiccomment{##1}\fi}%
}
\newif\ifhyper\IfFileExists{hyperref.sty}{\hypertrue}{\hyperfalse}
\ifhyper\usepackage{hyperref}\fi
\def\nnewcolor{1}
\def\colorful{0}
\newtheorem{theorem}{Theorem}[section]
\newtheorem{lemma}[theorem]{Lemma}
\newtheorem{informal theorem}[theorem]{Theorem (informal statement)}
\newtheorem{corollary}[theorem]{Corollary}
\newtheorem{claim}[theorem]{Claim}
\newtheorem{remark}[theorem]{Remark}
\newtheorem{observation}[theorem]{Observation}
\theoremstyle{definition}
\newtheorem{definition}[theorem]{Definition}
\newcommand{\eqdef}{\stackrel{{\mathrm {\scriptstyle def}}}{=}}
\newcommand{\err}{\mathrm{err}}
\newcommand{\R}{\mathbb{R}}
\newcommand{\Z}{\mathbb{Z}}
\newcommand{\N}{\mathbb{N}}
\newcommand{\eps}{\epsilon}
\newcommand{\pr}{\mathop{\mathbf{Pr}}}
\renewcommand{\Pr}{\mathbf{Pr}}
\newcommand{\poly}{\mathrm{poly}}
\newcommand{\sgn}{\mathrm{sign}}
\newcommand{\sign}{\mathrm{sign}}
\DeclareMathOperator*{\E}{\mathbb{E}}
\newcommand{\CCC}{\mathcal{C}}
\newcommand{\data}{\ensuremath{\mathcal{X}}}
\newcommand{\exor}{\mathrm{EX}}
\mathchardef\mhyphen="2D
\newcommand{\supp}{\ensuremath{\text{supp}}}
\newcommand{\X}{\ensuremath{\mathcal{X}}}
\newcommand{\vecs}{{\vec{s}}}
\newcommand{\vecsone}{{\vec{s_1}}}
\newcommand{\vecstwo}{{\vec{s_2}}}
\newcommand{\AAA}{\mathcal{A}}
\newcommand{\M}{\mathcal{M}}
\newcommand{\pmone}{\{\pm1\}}
\newcommand{\Exp}{\mathbb{E}}
\newcommand{\stat}{\mathtt{STAT}}
\newcommand{\rstat}{\mathtt{rSTAT}}
\newcommand{\alphaoff}{\alpha_{\mathrm{off}}}
\newcommand{\Xset}{\data_{\mathrm{set}}}
\newcommand{\rhh}{\mathtt{rHeavyHitters}}
\newcommand{\rhhfull}{{\rhh_{\rho, v, \epsilon}}}
\newcommand{\rhhfullrhoprime}{{\rhh_{\rho', v, \epsilon}}}
\newcommand{\rmedian}{\mathtt{rMedian}}
\newcommand{\median}{\mathtt{Median}}
\newcommand{\rmedofmedians}{\mathtt{rMedianOfMedians}}
\newcommand{\rhalfspace}{\mathtt{rHalfspaceStL}}
\newcommand{\rhalfspacewkl}{\mathtt{rHalfspaceWkL}}
\newcommand{\rhalfspacewklfull}{\rhalfspacewkl(\vec s; r)} 
\newcommand{\rhalfspacewklboxesfull}{\rhalfspacewkl^{\text{box}}(\vec s; r)}
\newcommand{\rhalfspacewklboxes}{\rhalfspacewkl^{\text{box}}}
\newcommand{\constructfoam}{\mathtt{ConstructFoams}}
\newcommand{\constructboxes}{\mathtt{ConstructBoxes}}
\newcommand{\orth}{\bot}                    
\newcommand{\rwkl}{\ensuremath{\mathtt{rWkL}}}
\newcommand{\hz}{h_{\vec{z}}}
\newcommand{\rboost}{\ensuremath{\mathtt{rBoost}}}
\newcommand{\rboostfull}{\ensuremath{\rboost^{\rwkl}(\vec s; r)}}
\newcommand{\wkl}{\ensuremath{\mathtt{WkL}}}
\newcommand{\boldh}{\mathbf{h}}
\newcommand{\rejectionsampler}{{\mathtt{RejectionSampler}}}
\newcommand{\rejectionsamplerfull}{\rejectionsampler(\vec s_{\text{all}}, m_{\text{target}}, \mu; r)}
\newcommand{\mtarget}{{m_{\text{target}}}}
\newcommand{\skept}{{\vec s_{\text{kept}}}}
\newcommand{\sall}{{\vec s_{\text{all}}}}
\newcommand{\BBB}{\mathcal{B}}
\newcommand{\Acc}{\ensuremath{\mathtt{Acc}}}
\title{Reproducibility in Learning\footnote{Our choice of the term ``reproducibility'' is inconsistent with current guidance from the Association for Computing Machinery regarding usage of the terms ``reproducibility'' and ``replicability'' ~\cite{acm20}, of which we were regrettably unaware at the time of publication. In future work, we adopt more consistent terminology, using ``replicability'' to refer to the same stability notion termed ``reproducibility'' in this work.}}
\author{ 
	Russell Impagliazzo\thanks{Supported by the Simons Foundation and NSF grant CCF-1909634.} \\ 
	University of California San Diego \\ 
	\texttt{russell@eng.ucsd.edu}
	\and 
	Rex Lei\footnotemark[1] \\ 
	University of California San Diego \\ 
	\texttt{rlei@eng.ucsd.edu}
	\and 
	Toniann Pitassi\thanks{Supported by NSERC, the IAS School of Mathematics and NSF grant CCF-1900460} \\
	Columbia University, IAS, University of Toronto \\ \texttt{tonipitassi@gmail.com}
	\and 
	Jessica Sorrell\footnotemark[1] \\ 
	University of California San Diego \\ 
	\texttt{jlsorrel@eng.ucsd.edu}
}
\date{}
\begin{document}

\maketitle

\begin{abstract}
We introduce the notion of a \textit{reproducible} algorithm in the context of learning. A reproducible learning algorithm is resilient to variations in its samples --- with high probability, it returns  the exact same output when run on two samples from the same underlying distribution.
We begin by unpacking the definition, clarifying how randomness is instrumental in balancing accuracy and reproducibility.
We initiate a theory of reproducible algorithms, showing how reproducibility 
implies desirable properties such as data reuse and efficient testability.
Despite the exceedingly strong demand of reproducibility,
there are efficient reproducible algorithms for several fundamental problems in statistics and learning.
First, we show that any statistical query algorithm can be made reproducible
with a modest increase in sample complexity, and we use this to construct
reproducible algorithms for finding approximate heavy-hitters and medians.
Using these ideas, we give the first reproducible algorithm for learning halfspaces via a reproducible weak learner and a reproducible boosting algorithm. Interestingly, we utilize a connection to foams \cite{KDRW12} as a higher-dimension randomized rounding scheme.
Finally, we initiate the study of lower bounds and inherent tradeoffs for reproducible algorithms, giving nearly tight sample complexity upper and lower bounds for reproducible versus nonreproducible SQ algorithms.


\end{abstract}


\setcounter{page}{0}

\thispagestyle{empty}

\newpage 

\section{Introduction}
\label{sec:introduction}

Reproducibility is vital to ensuring scientific conclusions are reliable,
and researchers have an obligation to ensure that their results are replicable.
In the last twenty years, lack of reproducibility has been a major issue in nearly all scientific areas of study. For example, a 2012 Nature article by Begley and Ellis
reported that the biotechnology company Amgen was only able to replicate 6 out of 53 landmark studies in haematology and oncology \cite{Nature:BE12}.
In a 2016 Nature article, Baker published a survey of 1500 researchers, reporting that 70\% of scientists had tried and failed to replicate the findings of another researcher,
and that 52\% believed there is a significant crisis in reproducibility~\cite{baker2016scientists}.

A key issue underlying the reproducibility crisis 
(as articulated in many articles, e.g., \cite{ioannidis})
is the fact that new data/publications
are growing at an exponential rate, giving rise to
an explosion of methods for data generation, screening, testing, and analysis, where, crucially, only the combinations producing the
most significant results are reported.
Such practices (also known as P-hacking, data dredging, and researcher degrees of freedom) can lead to erroneous findings that appear to be significant, but
that don't hold up when other researchers attempt to
replicate them.
Identifying and mitigating these problems is quite subtle. First,
is not easy to come up with an agreed-upon set of practices
that guarantees reproducibility, and secondly,
testing to determine whether or not a finding is statistically significant
is a complex task.

Within the subfields of machine learning and data science,
there are similar concerns about the reliability of published findings.
The performance of models produced by machine learning algorithms may be affected by the values of random seeds or hyperparameters chosen during training, and performance may be brittle to deviations from the values disseminated in published results \cite{henderson2017reinforcement,IHGP17,NIPS:LKMBG18}.
To begin addressing concerns about reproducibility, several prominent machine learning conferences have begun hosting reproducibility workshops and holding reproducibility challenges, to promote best practices and encourage researchers to share the code used to generate their results~\cite{PVSLBdFL20}.

In this work, we aim to initiate the study of reproducibility as a property of algorithms themselves, rather than the process by which their results are collected and reported. We define the following notion of reproducibility, which informally says
that a randomized algorithm is reproducible if two distinct runs of the algorithm on two sets of samples drawn from the same distribution, with internal randomness fixed between both runs, produces the {\it same} output with high probability.

\begin{definition}[Reproducibility]
        \label{def:reproducibility-sample}
        Let $D$ be a distribution over a universe $\data$, and let $\AAA$ be a randomized algorithm with sample access to $D$.
$\mathcal{A}({\vec s})$ is \textit{$\rho$-reproducible}
        if
        $$\Pr_{{\vec s_1},{\vec s_2},r} \left[
        \mathcal{A}({\vec s_1}; r) = \mathcal{A}({\vec s_2}; r) 
        \right]
        \ge 1 - \rho,$$
        where ${\vec s_1}$ and ${\vec s_2}$ denote sequences of samples drawn i.i.d. from $D$, and $r$ denotes a random binary string representing the internal randomness used by $\mathcal{A}$.
\end{definition}

Our definition of reproducibility
is inspired by the literature on pseudodeterministic algorithms, particularly the work of Grossman and Liu~\cite{GrossmanLiu:19} and Goldreich~\cite{ECCC:Gol19}. 
In the pseudodeterministic setting, the primary concern is reproducing the output of an algorithm given the same input, over different choices of the algorithm's internal randomness.
Our notion (Definition~\ref{def:reproducibility-sample}) is more suitable for the setting of machine learning, where it is desirable to reproduce the exact same output of an algorithm (with high probability) over different sample sets drawn from a distribution $D$.


We observe the following key properties of Definition~\ref{def:reproducibility-sample}.


\textbf{Stability.} 
Reproducibility is a strong stability property that implies independent parties can replicate previous results with high probability, so long as the randomness used to achieve these results is made public. 
For researchers solving machine learning and data analysis tasks, reproducibility allows researchers to verify published results with high probability, as long as the datasets are drawn from the same distribution. 

\textbf{Generalization.}
Reproducibility implies generalization. A reproducible learning algorithm, with high probability, outputs a hypothesis $h$ such that the difference between the risk of $h$ and the empirical risk of $h$ on the training set is small. Intuitively, reproducibilitiy implies that $h$ is independent of the training set with high probability. Thus, a Hoeffding bound can be applied to bound the risk in terms of the empirical risk.

\textbf{Privacy.} 
Differential privacy (DP) is an important notion that requires small distance between the two distributions induced by an algorithm, when run on any two datasets that differ in a single element. Crucially, it asks for the guarantees in the \textit{worst case over datasets}.
Reproducible algorithms guarantee a different form of privacy:
If $\mathcal{A}$ is reproducible, then what
$\AAA$ learns (for example, a trained classifier) is almost always the same; thus, $\AAA$
is usually \textit{independent} of the chosen training data. 
In this way,
reproducible algorithms are prevented from
memorizing anything that is specific to the training data,
similar to differentially private algorithms. 
Reproducibility is weaker than
differential privacy in the sense that reproducibility only applies to in-distribution
samples, whereas differential privacy applies to \textit{any} training set.
On the other hand, reproducibility is stronger in the sense that its guarantee for in-distribution samples
is global rather than local (for neighboring samples).

\textbf{Testability.} 
While differential privacy has become the standard for privacy-preserving computation, an important issue that is the subject of extensive
research is testing and verifying differential privacy.
As discussed in \cite{GNP20},
DP-algorithms and their implementations are usually analyzed by hand, and
proofs of differential privacy are often intricate and prone to
errors. 
Implementing such an algorithm in practice
often gives rise to DP leaks, due to coding errors or
assumptions made in the proof that do not hold on finite
computers (such as the ability to sample from continuous distributions).
Moreover, the complexity of verifying differential privacy is hard.
Verification in the black-box setting (where the auditor has oracle access to the learning
algorithm) was recently shown to be infeasible, as low
query complexity implies high values of the
the privacy parameters $\epsilon$ and $\delta$ \cite{GilbertM}. In the 
white-box setting where
$\mathcal{A}$ is given to the tester, 
\cite{GNP20} shows that testing for differential privacy is $coNP^{\# P}$-complete.
This has led to an active research area aiming at
developing automated as well as interactive testing and verification methods
for differential privacy
\cite{NarayanFPH15,GaboardiHHNP13,RP10,AH18,BartheGAHRS15,BCKS21,FJ14,ZhangK17}.
In contrast, reproducibility is a form of privacy that can be {\it efficiently}
\textit{tested} in (randomized) polynomial time
(in the dimension of the data universe and $\rho$).

\subsection{Our Main Results}

\subsubsection{Reproducibility: Properties and Alternative Definitions}

We discuss alternative definitions of
reproducibility and show that they are all essentially equivalent.
Then, we also prove some other nice properties
of reproducible algorithms. (All formal statements and proofs
are in Appendix A.)

\begin{enumerate}
\item {\bf Alternative Definitions and Amplification.}
We start by discussing two alternative definitions of
reproducibility and relate them to our definition.
First, we can generalize the definition to include algorithms
${\mathcal A}$ that not only have access to internal randomness and to random samples from an underlying
distribution $D$, but that also have access to
extra non-random inputs. This more general definition captures both
the original definition of pseudodeterministic algorithms as well
as our definition of reproducible learning algorithms, and all of
our results remain unchanged. Second, we discuss an alternative
two-parameter definition, and show that the definitions are qualitatively
the same. We show how to amplify the reproducibility parameter
by a standard argument where the sample complexity is increased modestly.

\item {\bf Public versus Private Randomness.}
Recall that we define reproducibility as the probability that an algorithm returns the same answer
when run twice using different random samples from $D$ but the same internal randomness.
In \cite{GrossmanLiu:19}, the authors define a related concept
in which the internal randomness is divided into two pieces,
public and private randomness, but the
algorithm should return the same answer when just the public
randomness is held fixed. We show that, without loss of generality,
it suffices to use only public randomness.

\item \textbf{Reproducibility Implies Generalization.}
Learning algorithms attempt to use finite samples to generate hypotheses on unknown, possibly complex distributions. 
The error of a hypothesis $h$ on the underlying distribution is called the generalization error. A reproducible algorithm outputs the same hypothesis with high probability, and thus the algorithm seldom draws distinctions between specific samples and the entire distribution.


\item \textbf{Connections to Data Reuse.} 
We explore the connection between reproducible algorithms
and the adaptive data analysis model discussed in
\cite{Dworketal:2014} and \cite{Dworketal:2015}.
We
show that reproducible algorithms are
strongly resilient against adaptive queries.
Informally, with respect to
reproducible algorithms, the sample complexity and accuracy of
(reproducibly) answering $m$ adaptively chosen queries
behaves similarly to the sample complexity and accuracy 
of reproducibly answering $m$ {\it nonadaptively} chosen queries.
\end{enumerate}

\subsubsection{Upper Bounds}
Our main technical results are reproducible algorithms for
some well-studied statistical query and learning problems that are used as building blocks in many other algorithms.

\begin{enumerate}

\item{\bf Simulating SQ Algorithms.} In
Section \ref{sec:statistical-queries}, we give a generic algorithm that reduces the problem of $\rho$-reproducibly estimating a single statistical query with tolerance $\tau$ and error $\delta$ to that of {\it nonreproducibly} estimating the same query within a smaller tolerance and error.


\begin{theorem}[Theorem \ref{thm:reproducible-sq-oracle}, Restated]
Let $\psi: \data \rightarrow \{0,1\}$ be a statistical query.
Then the sample complexity of $\rho$-reproducibly estimating
$\psi$ within tolerance $\tau$ and error $\delta$ is at most the
sample complexity of (nonreproducibly) estimating $\psi$ within
tolerance $\tau' = \tau \rho$ and error $\delta' = \tau \delta$.
\end{theorem}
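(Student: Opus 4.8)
The plan is to reduce $\rho$-reproducible estimation of $\psi$ to a much more accurate \emph{non}-reproducible estimation followed by a cheap randomized-rounding step that consumes only the algorithm's internal (public) randomness. Write $p \eqdef \Ex_{x \sim D}[\psi(x)] \in [0,1]$. The reproducible algorithm $\mathcal{A}(\vec{s}; r)$ parses its randomness as $r = (r', \alphaoff)$, draws $n$ i.i.d.\ samples from $D$ --- where $n$ is exactly the sample complexity of non-reproducibly estimating $\psi$ to tolerance $\tau' = \tau\rho$ with failure probability $\delta' = \tau\delta$ --- and runs the given non-reproducible estimator on $\vec s$ with randomness $r'$ to obtain a value $v$ with $|v-p| \le \tau'$ except with probability at most $\delta'$. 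It then rounds $v$ to the randomly shifted grid $\{\alphaoff + k g : k \in \Z\}$ of spacing $g \approx 2\tau$, where $\alphaoff$ is uniform on $[0,g)$, and outputs the nearest grid point.

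I would then carry out two short analyses. For accuracy, condition on the event $|v - p| \le \tau'$, which holds with probability $\ge 1 - \delta' \ge 1 - \delta$; rounding displaces $v$ by at most $g/2 \approx \tau$, so, with the spacing calibrated to the estimate accuracy, the output lands within $\tau$ of $p$. For reproducibility, run $\mathcal{A}$ on two independent sample sequences $\vec{s}_1, \vec{s}_2$ with the \emph{same} $r = (r', \alphaoff)$; except with probability at most $2\delta'$ both estimates $v_1, v_2$ lie in $[p-\tau', p+\tau']$, hence $|v_1 - v_2| \le 2\tau' = 2\tau\rho$. Conditioned on that, $v_1$ and $v_2$ round to different grid points only if a grid boundary falls strictly between them, and since the boundaries are $g$-periodic and $\alphaoff$ is uniform, this happens with probability at most $|v_1-v_2|/g \le 2\tau\rho/g \approx \rho$. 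A union bound then bounds $\Pr[\mathcal{A}(\vec s_1;r) \ne \mathcal{A}(\vec s_2; r)]$ by $\rho$, absorbing the $O(\delta')$ term through an appropriate tuning of the constants hidden in $g$, $\tau'$, and $\delta'$. Since $\mathcal{A}$ drew only $n$ samples, the stated sample-complexity bound follows.

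The step I expect to be the real obstacle is the joint calibration of the three parameters --- the estimate tolerance $\tau'$, the grid spacing $g$, and the estimate failure probability $\delta'$. The geometric ``boundary-straddling'' bound wants $g$ as large as possible, whereas $\tau$-accuracy of the rounded output forces $g \lesssim 2(\tau - \tau')$, so one must check that there is a consistent regime in which both the $\le \rho$ reproducibility guarantee and the $\le \delta$ accuracy guarantee hold with the claimed relations $\tau' = \tau\rho$, $\delta' = \tau\delta$ (up to adjusting absolute constants, and possibly a rejection step when $v$ is dangerously close to a boundary). A secondary point to get right --- but an easy one given Definition~\ref{def:reproducibility-sample} --- is that reproducibility genuinely relies on $\alphaoff$ being part of the shared randomness $r$ rather than resampled on each run; without this the two runs would round against independent grids and could disagree with constant probability even when $v_1$ and $v_2$ are extremely close.
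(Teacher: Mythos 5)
Your proposal is correct and is essentially the paper's own construction ($\rstat$): take a non-reproducible empirical estimate at the finer tolerance $\tau'\approx\tau\rho$, then round it against a randomly offset partition of $[0,1]$ with spacing $\approx 2\tau$, where the offset is part of the shared internal randomness; reproducibility is bounded exactly as you do, by the two estimation failures plus the probability $\le |v_1-v_2|/g$ that a boundary separates the two estimates. The calibration issue you flag is handled in the paper by choosing spacing $\alpha = 2\tau/(\rho+1-2\delta)$ and tolerance $\tau' = \tau(\rho-2\delta)/(\rho+1-2\delta)$, so that $\tau' + \alpha/2 = \tau$ and $2\tau'/\alpha + 2\delta = \rho$ exactly, matching the restated bounds up to constants.
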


The basic idea is to obtain an estimate of
the statistical query with a smaller tolerance $\tau'$ and then
use a randomized rounding scheme where the interval $[0,1]$
is divided into intervals of size roughly $\tau/\rho$.
Then, every 
value in the interval is rounded
to the midpoint of the region it occurs in.
The partition into intervals is chosen with a random offset 
so that with high probability nearby points will lie in the same region.

\medskip

\item {\bf Heavy-hitters.} Using our simulation of SQ queries, in Section \ref{sec:heavy-hitters}, we demonstrate the usefulness of reproducibility by giving a reproducible algorithm $\rhh$ for identifying approximate $v$-heavy-hitters of a distribution, i.e. the elements in the support of the distribution with probability mass at least $v$.

\begin{lemma}[Lemma \ref{lem:rHeavyHitters-correctness}, Restated]
        For all $\eps \in (0, 1/2)$, $v \in (\eps, 1- \eps)$, with probability at least $1-\rho$,
        $\rhh_{\rho, v, \eps}$ is $\rho$-reproducible, and returns a list of $v'$-heavy-hitters for some $v' \in [v - \eps, v + \eps]$. Furthermore, the sample complexity is bounded by $\widetilde{O}(\rho^{-2})$.
\end{lemma}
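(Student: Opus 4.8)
The plan is to realize $\rhh_{\rho,v,\eps}$ as a thin wrapper around the reproducible statistical-query simulation of Theorem~\ref{thm:reproducible-sq-oracle}. For an element $x$ write $p_x := \Pr_{y\sim D}[y=x]$, which is exactly the expectation of the Boolean query $\psi_x(y) = \Ind[y=x]$. The algorithm runs in two stages. In the \emph{candidacy stage} it draws a preliminary sample of size $m_0 = \widetilde{O}(\eps^{-2})$ and forms the set $C$ of all elements whose empirical frequency is at least a threshold $t$ placed a little below $v-\eps$ (say $t = v-2\eps$, with the obvious modification when $v < 2\eps$); note $|C| \le 1/t = O(1/v)$. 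In the \emph{estimation stage}, for each $x\in C$ it invokes Theorem~\ref{thm:reproducible-sq-oracle} on $\psi_x$ to obtain a $\rho'$-reproducible estimate $\widetilde p_x$ of $p_x$ with tolerance $\tau = \Theta(\eps)$ and a small error parameter, using $\rho' = \Theta(\rho v)$ so that a union bound over the at most $O(1/v)$ candidates costs only a constant fraction of the reproducibility budget. It outputs the list $L = \{x \in C : \widetilde p_x \ge v\}$.

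The first thing to nail down is the candidacy stage via Chernoff bounds: with $m_0 = \widetilde{O}(\eps^{-2})$ samples, except with probability $\le \rho/3$, (i) every $x$ with $p_x \ge v-\eps$ lies in $C$ in \emph{both} runs, (ii) every $x$ with $p_x \le v-3\eps$ lies in $C$ in \emph{neither} run, and (iii) $|C| = O(1/v)$ in both runs (the union bound in (i)--(ii) ranges over the $O(1/v)$ elements of mass $\ge v-3\eps$). Elements with $p_x \in (v-3\eps, v-\eps)$ have ``ambiguous'' candidacy, but since their true mass is below $v-\eps$, whenever such an $x$ happens to be a candidate its estimate satisfies $\widetilde p_x \le p_x + \tau < v$, so it is excluded from $L$; if it is not a candidate it is excluded trivially. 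Hence the output is unaffected by these elements, and on the good event the two runs agree on the candidates among all elements that can possibly influence $L$.

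Correctness is then immediate from the accuracy of $\widetilde p_x$: conditioned on the (probability $\ge 1-\rho/3$) event that every invoked SQ estimate is within $\tau=\Theta(\eps)$ of its target, every $x$ with $p_x \ge v+\eps$ is a candidate with $\widetilde p_x \ge v$ and so lies in $L$, while every $x$ with $p_x < v-\eps$ has $\widetilde p_x < v$ (or is not a candidate) and so lies outside $L$. Thus $L$ is sandwiched between the $(v+\eps)$-heavy-hitters and the $(v-\eps)$-heavy-hitters, i.e.\ it is a valid list of $v'$-heavy-hitters for a $v' \in [v-\eps,v+\eps]$. For reproducibility, condition on the shared internal randomness $r$ and on the above good events: the candidate sets agree on all relevant elements, each of the $O(1/v)$ reproducible estimates $\widetilde p_x$ agrees across the two runs except with total probability $\le \rho/3$ (Theorem~\ref{thm:reproducible-sq-oracle} with parameter $\rho'$, union bound), and the ambiguous-candidacy elements are discarded either way; a final union bound over all failure events shows the two runs output the identical list with probability $\ge 1-\rho$.

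For the sample complexity, the candidacy stage uses $\widetilde{O}(\eps^{-2})$ samples, and by Theorem~\ref{thm:reproducible-sq-oracle} each reproducible estimate of a Boolean query to tolerance $\tau=\Theta(\eps)$ with reproducibility $\rho'=\Theta(\rho v)$ costs the nonreproducible cost of tolerance $\tau\rho' = \Theta(\rho v \eps)$, i.e.\ $\widetilde{O}((\rho v \eps)^{-2})$ samples; aggregating over the $O(1/v)$ candidates (or, more economically, reusing one sample of this size and rounding each frequency with a fresh shared offset) gives total sample complexity $\widetilde{O}(\rho^{-2})$ once $v$ and $\eps$ are treated as constants. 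The step I expect to be the main obstacle is the interaction between the \emph{non}-reproducible candidacy stage and the reproducibility requirement: one must choose the candidacy threshold $t$ and the estimation tolerance $\tau$ so that the ``boundary'' elements whose candidacy differs between the two runs are nonetheless reproducibly rejected, while simultaneously keeping $\tau = \Theta(\eps)$ rather than shrinking it with $\rho$ --- it is precisely this slack, built into the heavy-hitters guarantee, that lets the final bound land at $\widetilde{O}(\rho^{-2})$ instead of a worse power of $\rho$.
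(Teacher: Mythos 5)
Your overall architecture (a candidacy stage on a first sample, then reproducible estimation of each candidate's mass on fresh samples, with randomized rounding to kill boundary effects) is close in spirit to Algorithm~\ref{alg:heavy-hitter-rep}, and your reproducibility and sample-complexity accounting ($\rho'=\Theta(\rho v)$ per candidate, union bound over $O(1/v)$ candidates, $\widetilde{O}((\rho v\eps)^{-2})$ samples) is sound in outline. But there is a genuine gap in the correctness step. You prove only a \emph{sandwich} property --- $L$ contains every element of mass $\ge v+\eps$ and excludes every element of mass $< v-\eps$ --- and then assert ``i.e.\ it is a valid list of $v'$-heavy-hitters for a $v'\in[v-\eps,v+\eps]$.'' That inference fails under the meaning the paper proves and later uses (in the median algorithm): the output should be \emph{exactly} the set of $v'$-heavy-hitters for a single $v'$, i.e.\ a level set of the true mass function. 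In your scheme each candidate's estimate is rounded and compared to the fixed threshold $v$ with tolerance $\Theta(\eps)$, so two near-threshold elements with $p_x>p_y$, both in $[v-\eps,v+\eps)$, can have their empirical estimates mis-ordered and land on opposite sides of a grid boundary, putting $y$ in $L$ but not $x$; no single $v'$ then describes $L$. The paper's proof avoids this by randomizing the \emph{threshold} itself: $v'$ is drawn (with shared randomness) from $[v-\eps,v+\eps]$, the estimates are accurate to within $\rho\eps/(3Q_1)$ --- note the tolerance shrinks with $\rho$, unlike your $\tau=\Theta(\eps)$ --- and with probability $1-\rho/6$ the random $v'$ avoids a window of that width around every candidate's \emph{true} probability. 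Consequently each run outputs exactly the canonical, sample-independent set $\{x: p_x\ge v'\}$, which gives correctness and reproducibility in one stroke.

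A secondary, fixable issue: in your main-line description each candidate $x\in C$ gets its own invocation of the oracle from Theorem~\ref{thm:reproducible-sq-oracle}, but the candidate sets of the two runs may differ, so if randomness is consumed sequentially the random offsets used for a common element $x$ need not coincide across runs, and the per-element reproducibility guarantee does not apply. You gesture at the fix (one shared offset, or randomness indexed by the element rather than by call order), but once you adopt a single shared offset and shrink the estimation tolerance so that it is much smaller than the typical gap to the random cut, you have essentially reconstructed the paper's argument for Lemma~\ref{lem:rHeavyHitters-correctness}; as written, the fixed-$v$/rounded-estimate variant proves reproducibility plus the weaker sandwich guarantee, not the stated conclusion.
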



The high level idea of our algorithm is to first draw sufficiently
many samples, ${\vec s_1}$, $Q_1=|{\vec s_1}|$, so that with high probability all heavy-hitters
are in ${\vec s_1}$. In the second stage, we draw a fresh set 
${\vec s_2}$ of $Q_2$ many
samples and use them to empirically estimate the 
density of each element in ${\vec s_1}$, and remove those that
aren't above the cutoff $v'$, where $v'$ is
chosen randomly from $[v-\epsilon,v+\epsilon]$ to avoid boundary issues.

\item{\bf Median Finding.} 
In Section \ref{sec:approximate-median}, we design a reproducible algorithm for finding an approximate median in an arbitrary distribution over a finite domain.
Approximate median finding is a fundamental statistical problem,
and is also extensively studied in the privacy literature. 


\begin{theorem}[Theorem \ref{thm:median}, Restated]
                Let $\tau, \rho \in [0,1]$ and let $\delta = 1/3$. Let $D$ be a distribution over $\data$, where $|\data| = 2^d$. Then $\rmedian_{\rho, d, \tau, \delta}$ 
is $\rho$-reproducible, outputs a $\tau$-approximate median of $D$ with success probability $1-\delta$, and has sample complexity
        $$ \tilde{\Omega}\left( \left( \frac{1}{\tau^2(\rho - \delta)^2}\right) \cdot \left(\frac{3}{\tau^2}\right)^{\log^{*}|\data|} \right)$$
\end{theorem}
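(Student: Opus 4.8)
The plan is to build $\rmedian_{\rho,d,\tau,\delta}$ as a recursive procedure whose recursion depth is $\log^{*}|\data|$, where each level collapses the effective domain size from $N$ down to $O(\log N)$ while paying only a $\poly(1/\tau)$ multiplicative factor in the number of samples, and where the base case --- a domain of constant size --- is solved directly with the reproducible statistical query simulation of Theorem~\ref{thm:reproducible-sq-oracle}. Throughout, ``$\tau$-approximate median of $D$'' means a point $m\in\data$ with $\Pr_{y\sim D}[y<m]\le 1/2+\tau$ and $\Pr_{y\sim D}[y>m]\le 1/2+\tau$ (equivalently, $m$ has rank within $\tau$ of the true median on both sides); since a single atom may straddle the median, the output is allowed to be that atom.

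\textbf{The algorithm.}
At the base case ($|\data|=O(1)$), for each of the constantly many elements $x$ we estimate the CDF value $c_x=\Pr_{y\sim D}[y\le x]$ --- a single statistical query --- to within tolerance $\Theta(\tau)$ using the reproducible SQ simulation, and we output the least $x$ with $\widehat{c}_x\ge v$, where the threshold $v$ is drawn from $[1/2-\tau/2,\,1/2+\tau/2]$ using (shared) internal randomness $r$. For a super-constant domain $\data=\{0,1,\dots,2^{d}-1\}$, we partition $\data$ into $m=\Theta(\log N)$ consecutive blocks $B_1,\dots,B_m$, reproducibly estimate the prefix masses $\Pr_{y\sim D}[y\in B_1\cup\cdots\cup B_i]$, locate (using a shared random threshold near $1/2$, exactly as in the randomized-rounding idea behind Theorem~\ref{thm:reproducible-sq-oracle}) the block $B_{i^{*}}$ in which the CDF crosses $1/2$, and recurse: if the estimated mass of $B_{i^{*}}$ is at most $\Theta(\tau)$ we return a boundary point of $B_{i^{*}}$, and otherwise we recurse inside $B_{i^{*}}$ with a shifted quantile target --- the recursion being organized, following the interior-point/threshold construction of \cite{BNSV:15}, so that the worst-case branch still reduces the domain to $O(\log N)$ per level and hence terminates after $\log^{*}N$ levels. (Locating the crossing block is itself an instance of the problem on the index domain $[m]$, of size $O(\log N)$.)

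\textbf{Correctness and reproducibility.}
We analyze two independent runs on $\vec s_1,\vec s_2$ with the same randomness $r$; the point is that all randomized choices --- the threshold offsets $v$ at every level, and every block index --- live in $r$ and are therefore shared. We argue by induction on the recursion depth that, conditioned on the high-probability event $E$ that every prefix-mass estimate computed in either run is within $\Theta(\tau)$ of its true value, both runs descend through exactly the same sequence of blocks and produce the same output; a union bound over the $\log^{*}N$ levels and the $O(\log N)$ queries per level shows $\Pr[\neg E]\le\rho$, giving $\rho$-reproducibility. For accuracy, the random offsets ensure that on $E$ the returned point has true rank within $\Theta(\tau)$ of $1/2$ on both sides (rescaling $\tau$ by the hidden constant at the outset), and the same union bound gives overall success probability $\ge 1-\delta$; this is where we need $\rho>\delta$, so that a positive fraction of the probability budget is left for reproducibility after paying for correctness, matching the factor $(\rho-\delta)$ in the statement.

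\textbf{Sample complexity and the main difficulty.}
Each of the $\log^{*}N$ levels reproducibly estimates $O(\log N)$ prefix masses to tolerance $\Theta(\tau)$; by Theorem~\ref{thm:reproducible-sq-oracle} the sample cost of such an estimate is $\widetilde{O}\!\big(1/(\tau^{2}\rho'^{2})\big)$ for the per-level reproducibility budget $\rho'$, and the failure/accuracy budgets must also be divided across levels. Carrying the constants through, the contribution of the base case is $\widetilde{O}\!\big(1/(\tau^{2}(\rho-\delta)^{2})\big)$ and each additional level multiplies the sample requirement by $\Theta(1/\tau^{2})$ (the constant absorbing the few recursive calls made per level and the union bound over blocks), so after $\log^{*}N$ levels the bound is $\widetilde{O}\!\big(\tfrac{1}{\tau^{2}(\rho-\delta)^{2}}\cdot(3/\tau^{2})^{\log^{*}|\data|}\big)$, as claimed. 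The main obstacle is the recursion design: making each level shrink the domain to $O(\log N)$ --- rather than merely to $N/\mathrm{polylog}(N)$, which would only give depth $\Theta(\log N/\log\log N)$ --- requires handling the case where the median lies in a heavy block exactly as in the interior-point algorithm of \cite{BNSV:15}, and simultaneously ensuring that the two runs' recursion trees coincide forces all rounding to use shared randomness and all estimates to be mutually accurate, which is precisely what couples the reproducibility analysis to the per-level tolerance.
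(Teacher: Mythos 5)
There is a genuine gap, and it is exactly at the point you flag as ``the main difficulty.'' Your recursion partitions $\data$ (size $N=2^d$) into $\Theta(\log N)$ consecutive blocks and, when the CDF crosses $1/2$ inside a heavy block $B_{i^*}$, recurses \emph{inside that block}. But $B_{i^*}$ has size $N/\Theta(\log N)$, so in the worst case each level shrinks the domain only by a $\mathrm{polylog}(N)$ factor, giving depth $\Theta(\log N/\log\log N)$ rather than $\log^*N$ --- as you yourself observe. The appeal to handling this ``exactly as in the interior-point algorithm of \cite{BNSV:15}'' does not close the gap, because that algorithm is not a block-descent at all: its $\log^*$ behavior comes from a reduction that maps the problem over domain $[2^d]$ to the \emph{same} problem over domain $O(d)$ in every case, not only when the median falls in a light block. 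The paper's proof uses precisely this mechanism: it first runs the simple non-reproducible $\median$ on many fresh subsamples to generate a sample from a distribution $D^m$ supported (w.h.p.) on $\tau$-approximate medians of $D$, pairs up these medians, and recursively calls $\rmedian$ on the distribution of the \emph{length of the longest common prefix} of each pair --- a distribution over a domain of size $O(d)=O(\log N)$, unconditionally. The recursive output $\ell$ certifies a prefix length at which a constant fraction of $D^m$ agrees; the reproducible heavy-hitters routine $\rhh$ then identifies the (at most three) candidate prefixes of length $\ell$, and two reproducible statistical queries select a prefix and decide whether to pad it with $0$'s or $1$'s. This prefix-length reduction is the key idea, and your proposal does not contain it or a substitute for it.

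The missing structure also undermines your sample-complexity accounting. In the paper, the factor $(3/\tau^2)^{\log^*|\data|}$ arises multiplicatively because each single example fed to the level-$(i{+}1)$ recursive call costs $n_m=\Theta(\log(1/\delta_0)/\tau^2)$ raw examples at level $i$ (one run of $\median$ per generated approximate median), so the per-level costs compound through the recursion. In your block-descent scheme each level issues $O(\log N)$ reproducible SQ estimates of cost $\widetilde{O}(1/(\tau^2\rho'^2))$ on (conditioned) samples from $D$, which adds rather than multiplies across levels; the claimed per-level multiplicative factor of $\Theta(1/\tau^2)$ is asserted to match the theorem's bound but does not follow from the algorithm you describe. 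The reproducibility skeleton (shared random thresholds, per-level budgets $\rho_0=O(\rho/\log^*|\data|)$, union bounds over both runs) is in the right spirit and matches the paper's Lemma on reproducibility, but without the median-subsampling-plus-prefix recursion there is no algorithm achieving the stated $\log^*|\data|$ dependence to which that analysis can be attached.
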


To describe the key ideas in the algorithm, we first
show how approximate-median finding is useful for turning many
algorithms into reproducible ones.
Consider any problem where the correct answers form an interval,
and assume we start with a (not-necessarily) reproducible algorithm that is
mildly accurate. Then we can run a reproducible approximate-median finding algorithm
on the distribution of outputs of the
original algorithm to construct a very accurate reproducible algorithm.

We will actually use this strategy recursively to reproducibly solve approximate median itself. 
Our algorithm recursively composes a mildly accurate reproducible median algorithm with a generic very accurate non-reproducible median algorithm.  
This recursive technique is inspired by, but simpler than,
previous algorithms in the privacy literature \cite{BNSV:15, KLMNS20}, and like these
algorithms,  the sample complexity
of our algorithm has a non-constant but very slowly growing dependence on
the domain size.

\item{\bf Learning Halfspaces.} In Section~\ref{sec:halfspaces}, we obtain a reproducible algorithm $\rhalfspacewkl$ for weakly learning halfspaces.
In Section~\ref{sec:boosting}, we transform it into a reproducible strong learner by way of a reproducible boosting algorithm $\rboost$. 
We stress that our algorithms for halfspaces are reproducible in the
stronger distribution-free setting.

\begin{theorem}[Corollary \ref{cor:reproducible-strong-halfspace-learner}, Restated]
        Let $D$ be a distribution over $\R^d$, and let $f: \R^d \rightarrow \pmone$ be a halfspace with margin $\tau$ in $D$.
        For all $\rho, \eps > 0$.
        Algorithm $\rboost$ run with weak learner $\rhalfspacewkl$
        $\rho$-reproducibly returns a hypothesis
        $\boldh$ such that,
        with probability at least $1-\rho$,
        $\Pr_{\vec{x} \sim D} [\boldh(\vec{x}) = f(\vec{x})]  \ge 1 - \eps$.
        Furthermore, the overall sample complexity is 
	$	\widetilde O 
	\left(
	\frac{d^{10/9}}{\tau^{76/9} \rho^{20/9} \eps^{28/9}}
	\right)
	$. 
\end{theorem}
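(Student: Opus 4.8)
The plan is to obtain this corollary as a black-box composition of the two ingredients developed in Sections~\ref{sec:halfspaces} and~\ref{sec:boosting} --- the reproducible weak halfspace learner $\rhalfspacewkl$ and the reproducible booster $\rboost$ --- followed by an optimization of the free parameters (the weak learner's advantage, its reproducibility budget, and the number of boosting rounds). The conceptual work has already been done in those sections; what remains is parameter setting and bookkeeping.

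First I would recall the weak-learner guarantee: for a target halfspace $f$ with margin $\tau$ under $D$, $\rhalfspacewkl$ is $\rho_0$-reproducible and, except with probability $\rho_0$, returns a hypothesis with advantage $\gamma = \Omega(\tau)$ over random guessing, at sample cost $m_{\mathrm{wk}}(\rho_0)$ as bounded in Section~\ref{sec:halfspaces} (the dimension dependence coming from the foam-based randomized rounding, and the dependence on $\rho_0^{-1}$ from the generic SQ-to-reproducible simulation of Theorem~\ref{thm:reproducible-sq-oracle}). Then I would feed this into $\rboost$: the boosting analysis says a $\rho_0$-reproducible $\gamma$-weak learner yields a $\rho$-reproducible strong learner of error at most $\eps$ provided we run $T = \wt{O}(\gamma^{-2}\log(1/\eps))$ rounds, allocate a per-round reproducibility budget $\rho_0 \approx \rho/T$ (so that a union bound over the rounds, together with the booster's own failure probability, keeps the total failure probability below $\rho$), and at each round produce the reweighted sample fed to the weak learner via $\rejectionsampler$, whose overhead is controlled because the boosting weights stay polynomially bounded.

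The last step is the arithmetic. Set $\gamma = \Theta(\tau)$, hence $T = \wt{\Theta}(\tau^{-2})$ (absorbing the $\log(1/\eps)$ factor into the hidden constants and lower-order terms), take $\rho_0 = \Theta(\rho/T) = \wt{\Theta}(\rho \tau^2)$, and sum the per-round cost --- $m_{\mathrm{wk}}(\rho_0)$ inflated by the rejection-sampling factor --- over the $T$ rounds. Simplifying the resulting product of powers of $d$, $\tau$, and $\rho$ yields $O\big((\sqrt d/(\tau^2\rho))^{2.2}\big)$; the exponent slightly above $2$ rather than a clean $2$ is precisely the price of (i) shrinking the reproducibility budget by the $1/T$ factor across rounds and (ii) the rejection-sampling blow-up used to reproducibly simulate the boosting distributions, both of which multiply the already-quadratic $\rho^{-2}$ cost of a single reproducible SQ estimate.

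The main obstacle at this stage is the reproducibility bookkeeping of the composition: I must argue that fixing the shared randomness string $r$ makes every object computed along the way identical between the two runs --- the reweighted samples output by $\rejectionsampler$, the weak hypothesis of each round, the updated weights, and hence the final aggregated hypothesis $\boldh$ --- so that one union bound over the $T$ rounds (plus the booster's intrinsic failure event) genuinely bounds the total reproducibility failure by $\rho$. Getting the constants in $\rho_0 = \Theta(\rho/T)$ and in the rejection-sampling overhead to line up so that this holds while the sample complexity stays at $(\sqrt d/(\tau^2\rho))^{2.2}$ --- and not a larger polynomial --- is the point that needs care.
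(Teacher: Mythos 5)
Your high-level plan is exactly the paper's: the corollary is proved by composing the boosting guarantee (Theorem~\ref{thm:reproducible-boosting}) with the weak-learner guarantee (Theorem~\ref{thm:reproducible-halfspace-wkl}), setting $\gamma=\tau/4$, giving the weak learner a per-round reproducibility budget $\rho_\rwkl=\rho/(6T)$, union-bounding over the $T$ rounds together with the rejection-sampling and $\rstat$ failure events, and then doing the arithmetic; the reproducibility bookkeeping you flag as the delicate point is precisely what Lemmas~\ref{lem:failure-rate-of-rejection-sampler} and~\ref{lem:rejection-sample-then-run-reproducible-algo} and the proof of Theorem~\ref{thm:reproducible-boosting} handle, so nothing new is needed at the corollary level.

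Two quantitative points in your sketch are off, though, and would derail the arithmetic if carried out as written. First, the booster here is a \emph{smooth} (Servedio-style) booster, which is what makes rejection sampling from $D_{\mu_t}$ affordable; its round complexity is $T=O(1/(\eps\gamma^2))$, not $\wt{O}(\gamma^{-2}\log(1/\eps))$, so the $1/\eps$ factor cannot be absorbed into logarithms and shows up polynomially in the true bound (Corollary~\ref{cor:reproducible-strong-halfspace-learner} gives $\wt{O}\bigl(d^{10/9}/(\tau^{76/9}\rho^{20/9}\eps^{28/9})\bigr)$; the restated version simply suppresses the $\eps$ and extra $\tau$ factors). Second, the exponent $2.2$ is not ``the price of shrinking the budget by $1/T$ plus the rejection-sampling blow-up.'' It is already present in a \emph{single} call to $\rhalfspacewkl$: the weak learner uses $m=\bigl(896\sqrt{d}/(\tau^2\rho)\bigr)^{1/(1/2-a)}$ samples with $a=.05$, i.e.\ exponent $20/9\approx 2.22$, which comes from balancing the martingale tail $e^{-m^{2a}/2}$ against the foam-rounding term $O(km^{1/2+a})$ — not from Theorem~\ref{thm:reproducible-sq-oracle}, which only enters through the booster's termination check. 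The budget shrinking ($\rho_\rwkl=\Theta(\rho\eps\tau^2)$), the factor-$T$ repetition, and the $O(1/\eps)$ rejection overhead instead contribute the additional powers of $1/\tau$ and $1/\eps$ that distinguish the full corollary bound from the simplified $(\sqrt{d}/(\tau^2\rho))^{2.2}$ headline; the exponent on $\rho$ stays at $20/9$ throughout.
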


In order to reproducibly learn halfspaces, we start with a simple weak learning algorithm for halfspaces 
\cite{Ser02PAC} that takes examples $(\vec{x}_i,y_i) \in \data \times \pmone$, normalizes them, and returns the halfspace defined by vector $\sum_i \vec{x}_i \cdot y_i$.
We show a concentration bound on the sum of normalized vectors from a distribution, and then argue that all vectors within the concentration bound are reasonable hypothesis with non-negligible advantage.

Our randomized rounding scheme is a novel application of the randomized
rounding technique developed in the study of foams \cite{KDRW12}.
The concentration bound together with the foams rounding scheme \cite{KDRW12} yields a reproducible halfspace weak learner.
We then obtain our reproducible strong learner for halfspaces by
combining it with a (new) reproducible boosting algorithm.
Our algorithm is sample efficient but inefficient with
respect to runtime, due to the inefficiency of the foams rounding scheme.
We also give another randomized rounding procedure that gives a polynomial-time strong reproducible halfspace learner,
but with polynomially larger sample complexity.

\end{enumerate}


\subsubsection{The Price of Reproducibility.}
In Section \ref{sec:coin-problem-sq-lb} we ask what is the cost of turning a nonreproducible
algorithm into a reproducible one.
We first show that a $\tau$-tolerant $\rho$-reproducible SQ algorithm $\mathcal{A}$
for $\phi$ implies a $\rho$-reproducible algorithm for the $\tau$-coin problem: given samples from a $p$-biased coin with the promise that either
$p \geq 1/2 + \tau$ or $p \leq 1/2 - \tau$,
determine which is the case.
Our main result in this section are nearly tight
upper and lower bound bounds of $\Theta(\tau^{-2}\rho^{-2})$
on the sample complexity of
$\rho$-reproducibly solving the $\tau$-coin problem 
(for constant $\delta$), and thus the same bounds for 
$\rho$-reproducibly answering SQ queries.
On the other hand, it is well-known that the {\it nonreproducible}
sample complexity of the $\tau$-coin problem is
$\Theta(\tau^{-2} \log(1/\delta))$ (see, e.g. \cite{mousavi}).
So the 
cost of guaranteeing $\rho$-reproducibility for SQ queries is
a factor of $\rho^{-2}$.

For upper bounds, our
generic algorithm in Section \ref{sec:statistical-queries} converts any SQ
query into a reproducible one: if our end goal is a $\rho$-reproducible
algorithm for estimating a statistical query with tolerance $\tau$
and error $\delta$, then the sample complexity is at most
the sample complexity of {\it nonreproducibly} answering the query
to within tolerance $\tau'$, and success probability $1-\delta'$
where $\tau' = O( \tau \rho)$ and $\delta' = O(\delta \tau)$,
which has sample complexity 
$O( \tau^{-2} \rho^{-2} \log(1/\delta'))$.
The main result in this section is the following lower bound
for $\rho$-reproducibly answering statistical queries.

\begin{theorem}[Theorem \ref{thm:sq-reproducibility-lb}, Restated]
        Let $\tau > 0$ and let $\delta < 1/16$.
Any $\rho$-reproducible algorithm for solving the $\tau$-coin coin problem
with success probability at least $1-\delta$ 
requires sample complexity $\Omega(\tau^{-2}\rho^{-2}).$
\end{theorem}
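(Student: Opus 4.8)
The plan is to prove the bound by a hybrid (chaining) argument along a path of coin biases running from $1/2-\tau$ to $1/2+\tau$. The key object is, for each fixed internal random string $r$ and each bias $p$, the \emph{plurality output} $z_r(p)$ of $\mathcal{A}$ on $n$ samples from the $p$-biased coin $D_p$. Reproducibility will say that for most $r$ this plurality is won by a landslide, hence insensitive to small changes in $p$; correctness at the two endpoints will say that $z_r(\tfrac12-\tau)\neq z_r(\tfrac12+\tau)$ for a constant fraction of $r$; and forcing these two facts to coexist will pin down $n$. Throughout I may assume $\tau$ and $\rho$ are each below a small absolute constant: if $\rho=\Omega(1)$ then $\tau^{-2}\rho^{-2}=O(\tau^{-2})$, which is the classical lower bound for the coin problem and holds for \emph{every} algorithm; if $\tau=\Omega(1)$ then $\tau^{-2}\rho^{-2}=O(\rho^{-2})$ and the same chaining argument applies, the only change being that the path may pass near $0$ or $1$, which is harmless since it can still be crossed in $O(\sqrt n)$ steps of small total-variation length (as $\int_0^1 dp/\sqrt{p(1-p)}<\infty$).

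First I would record a \textbf{canonical-output lemma}. For fixed $r$, writing $p_r^{(z)}(D)=\Pr_{\vec s\sim D^n}[\mathcal{A}(\vec s;r)=z]$, the collision probability $\sum_z p_r^{(z)}(D)^2$ equals $1-\Pr_{\vec s_1,\vec s_2\sim D^n}[\mathcal{A}(\vec s_1;r)\neq \mathcal{A}(\vec s_2;r)]=:1-\rho_r(D)$, so the heaviest value $z_r(D)$ satisfies $p_r^{(z_r(D))}(D)\ge 1-\rho_r(D)$; and $\rho$-reproducibility of $\mathcal{A}$ on $D$ gives $\E_r[\rho_r(D)]\le\rho$, so at most a $4\rho$ fraction of $r$ has $\rho_r(D)\ge 1/4$ — call the rest \emph{$D$-good}, and note a $D$-good $r$ has $z_r(D)$ winning with probability $>3/4$, hence as a strict majority. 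Next, the \textbf{endpoints}: correctness at $p_-:=\tfrac12-\tau$ means $\Pr_{\vec s,r}[\mathcal{A}(\vec s;r)=-]\ge 1-\delta$; splitting this over $r$ according to whether $r$ is $D_{p_-}$-good and whether $z_r(D_{p_-})=-$ (a $D_{p_-}$-good $r$ with $z_r\neq-$ outputs $-$ with probability $<1/4$) yields $\Pr_r[z_r(D_{p_-})=-]\ge 1-\tfrac43\delta-O(\rho)$, and symmetrically $\Pr_r[z_r(D_{p_+})=+]\ge 1-\tfrac43\delta-O(\rho)$ for $p_+:=\tfrac12+\tau$. Since $\delta<1/16$ and $\rho$ is small, both probabilities exceed $1/2$, so $\Pr_r[z_r(D_{p_-})\neq z_r(D_{p_+})]\ge c_0$ for an absolute constant $c_0>0$.

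Then I would \textbf{build the path and run the stability step}. Fix biases $p_-=q_0<q_1<\dots<q_L=p_+$ with $\dtv(D_{q_i}^n,D_{q_{i+1}}^n)\le 1/8$ for every $i$; since (recall $\tau$ is below a constant, so all $q_i\in[1/4,3/4]$) $\mathrm{KL}(D_p^n\,\|\,D_q^n)=n\cdot\mathrm{KL}(\mathrm{Ber}(p)\|\mathrm{Ber}(q))=O(n(p-q)^2)$, Pinsker lets us take the $q_i$ at spacing $\Theta(1/\sqrt n)$, so $L=O(\tau\sqrt n)$. For any $i$ and any $D_{q_i}$-good $r$, $\mathcal{A}(\cdot;r)$ outputs $z_r(D_{q_i})$ with probability $>3/4$ under $D_{q_i}^n$, hence with probability $>3/4-1/8>1/2$ under $D_{q_{i+1}}^n$, so $z_r(D_{q_i})$ is still the strict plurality under $D_{q_{i+1}}^n$ and $z_r(D_{q_{i+1}})=z_r(D_{q_i})$. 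Therefore $\Pr_r[z_r(D_{q_i})\neq z_r(D_{q_{i+1}})]\le\Pr_r[r\text{ not }D_{q_i}\text{-good}]\le 4\rho$. Telescoping over the path and combining with the endpoint bound, $c_0\le\Pr_r[z_r(D_{q_0})\neq z_r(D_{q_L})]\le\sum_{i=0}^{L-1}4\rho=4\rho L$, whence $L\ge c_0/(4\rho)=\Omega(1/\rho)$. Since also $L=O(\tau\sqrt n)$, this forces $\tau\sqrt n=\Omega(1/\rho)$, i.e. $n=\Omega(\tau^{-2}\rho^{-2})$.

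The \textbf{main obstacle} is precisely the interplay in the last step: correctness forces $\Pr_r[z_r(D_{q_0})\neq z_r(D_{q_L})]$ to be a constant, while reproducibility forces each of the $L$ path-steps to flip the plurality for only an $O(\rho)$ fraction of random strings, and reconciling these is what manufactures the extra $\rho^{-2}$ (a single deterministic test already distinguishes $D_{p_-}$ from $D_{p_+}$ with $O(\tau^{-2})$ samples, so reproducibility is essential to the bound). The delicate point is that ``$D$-good'' depends on $D$, so one cannot fix one large good set of $r$ once and for all; instead one must union-bound the ``not $D_{q_i}$-good'' event over all $L$ grid points, which is affordable exactly because $L\cdot O(\rho)$ is the quantity being compared against $c_0$. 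A secondary modeling point is that the intermediate biases $q_i$ violate the coin-problem promise, so the argument invokes only reproducibility of $\mathcal{A}$ on \emph{every} coin (correctness being used solely at $p_\pm$); this is the natural reading of the hypothesis and is automatic when the coin solver is obtained from a reproducible SQ algorithm as in the reduction sketched above.
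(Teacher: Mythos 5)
Your proposal is correct, but it proves the lower bound by a genuinely different route than the paper. The paper first uses Markov plus a union bound to fix a \emph{single} random string $r^*$ that is simultaneously accurate at both endpoint biases and reproducible on average over a uniformly random bias $p \in [1/2-\tau, 1/2+\tau]$; it then studies the acceptance probability $\Acc(p)$ of $\BBB(\cdot;r^*)$, which is a polynomial in $p$, bounds $|\Acc'(p)| = O(\sqrt{m})$ on $[1/4,3/4]$ via the expected absolute deviation of a binomial, and uses the intermediate value theorem to find an interval of length $\Omega(1/\sqrt{m})$ inside $(1/2-\tau,1/2+\tau)$ on which $\Acc \in (1/3,2/3)$; a uniform bias lands there with probability $\Omega(1/(\tau\sqrt{m}))$ and then two runs disagree with constant conditional probability, giving $\rho = \Omega(1/(\tau\sqrt{m}))$. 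You instead never fix a single random string: your collision-probability lemma extracts, for each bias, a canonical (plurality) output for all but an $O(\rho)$ fraction of $r$, your TV/Pinsker step shows this canonical output cannot change across a grid step of spacing $\Theta(1/\sqrt{n})$ except on that $O(\rho)$ fraction, and telescoping over $L = O(\tau\sqrt{n})$ steps against the constant endpoint disagreement yields $\rho L = \Omega(1)$ --- the same quantitative conclusion. What each approach buys: the paper's argument leans on the algebraic form of $\Acc(p)$ and gives a clean, explicit derivative-based anti-concentration for binary outputs; yours is purely information-theoretic, extends verbatim to non-binary output spaces and to other one-parameter families with comparable KL control, and handles the degenerate regimes ($\rho$ or $\tau$ of constant order) by explicit reductions, where the paper instead restricts to $\tau < 1/4$, $\rho < 1/16$ in its lemma. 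Both arguments crucially invoke reproducibility at intermediate biases that violate the promise, exactly as the paper's Lemma~\ref{lem:coin-problem-lb} assumes, so your reading of the hypothesis matches the paper's. The only details you elide are routine: the $+1$ in the grid size $L$ (harmless once $\rho$ is below a small constant, which you assume) and tie-breaking in the plurality definition (immaterial since the relevant plurality is a strict majority).
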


%


\paragraph{Related Work.}
A subset of these results \cite{ILS:21:tpdp} was presented at the TPDP 2021 workshop. 

Our Definition~\ref{def:reproducibility-sample}
is inspired by the literature on pseudodeterministic algorithms \cite{GG:11, GGR:12, GG:17, GGH:18, GGMW:19, GrossmanLiu:19,ECCC:Gol19}. 
In particular, \cite{GrossmanLiu:19} and \cite{ECCC:Gol19} define reproducibility in the context of pseudodeterminism. There, the input of a reproducible algorithm is a fixed string. 
In our setting, the input of a reproducible learning algorithm is a distribution, only accessible by randomly drawing samples.

Independently of our work, \cite{GKM:21} define a property equivalent to reproducibility, called ``pseudo-global stability". Their $(\alpha, \beta)$-accurate $(\eta', \nu')$-pseudo-global stability definition is equivalent to the $(\eta, \nu)$-reproducibility definition discussed in Appendix~\ref{app:additional-properties-of-reproducibility}, except that pseudo-global stability includes explicit parameters for correctness and sample complexity. 
In Appendix~\ref{app:additional-properties-of-reproducibility}, we show that these two definitions are equivalent to Definition~\ref{def:reproducibility-sample} up to polynomial factors. 
\cite{GKM:21} gives pseudo-globally stable SQ algorithms, an amplification of the stability parameter, and an algorithm to find a heavy-hitter of a distribution. The authors use pseudo-global stability to show that classes with finite Littlestone dimension can be learned user-levelly privately, and they connect pseudo-global stability to approximate differential privacy. 
Pseudo-global stability is a generalization of global stability, introduced in \cite{BLM:20}. Those authors use global stability as an intermediate step to show that classes with finite Littlestone dimension can be learned privately, and they show how global stability implies generalization.

Our work is related to other notions of stability in machine learning which,
like our definition, are properties of learning algorithms.
In the supervised learning setting, stability is a measure of how much the output of a learning algorithm changes when small changes are made to the input training set. An important body of work establishes strong connections between the stability of a learning algorithm and 
generalization \cite{DW79a,DW79b,KR99,BE02,SSSS}. 
Distributional notions of stability which remain stable under composition and postprocessing, were defined and shown to be closely connected to differential privacy and adaptive data analysis
(e.g., \cite{BNSSSU16,Dworketal:2015}). In fact, the definition of differential privacy itself is a form of stability known as max-KL stability.
Stability-based principles have also been explored in the context of
unsupervised learning where model selection is a difficult problem
since there is no ground truth.  
For example, a stable algorithm for clustering has the property that
when the algorithm is applied to different data sets from the same
distribution, it will yield similar outputs
(e.g., \cite{Luxburg2010}).

%



In all of these settings, stability depends on
how close the outputs are when the inputs are close;
what varies is the particular measure of closeness in input and output space.
For example, closeness in the output can be with respect to function or parameter space; for
distributional stability close means that the output distributions are close
with respect to some metric over distributions.
Our definition of reproducibility can be viewed as an extreme form of
stability where the output is required to be {\it identical} almost all of the time, and not just similar. Thus reproducibility enjoys many of the nice
properties of stable algorithms (e.g., postprocessing, composition) but has the
advantage of being far easier to verify.


\paragraph{Open Questions and Future Work} One motivation for examining reproducibility in algorithms is the ``reproducibility crisis'' in experimental science.  Can we use reproducibility to create statistical methodologies that would improve reproducibility in published scientific work?  A concrete step towards this would be to
design reproducible hypothesis testing algorithms.  We can view a null hypothesis as postulating that data will come from a specific distribution $D$, and
want algorithms that accept with high probability if the data comes from $D$ (or a ``close'' distribution) and reject with good probability if the data distribution
is ``far'' from $D$.  For example, the coin problem is a degenerate case in which the data are Boolean and the distance is the difference in the expected values. For different types of data and distance metrics, what is the optimal sample complexity of hypothesis testing, and how much more is that for reproducible hypothesis testing? 

A related problem is that of learning under distributional shifts, or
individual-based fair learning (where we want the learning algorithm to
treat similar people similarly with respect to a similarity metric defining closeness).
A key step in making algorithms reproducible is a randomized procedure to round the output of a standard empirical learner to a \textit{single} hypothesis in a way that is independent of the underlying distribution.  Can similar ideas be used to design learning algorithms robust to distributional shifts, or to give more informed performance metrics?

This work establishes that there exist reproducible algorithms for a variety of learning problems.  However, we do not characterize exactly which learning algorithms
can be made reproducible, or how reproducibility affects the required sample complexity. Is it possible to identify an invariant of concept classes which characterizes the complexity of reproducible learning, analogous to VC-dimension for PAC learning \cite{vapnik71uniform}, representation dimension and one-way communication complexity for exact differential privacy \cite{COLT14:FeldmanX, ITCS13:BeimelNS13}, and Littlestone Dimension for approximate differential privacy \cite{BLM:20}? A specific problem of interest is that of learning linear functions over finite fields.  If the data has full dimension, the function can be solved for uniquely; so, designing reproducible algorithms when the data does not form a basis seems interesting. 

Also, we described the first reproducible boosting algorithm. Are there natural conditions under which a boosting algorithm can always be made reproducible? Are the sample complexity upper bounds we obtain for our applications tight or close to tight?  In particular, is there a reproducible algorithm for approximate median that has only $\log^{*}|\data|$ dependence on the domain size?  

Reproducibility provides a distinctive type of privacy. 
Except with the small probability $\rho$, a reproducible algorithm's outputs are a function entirely of the underlying distribution and the randomness of the algorithm, not the samples. Thus, a reproducible algorithm seldom leaks information about the specific input data. 
We borrowed techniques from the study of private data analysis and differential privacy, and we hope that future work will formalize connections between reproducibility and private data analysis. We also hope that some applications of differential privacy will also be achievable through reproducibility.  

\section{Statistical Queries}
\label{sec:statistical-queries}

We show how to use randomized rounding to reproducibly simulate any SQ oracle and therefore any SQ algorithm.
The statistical query model introduced by \cite{Kearns:98} is a restriction of the PAC-learning model introduced by \cite{val84}.
We consider the statistical query oracle primarily in the context of unsupervised learning (e.g., see \cite{Feldman16}).

\begin{definition}[Statistical query oracle]
	Let $\tau \in [0,1]$ and $\phi: \data \rightarrow [0,1]$ be a query. Let $D$ be a distribution over domain $\data$. A \emph{statistical query oracle} for $D$, denoted $\mathcal{O}_{D}(\tau, \phi)$, takes as input a tolerance parameter $\tau$ and a query $\phi$, and outputs a value $v$ such that 
	$|v - \Exp_{x \sim D} [\phi(x)]| \le \tau .$
\end{definition}

\begin{definition}[Simulating a statistical query oracle ]
Let  $\delta \in [0,1]$ and $\tau, \phi, D$ be as above. Let $\mathcal{O}_D$ be a statistical query oracle for $D$. Let $\vec s$ denote an i.i.d. sample drawn from $D$. We say that a routine $\stat $ \emph{simulates} $\mathcal{O}_D$ with failure probability $\delta$ if for all $\tau, \delta, \phi$, there exists an $n_0 \in \N_{+}$ such that if $n > n_0$, $v \gets  \stat(\tau, \phi, \vec s)$ satisfies 
	$|v - \Exp_{x \sim D} [\phi(x)]| \le \tau $
	except with probability $\delta$. 
\end{definition}

To denote a routine simulating a statistical query oracle for fixed parameters $\tau, \phi$, and (optionally) $\rho$, we write these parameters as subscripts.

\begin{figure}[H]
	\begin{algorithm}[H]
		\caption{$\rstat_{\rho, \tau, \phi}(\vec s)$ 
			\\
			Parameters: 
			$\tau$ - tolerance parameter \\
			$\rho$ - reproducibility parameter \\
			$\phi$: a query $X \rightarrow [0,1]$
		}	
		\label{alg:simulate-sqs}
		\begin{algorithmic}[1]
			\STATE $\alpha = \frac{2\tau}{\rho+1-2\delta}$
			\STATE $\alphaoff \gets_r [0,\alpha]$
			\STATE Split $[0,1]$ in regions: 
				$R = \{[0,\alphaoff), [\alphaoff, \alphaoff + \alpha), \dots, [\alphaoff + i \alpha, \alphaoff + (i+1) \alpha), \dots, [\alphaoff + k\alpha, 1)\}$
			\STATE $v \gets \frac{1}{|\vec s|}\sum\limits_{x \in \vec s} \phi(x)$
			\STATE Let $r_v$ denote the region in $R$ that contains $v$
			\RETURN the midpoint of region $r_v$
		\end{algorithmic}
	\end{algorithm}
\end{figure}

Theorem~\ref{thm:reproducible-sq-oracle} upper bounds the sample complexity of $\rstat_{\tau, \rho, \phi}$. In Section~\ref{sec:coin-problem-sq-lb}, we show this upper bound is tight as a function of $\rho$. 
\begin{theorem}[$\rstat$ simulates a statistical query oracle]
	\label{thm:reproducible-sq-oracle}
	Let $\tau, \delta, \rho \in [0,1]$, $\rho > 2 \delta$, and let $\vec s$ be a sample drawn i.i.d. from distribution $D$. Then if 
	$$|\vec s| \in \tilde{O} \left(\frac{1}{\tau^2(\rho - 2\delta)^2}\right)$$  
	$\rstat_{\rho, \tau, \phi}(\vec s)$ $\rho$-reproducibly simulates an SQ oracle $\mathcal{O}_{D, \tau, \phi}$ with failure rate $\delta$. 
\end{theorem}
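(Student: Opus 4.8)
The plan is to route everything through a single concentration bound on the empirical mean $v = \frac{1}{|\vec s|}\sum_{x\in\vec s}\phi(x)$ about the truth $p \eqdef \Ex_{x\sim D}[\phi(x)]$: since $v$ is an average of $|\vec s|$ i.i.d.\ $[0,1]$-valued samples with mean $p$, Hoeffding's inequality gives $\Pr[\,|v-p|>\gamma\,]\le\delta$ whenever $|\vec s| = \Omega(\gamma^{-2}\log(1/\delta))$, for a deviation parameter $\gamma$ fixed below. This is the only step where the sample size enters; the rest of the argument conditions on the good event $\{|v-p|\le\gamma\}$ and must verify two things, accuracy (the output lies within $\tau$ of $p$ with probability $\ge 1-\delta$) and $\rho$-reproducibility.

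For accuracy, note that every region of the partition $R$ has width at most $\alpha$ --- including the truncated first and last pieces, which are only \emph{narrower} than $\alpha$ --- so the midpoint returned for $v$ is within $\alpha/2$ of $v$, hence within $\alpha/2+\gamma$ of $p$ on the good event; this is $\le\tau$ provided $\gamma\le\tau-\alpha/2$. For reproducibility, run $\rstat$ on two independent samples $\vec s_1,\vec s_2\sim D$ with the \emph{same} offset $\alphaoff$, giving means $v_1,v_2$; with probability $\ge 1-2\delta$ both are within $\gamma$ of $p$, so $|v_1-v_2|\le 2\gamma$. The crux is the following observation about the random offset, which is drawn independently of the samples: the region boundaries are equally spaced $\alpha$ apart and $\alphaoff$ is uniform on $[0,\alpha]$, so the probability that some boundary separates $v_1$ from $v_2$ is at most $|v_1-v_2|/\alpha\le 2\gamma/\alpha$. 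If no boundary separates them, the two runs return the same midpoint, so a union bound gives reproducibility error at most $2\delta+2\gamma/\alpha$, which is $\le\rho$ provided $\gamma\le\alpha(\rho-2\delta)/2$.

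It remains to choose $\gamma$ (equivalently, to justify the value of $\alpha$). The two requirements $\gamma\le\tau-\alpha/2$ and $\gamma\le\alpha(\rho-2\delta)/2$ pull in opposite directions as $\alpha$ grows, and they become simultaneously tight precisely when $\tau-\alpha/2=\alpha(\rho-2\delta)/2$, i.e.\ $\alpha=\frac{2\tau}{\rho+1-2\delta}$ --- exactly the setting in Algorithm~\ref{alg:simulate-sqs} --- at which one may take $\gamma$ on the order of $\tau(\rho-\delta)$. Feeding this $\gamma$ back into the Hoeffding bound yields $|\vec s|\in\tilde\Omega\!\left(\tau^{-2}(\rho-\delta)^{-2}\right)$, and the same sample size makes both the accuracy and the reproducibility estimates go through, which is all that is claimed.

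I expect the only genuine subtlety to be this balancing: the accuracy and reproducibility demands must be satisfied with one common sample size, and it is exactly that constraint which dictates the particular formula for $\alpha$. The rest is bookkeeping --- primarily checking that the irregular first and last regions cause no trouble (they are only smaller, which helps accuracy and is irrelevant to the boundary-hitting probability) and that the additive error terms add up to at most $\rho$ and $\delta$ respectively. There is no deep obstacle.
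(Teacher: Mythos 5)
Your proposal is correct and follows essentially the same route as the paper's proof: a Chernoff/Hoeffding bound on the empirical mean with deviation $\gamma=\tau'=\tau(\rho-2\delta)/(\rho+1-2\delta)$, accuracy from the deviation plus the $\alpha/2$ midpoint shift, and reproducibility from the uniformly random offset making the boundary-crossing probability at most $2\tau'/\alpha=\rho-2\delta$, with a union bound over the two runs' failure events. The only presentational difference is that you derive $\alpha=\frac{2\tau}{\rho+1-2\delta}$ by balancing the two constraints rather than taking it as given from the algorithm, which is a nice touch but the same calculation.
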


In Section \ref{sec:coin-problem-sq-lb}, we will prove a near
matching lower bound on the sample complexity of $\rho$-reproducibly 
estimating a statistical query with tolerance $\tau$ and success
probability $1-\delta$.

\begin{proof}
	We begin by showing that $\rstat_{\rho, \tau, \phi}$ simulates an SQ oracle $\mathcal{O}_{D, \tau, \phi}$ with failure rate $\delta$.
	
	Let $\tau' = \frac{\tau(\rho-2\delta)}{\rho + 1-2\delta}$. 
	Recall $\alpha \eqdef \frac{2\tau}{\rho + 1 - 2\delta}$, 
	so $\frac{2\tau'}{\alpha} = \rho-2\delta$. 
	A Chernoff bound gives that 
	$$ \left|\frac{1}{|\vec s |} \sum\limits_{x \in \vec s} \phi(x) - \E_{x \sim D}\phi(x) \right| \leq \tau' = \frac{\tau(\rho-2\delta)}{\rho + 1-2\delta} $$ 
	except with failure probability $\delta$, so long as $|\vec s| \geq \log(2/\delta)/(2{\tau'}^2)$. Outputting the midpoint of region $r_v$ can further offset this result by at most $\alpha/2 = \frac{\tau}{\rho+1-2\delta}$. Therefore 
	$$|v - \Exp_{x \sim D} \phi(x)| \leq \frac{\tau(\rho-2\delta)}{\rho + 1-2\delta} + \frac{\tau}{\rho+1-2\delta}  = \tau,$$
	except with probability $\delta$, so long as the sample $\vec s$ satisfies

$$
\log(2/\delta)/(2{\tau'}^2) = \frac{\log(2/\delta)(\rho + 1 - 2\delta)^2}{2\tau^2(\rho-2\delta)^2} \leq  \frac{4\log(2/\delta)}{2\tau^2(\rho-2\delta)^2}  \leq |\vec s|.
$$

	We now show that $\rstat_{ \rho, \tau, \phi}$ is $\rho$-reproducible by considering two invocations of $\rstat_{\rho, \tau,  \phi}$ with common randomness $r$ on samples $\vec s_1, \vec s_2 \sim D$ respectively.
	The probability that either empirical estimate of $\E_{x \sim D}[\phi(x)]$ fails to satisfy tolerance $\tau$ is at most $2\delta$. 
	Denote by $v_1$ and $v_2$ the values returned by the parallel runs $\rstat(\vec s_1;r)$ and $\rstat(\vec s_2; r)$ at line 4. Conditioning on success, values $v_1$ and $v_2$ differ by at most $2\tau'$. $\rstat$ outputs different values for the two runs if and only if $v_1$ and $v_2$ are in different regions of $R$, determined by the common randomness $r$. This occurs if some region's endpoint is between $v_1$ and $v_2$; since $\alphaoff$ is chosen uniformly in $[0, \alpha]$, the probability that $v_1$ and $v_2$ land in different regions is at most $2\tau'/\alpha = \rho - 2\delta$. Accounting for the $2\delta$ probability of failure to estimate $\E_{x \sim D}[\phi(x)]$ to within tolerance, $\rstat_{\rho, \tau,  \phi}(\vec s)$ is $\rho$-reproducible.  
\end{proof}

\section{Heavy-hitters}
\label{sec:heavy-hitters}

Next, we present our reproducible approximate heavy-hitters algorithm, analyzing its sample complexity and reproducibility. 
We will use this algorithm as a subroutine in later
algorithms such as in the approximate-median algorithm.
Also, we will show how to use this algorithm to give a generic way to boost
reproducibility from constant $\rho$ to arbitrarily small $\rho$.

\begin{definition}[Heavy-Hitter]
	\label{def:heavy-hitter}
	Let $D$ be a distribution over $\data$. Then we say
	$x \in \X$ is a $v$-\emph{heavy-hitter} of $D$ if
	$ \Pr_{x' \sim D} [x' = x] 
	\geq v$.
\end{definition}

\begin{definition}[(Approximate) Heavy-Hitter Problem]
	Let $L_v$ be the set of $x \in \supp(D)$ that are $v$-heavy-hitters of $D$. Given sample access to $D$, output a set $L$ satisfying $L_{v+\eps} \subseteq L \subseteq L_{v - \eps}$. 
\end{definition}

Let $D$ be a distribution over $\data$. The following algorithm reproducibly returns a set of $v'$-heavy-hitters of $D$, where $v'$ is a random value in 
$[v-\epsilon, v+\epsilon]$.
Picking $v'$ randomly allows the algorithm to, with high probability, avoid a situation where the cutoff for being a heavy-hitter (i.e. $v'$) is close to the probability mass of any $x \in \supp(D)$.

\begin{figure}[H]
	\begin{algorithm}[H]
		\caption{$\rhhfull$
		\\Input: samples $\Xset$, $S$ from distribution $D$ over $\data$ plus internal randomness $r$
		\\Parameters: Target reproducibility $\rho$, target range $[v - \eps, v + \eps]$ 
		\\Output: List of $v'$-heavy-hitters of $D$, where $v' \in [v - \eps, v + \eps]$
		}
		\label{alg:heavy-hitter-rep}
		\begin{algorithmic}
			\STATE $\Xset \gets Q_1 \eqdef \frac{ \ln(6/(\rho(v-\epsilon))) }{v - \epsilon}
$ examples from $D$
			\COMMENT{Step 1: Find candidate heavy-hitters}
			\STATE $S \gets 
			Q_2 \eqdef \frac{2^6\ln(Q_1/\rho)\cdot Q_1^2}{(\rho\epsilon)^2}$ 
			fresh examples from $D$
			\COMMENT{Step 2: Estimate probabilities}
			\FORALL{$x \in \Xset$}
			\STATE $\widehat{p_x} \gets \Pr_{x' \sim S}[x' = x]$
			\COMMENT{Estimate $p_x \eqdef \Pr_{x' \sim D}[x' = x]$}
			\ENDFOR
			\STATE $v' \gets_r [v - \epsilon, v + \epsilon]$
			uniformly at random
			\COMMENT{Step 3: Remove non-$v'$-heavy-hitters}
			\STATE Remove from $\Xset$ all $x$ for which $\widehat{p_x} < v'$.
			\RETURN $\Xset$
		\end{algorithmic}
	\end{algorithm}
\end{figure}


Algorithm $\rhh$ returns exactly the list of $v'$-heavy-hitters so long as the following hold:

\begin{enumerate}
	\item In Step $1$ of Algorithm~\ref{alg:heavy-hitter-rep}, all $(v-\epsilon)$-heavy-hitters of $D$ are included in $\Xset$. 
	\item In Step $2$, the probabilities $\widehat{p_x}$ for all $x \in \Xset$ are correctly estimated to within error $\rho\epsilon/(3Q_1)$. 
	\item In Step $3$, the randomly sampled $v'$ does not fall within an interval of width $\rho\epsilon/(3Q_1)$ centered on the true probability of a $(v-\epsilon)$-heavy-hitter of $D$. 
\end{enumerate} 
We show that these 3 conditions will hold with probability at least $1 - \rho/2$, and so will hold for two executions with probability at least $1 - \rho$.

\begin{lemma}
	\label{lem:rHeavyHitters-correctness}
	For all $\eps \in (0, 1/2)$, $v \in (\eps, 1- \eps)$, with probability at least $1-\rho$, 
	$\rhh$ is reproducible, returns a list of $v'$-heavy-hitters for some $v' \in [v - \eps, v + \eps]$, and has sample complexity $
	\widetilde{O}\left(
	\frac{1}
	{\rho^2\epsilon^2(v - \epsilon)^2}
	\right)$.
\end{lemma}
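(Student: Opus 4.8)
The plan is to carry out the three-step decomposition previewed right after the pseudocode of Algorithm~\ref{alg:heavy-hitter-rep}. Write $\gamma \eqdef \rho\eps/(3Q_1)$; by the choice of $Q_1$ this is tiny compared to $v-\eps$. For a single run of $\rhh$, let (G1) be the event that after Step~1 the set $\Xset$ contains every $(v-\eps)$-heavy-hitter of $D$; let (G2) be the event that $|\widehat{p_x}-p_x|\le\gamma$ for every $x\in\Xset$; and let (G3) be the event that the random cutoff $v'$ satisfies $|v'-p_x|>\gamma$ for every $x\in\supp(D)$ with $p_x\ge v-\eps-\gamma$ --- i.e.\ for every element whose inclusion in the output is decided by a comparison near the threshold. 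The proof has two halves: (i) conditioned on (G1)$\wedge$(G2)$\wedge$(G3), the list returned by $\rhh$ is \emph{exactly} $H_{v'}\eqdef\{x\in\supp(D):p_x\ge v'\}$, a set that depends only on $D$ and on $v'$; and (ii) $\Pr[(\text{G1})\wedge(\text{G2})\wedge(\text{G3})]\ge 1-\rho/2$.

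For (i), the returned list is $\{x\in\Xset:\widehat{p_x}\ge v'\}$. Completeness: if $p_x\ge v'$ then $p_x\ge v'\ge v-\eps$, so $x\in\Xset$ by (G1); and since $p_x>v-\eps-\gamma$, (G3) forces $p_x>v'+\gamma$, hence $\widehat{p_x}\ge p_x-\gamma>v'$ by (G2), so $x$ is kept. Soundness: let $x\in\Xset$ with $p_x<v'$. If $p_x<v-\eps-\gamma$ then $\widehat{p_x}\le p_x+\gamma<v-\eps\le v'$; otherwise (G3) gives $v'-p_x>\gamma$, so again $\widehat{p_x}\le p_x+\gamma<v'$; either way $x$ is removed. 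Hence the output equals $H_{v'}$, a deterministic function of $D$ and the offset $v'$. Since $v'$ is part of the internal randomness $r$, which Definition~\ref{def:reproducibility-sample} holds fixed across the two runs, two executions of $\rhh$ on independent samples $\vec s_1,\vec s_2$ both output $H_{v'}$ --- and so agree --- whenever (G1)--(G3) hold for each of them; a union bound over the two runs, together with (ii), gives $\Pr_{\vec s_1,\vec s_2,r}[\rhh(\vec s_1;r)=\rhh(\vec s_2;r)]\ge 1-\rho$.

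It remains to establish (ii), whereupon correctness and sample complexity follow by bookkeeping. I bound each of the three failure events by $\rho/6$. For (G1): a fixed $(v-\eps)$-heavy-hitter is missing from $Q_1$ i.i.d.\ draws with probability $\le(1-(v-\eps))^{Q_1}\le e^{-(v-\eps)Q_1}$, and $D$ has at most $1/(v-\eps)$ of them, so the union bound is $\le\rho/6$ for $Q_1=\widetilde{O}(1/(v-\eps))$, which is the order of the value set in Algorithm~\ref{alg:heavy-hitter-rep}. For (G2): Hoeffding gives $\Pr[|\widehat{p_x}-p_x|>\gamma]\le 2e^{-2\gamma^2 Q_2}$, and a union bound over the $\le Q_1$ candidates is $\le\rho/6$ for $Q_2=\widetilde{O}(\gamma^{-2})=\widetilde{O}(Q_1^2/(\rho\eps)^2)$, again the order of the value set in Algorithm~\ref{alg:heavy-hitter-rep}. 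For (G3): $v'$ is uniform on an interval of length $2\eps$, there are at most $2/(v-\eps)$ elements with $p_x\ge v-\eps-\gamma$ (using $\gamma\ll v-\eps$), and each forbids a subinterval of length $\le 2\gamma$, so the bad probability is $\le\frac{2}{v-\eps}\cdot\frac{2\gamma}{2\eps}=\frac{2\rho}{3Q_1(v-\eps)}\le\rho/6$ by the size of $Q_1$. Thus (G1)--(G3) hold simultaneously with probability $\ge 1-\rho/2$, so with probability $\ge 1-\rho/2\ge 1-\rho$ the algorithm returns $H_{v'}$ with $v'\in[v-\eps,v+\eps]$ (the exact list of $v'$-heavy-hitters); and the number of samples drawn is $Q_1+Q_2=\widetilde{O}(Q_2)=\widetilde{O}\big(1/(\rho^2\eps^2(v-\eps)^2)\big)$.

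The step I expect to require the most care is (G3). The tempting version --- only demand that $v'$ avoid the probabilities of the genuine heavy-hitters --- is not enough, because after Step~1 the set $\Xset$ also contains lighter elements, and the empirical threshold test can misclassify any $x$ with $p_x$ within $\gamma$ of $v'$; one must therefore quantify over all $x$ with $p_x\ge v-\eps-\gamma$, verify that (since $\gamma$ is negligible next to $v-\eps$) there are still only $O(1/(v-\eps))$ of them, and check that the union bound still fits under the $\rho/6$ budget with the concrete constants chosen for $Q_1$ and $Q_2$ in Algorithm~\ref{alg:heavy-hitter-rep}. The remaining estimates are routine Chernoff/Hoeffding-plus-union-bound calculations.
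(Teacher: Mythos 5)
Your proposal is correct and follows essentially the same route as the paper's proof: the same three-event decomposition (all $(v-\eps)$-heavy-hitters captured, empirical estimates within $\rho\eps/(3Q_1)$, and the random cutoff $v'$ avoiding a small window around the relevant probabilities), each bounded by $\rho/6$, then exactness of the output as the set of $v'$-heavy-hitters and a union bound over the two runs sharing $v'$. The only (harmless) difference is that you state the cutoff-avoidance event over all support elements with $p_x\ge v-\eps-\gamma$ rather than over the sampled candidates in $\Xset$, and you spell out the completeness/soundness of the thresholding step in more detail than the paper does.
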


\begin{proof}
	We say Step $1$ of Algorithm~\ref{alg:heavy-hitter-rep} succeeds if all $(v-\epsilon)$-heavy-hitters of $D$ are included in $\Xset$ after Step $1$. Step $2$ succeeds if the probabilities for all $x \in \Xset$ are correctly estimated to within error $\rho\epsilon/(3Q_1)$. Step $3$ succeeds if the returned $\Xset$ is exactly the set of $v'$-heavy-hitters of $D$. Quantities $Q_1$ and $Q_2$ are defined in the pseudocode of Algorithm~\ref{alg:heavy-hitter-rep}.
	
	In Step 1, an individual $(v-\epsilon)$-heavy-hitter is not included with probabilility at most 
	$(1-v + \epsilon)^{Q_1}$;
	union bounding over all $1/(v - \epsilon)$ possible $(v-\epsilon)$-heavy-hitters,
	Step 1 succeeds with probability at least 
	$1 - \frac{(1-v+\epsilon)^{Q_1}}{v-\epsilon} > 1 - \rho/6$. Here, for clarity of presentation in the statement of Lemma~\ref{lem:rHeavyHitters-correctness}, we make use of the inequality $v-\eps < \ln(1/(1-v + \eps))$.
	
	By a Chernoff bound, each $p_x$ is estimated to within error $\rho\epsilon /(3Q_1)$ with all but probability $\rho/(6Q_1)$ in Step 2. Union bounding over all $Q_1$ possible $x \in \Xset$,
	Step 2 succeeds except with probability $\rho/6$.
	
	Conditioned on the previous steps succeeding, 
	Step $3$ succeeds if the randomly chosen $v'$ is not within
	$\rho\epsilon/(3Q_1)$
	of the true probability of any $x \in \Xset$ under distribution $D$.
	A $v'$ chosen randomly from the interval $[v-\epsilon, v+\epsilon]$ lands in any given subinterval of width $\rho\epsilon /(3Q_1)$ with probability $\rho/(6Q_1)$, and so by a union bound, Step $3$ succeeds with probability at least
	$1 - \rho/6$. 
	
	Therefore, Algorithm~\ref{alg:heavy-hitter-rep} outputs exactly the set of $v'$-heavy-hitters of $D$ with probability at least $1 - \rho/2$. If we consider two executions of Algorithm~\ref{alg:heavy-hitter-rep}, both using the same shared randomness for chooosing $v'$, output the set of $v'$-heavy-hitters of $D$ with probability at least $1 - \rho$, and so $\rhh$ is $\rho$-reproducible. 
	%
	
	 The sample complexity is
	 $
	 Q_1 + Q_2 \in \tilde{\Omega}\left(
	 (\rho\epsilon(v - \epsilon))^{-2}
	 \right)
	 $.
\end{proof}

\begin{corollary}
	\label{cor:rHeavyHitters-sample-complexity-constants}	
	If $v$ and $\eps$ are constants, then
	$\rhh_{\rho,  v,\eps}$ has sample complexity $ 
	\widetilde{O}\left( 
	1/\rho^2
	\right)$.
\end{corollary}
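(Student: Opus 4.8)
The plan is to derive the corollary as an immediate specialization of Lemma~\ref{lem:rHeavyHitters-correctness}. That lemma already establishes that the sample complexity of $\rhh_{\rho, v, \eps}$ is
$$
\widetilde{O}\!\left(\frac{1}{\rho^2 \epsilon^2 (v - \epsilon)^2}\right),
$$
which was obtained by summing the two sample sizes $Q_1 \eqdef \frac{2\log(\rho(v-\epsilon))}{v-\epsilon}$ and $Q_2 \eqdef \frac{2^6 \log(Q_1/\rho)\, Q_1^2}{(\rho \epsilon)^2}$ drawn in Algorithm~\ref{alg:heavy-hitter-rep}. So the only thing to do is to observe that under the standing hypotheses $\eps \in (0, 1/2)$ and $v \in (\eps, 1 - \eps)$, once we treat $v$ and $\eps$ as fixed constants the quantity $v - \eps$ is a positive constant, hence $\epsilon^{-2}(v - \epsilon)^{-2}$ is an absolute constant, and the whole expression collapses to $\widetilde{O}(1/\rho^2)$.

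I expect no real obstacle here: this is purely a substitution of constants into a bound that has already been proved, and no new probabilistic or combinatorial argument is required. The one point worth keeping straight is the bookkeeping of the $\widetilde{O}$ notation --- the polylogarithmic factors it hides in the corollary are exactly the $\log(1/\rho)$ contributions (together with $\log(1/(v-\eps))$ and $\log(1/\eps)$, which have now become constants) arising from $Q_1$ and $Q_2$ --- so nothing is being swept under the rug that was not already accounted for in Lemma~\ref{lem:rHeavyHitters-correctness}. If desired, one can make this fully explicit by reading off $Q_1 + Q_2$ directly from the definitions with $v, \eps$ held constant and noting that the dominant term is $Q_2 = O(\rho^{-2}\log(1/\rho))$ up to constants depending only on $v$ and $\eps$.
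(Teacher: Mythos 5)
Your proposal is correct and matches the paper, which states this corollary without further proof as an immediate specialization of Lemma~\ref{lem:rHeavyHitters-correctness}: with $v$ and $\eps$ constant, the factor $\epsilon^{-2}(v-\epsilon)^{-2}$ is absorbed into the constant, leaving $\widetilde{O}(1/\rho^2)$. Your extra bookkeeping of $Q_1 + Q_2$ and the hidden logarithmic factors is consistent with the lemma's accounting and adds nothing that conflicts with it.
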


\paragraph{Learning Heavy-hitters using Statistical Queries.}

Next, we show that any statistical query algorithm for the $v$-heavy-hitters problem requires 
$\Omega(\log |\data|/\log(1/\tau))$
calls to the SQ oracle. Since Algorithm~\ref{alg:heavy-hitter-rep} has a sample complexity independent of the domain size, this implies a separation between reproducible problems and problems solvable using only SQ queries. 

	Consider the ensemble $\{D_x\}_{x\in \data}$ on $\data$, where distribution $D_x$ is supported entirely on a single $x \in \data$.

\begin{claim}[Learning Heavy-hitters using Statistical Queries]\label{claim:sq-heavy-hitters}
	Any statistical query algorithm for the $v$-heavy-hitters problem on ensemble $\{D_x\}_{x\in \data}$ requires 
	$\Omega(\log |\data|/\log(1/\tau))$ 
	calls to the SQ oracle.
\end{claim}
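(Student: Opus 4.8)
The plan is to exhibit an information-theoretic obstruction: an SQ algorithm making few queries simply cannot distinguish enough of the point-mass distributions $D_x$ from one another. Fix the ensemble $\{D_x\}_{x \in \data}$ as in the statement, and fix $v$ (e.g. $v = 1/2$, though any constant works). For the distribution $D_x$, the correct output is the singleton list $\{x\}$, since $x$ is a $1$-heavy-hitter and no other element has any mass. So a correct algorithm must, on input oracle access to $D_x$, identify $x$ exactly. I would argue that an SQ algorithm making $q$ queries can only identify $x$ if $q$ is large: each query $\phi : \data \to [0,1]$ evaluated against $D_x$ has true value $\phi(x)$, and the SQ oracle is allowed to return \emph{any} value within tolerance $\tau$ of $\phi(x)$. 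The adversary (the oracle) will answer every query $\phi_i$ with a value $v_i$ that is consistent with as many distributions $D_x$ as possible.

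The key combinatorial step is a covering/pigeonhole argument on how much the answer to one query can narrow down the candidate set. Consider the set $C$ of currently-plausible domain elements (initially $C = \data$). When the algorithm poses query $\phi$, partition the elements $x \in C$ according to which tolerance-interval of width $2\tau$ around the returned value their true value $\phi(x)$ falls into; more carefully, the oracle picks a return value $v^\ast$ so that the number of $x \in C$ with $|\phi(x) - v^\ast| \le \tau$ is as large as possible. Since $\phi(x) \in [0,1]$ and the intervals $[v^\ast - \tau, v^\ast + \tau]$ of width $2\tau$ can be chosen to cover $[0,1]$ with roughly $1/(2\tau)$ pieces — or, more cleanly, by a sliding-window/majority argument — there is always a choice of $v^\ast$ keeping at least a $\Theta(\tau)$ fraction of $C$ consistent. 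Hence $|C|$ shrinks by at most a factor of $O(1/\tau)$ per query, so after $q$ queries at least $|\data| / (O(1/\tau))^{q}$ elements remain indistinguishable from the algorithm's point of view. For the algorithm to output the correct singleton list for all of them, we would need this to be $\le 1$, forcing
$$q = \Omega\!\left(\frac{\log |\data|}{\log(1/\tau)}\right).$$
One must also handle the algorithm's internal randomness and the failure probability $\delta$: run the argument for a random $x$ drawn uniformly from $\data$, so that a $q$-query algorithm succeeding with probability $\ge 1-\delta$ (say $\delta < 1/3$) over the choice of $x$ and its coins still leaves, by an averaging argument, a large indistinguishable class on which it must be simultaneously correct — a contradiction for $q$ too small.

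I would carry this out in the following order: (1) set up the ensemble and note the unique correct output for each $D_x$; (2) formalize the adversarial oracle that, given the transcript so far and the surviving candidate set $C$, answers each query with the value maximizing $|\{x \in C : |\phi(x) - v^\ast| \le \tau\}|$, and prove the per-query shrinkage bound $|C| \mapsto \Omega(\tau |C|)$; (3) observe that the algorithm's output depends only on the transcript, so it is constant across all surviving $x \in C$, hence correct for at most one of them; (4) conclude $(O(1/\tau))^q \ge |\data|$, i.e. $q = \Omega(\log|\data| / \log(1/\tau))$; (5) incorporate randomness/failure probability via a uniform prior on $x$ and Markov's inequality. Combined with Corollary~\ref{cor:rHeavyHitters-sample-complexity-constants}, which gives a reproducible heavy-hitters algorithm with sample complexity independent of $|\data|$, this yields the claimed separation between reproducibly-solvable problems and SQ-solvable problems.

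The main obstacle I anticipate is making the per-query shrinkage bound fully rigorous while the oracle answers are adaptive: the surviving set $C$ after query $i$ depends on all previous answers, and one must make sure the adversary's greedy choice is actually well-defined and consistent (there must exist a single $D_x$ realizing \emph{all} the returned values simultaneously within tolerance — which is automatic here precisely because each $D_x$ is a point mass, so $\phi_i$ evaluated at $D_x$ is the \emph{exact} number $\phi_i(x)$, and "consistent" just means $|\phi_i(x) - v_i| \le \tau$ for every $i$). The only quantitative subtlety is the constant in "$\Theta(\tau)$ fraction survives": a clean way is to note that the values $\{\phi(x) : x \in C\}$ lie in $[0,1]$, so some window of width $2\tau$ contains at least a $2\tau$-fraction of them (in fact at least $\lceil 2\tau |C| \rceil$ by pigeonhole over $\lceil 1/(2\tau)\rceil$ disjoint windows), and that window's center is a legal oracle answer.
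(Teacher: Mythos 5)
Your proposal is correct and follows essentially the same route as the paper's proof: an adversarial SQ oracle that, by pigeonhole on tolerance windows in $[0,1]$, always keeps a $\Theta(\tau)$-fraction of the point-mass distributions consistent per query, forcing $(1/\tau)^{\Omega(q)} \ge |\data|$ and hence $q = \Omega(\log|\data|/\log(1/\tau))$. Your write-up is in fact somewhat more careful than the paper's (explicit candidate-set shrinkage, consistency of the adaptive adversary, and handling of randomness and failure probability), but the underlying argument is the same.
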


\begin{proof}
 An SQ algorithm for the $v$-heavy-hitters problem must, for each distribution $D_x$, output set $\{x\}$ with high probability. An SQ oracle is allowed tolerance $\tau$ in its response to statistical query $\phi$. So, for any $\phi$, there must be some distribution $D_x$ for which the following holds: at least a $\tau$-fraction of the distributions $D_{x'}$ in the ensemble satisfy $|\phi(x') - \phi(x)| \le \tau$. Thus, in the worst case, any correct SQ algorithm can rule out at most a $(1-\tau)$-fraction of the distributions in the ensemble with one query. 
	If $\data$ is finite, then an SQ algorithm needs at least
	$\log_{1/\tau} (|\data|)$ queries. 
\end{proof}

\section{Approximate Median}
\label{sec:approximate-median}
In this section, we design a reproducible algorithm for 
finding an approximate median in an arbitrary 
distribution over a finite domain. In addition to being a significant
problem in its own right, and one studied extensively in the privacy
literature, this is a key sub-routine for making many algorithms
reproducible.  In particular, for any problem where the correct answers
form an interval, and we have a (not-necessarily) reproducible algorithm
that is correct strictly more than half the time, we can run the approximate
median finding algorithm on the distribution of outputs of the original
to construct a reliably correct and reproducible version.  (In fact, we
use this technique recursively within our reproducible median-finding algorithm
itself. 
Our algorithm $\rmedofmedians$ composes a mildly accurate reproducible median algorithm with a generic very accurate non-reproducible median algorithm.) We use
a recursive technique inspired by but simpler than  
previous algorithms in the privacy literature \cite{BNSV:15, KLMNS20}, and like for these
algorithms,  the sample complexity
of our algorithm has a non-constant but very slowly growing dependence on
the domain size.   

\begin{definition}[$\tau$-approximate median]
	\label{def:approximate-median}
	Let $D$ be a distribution over a well-ordered domain $\data$.
	$x \in \data$ is a $\tau$-approximate median of $D$ if 
	$\Pr_{x' \sim D} [x' \leq x] \ge 1/2 - \tau$
	and 
	$\Pr_{x' \sim D} [x' \geq x] \ge 1/2 - \tau$.
\end{definition}

\subsection{Reproducible Approximate Median Algorithm}

In this section, we present a pseudocode description of our $\tau$-approximate median algorithm $\rmedian$ (Algorithm~\ref{alg:rep-median}), and prove the following theorem.

\begin{restatable}[Reproducible Median]{theorem}{repmedian}
	\label{thm:median}
		Let $\tau, \rho \in [0,1]$ and let $\delta = \rho/2$. Let $D$ be a distribution over $\data$, where $|\data| = 2^d$. Then $\rmedian_{\rho, d, \tau, \delta}$ (Algorithm~\ref{alg:rep-median}) is $\rho$-reproducible, outputs a $\tau$-approximate median of $D$ with all but probability $\delta$, and has sample complexity 
	$$ n\in \tilde{O}\left( \left( \frac{1}{\tau^2(\rho - \delta)^2}\right) \cdot \left(\frac{3}{\tau^2}\right)^{\log^{*}|\data|} \right)$$
\end{restatable}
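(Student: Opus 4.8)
The plan is to build $\rmedian$ (i.e.\ $\rmedofmedians$/$\rmedofmediansinterval$) as a recursion on the domain size in which one level reduces $\tau$-approximate median finding on a domain of size $N$ to $\tau'$-approximate median finding on a domain of size roughly $\log N$, so that after only $\log^{*}|\data|$ levels the domain is $O(1)$ and the reproducible statistical-query oracle $\rstat$ of Theorem~\ref{thm:reproducible-sq-oracle} finishes the job. Two subroutines feed each level. The first is a \emph{non-reproducible but very accurate} median routine: draw $\tilde O(\log(1/\delta')/\tau^{2})$ i.i.d.\ samples and return their empirical median; a uniform-deviation (DKW/Chernoff) bound shows its output lands, with probability $1-\delta'$, inside the distribution-determined (hence fixed) interval $I^{*}$ of $\Theta(\tau)$-approximate medians of $D$ (Definition~\ref{def:approximate-median}). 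The second is the recursively-defined \emph{mildly accurate reproducible} median routine. The composition that gives the name ``median of medians'' is: run the accurate routine $T$ times in parallel to obtain candidates $Y_{1},\dots,Y_{T}$, which (except with probability $\le T\delta'$) all lie in $I^{*}$, and then feed these to the reproducible routine. Since the distribution $\mathcal D_{Y}$ of a single $Y_{i}$ is $(1-\delta')$-concentrated on $I^{*}$ and depends only on $D$ and the algorithm — not on any particular sample — any $c'$-approximate median of $\mathcal D_{Y}$ with $c'+\delta'$ below the margin built into $I^{*}$ is automatically a good approximate median of $D$, so producing it reproducibly makes the whole output reproducible.

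\textbf{One level of the recursion.}
To reduce the domain inside the mildly accurate reproducible routine I would \emph{not} use a fixed partition (equal-\emph{width} buckets leave the located bucket as large as $N/\log N$, which only yields a $\log N/\log\log N$-deep recursion); instead I would partition $[a,b]$ into $k\approx\max\{\log N,\ \Theta(\log^{*}|\data|/\tau)\}$ buckets of approximately equal $D$-\emph{mass} $1/k$, with boundaries estimated from a preliminary sample. Push $D$ forward along the ``which bucket'' map to a distribution $D'$ on $[k]$ that is $O(1/k)$-close to uniform, recursively obtain a reproducible $c'$-approximate median $\hat\jmath$ of $D'$ on this exponentially smaller domain, and output the boundary of bucket $\hat\jmath$. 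Because each bucket carries mass $\approx1/k$ and $\hat\jmath$ is a $c'$-approximate median of the near-uniform $D'$, the returned point has left-mass within $c'+O(1/k)+(\text{estimation error})$ of $1/2$; hence the accuracy demanded of the recursive call degrades by only $O(1/k)$ per level, and with $k=\Omega(\log^{*}|\data|/\tau)$ the total loss over all levels stays below, say, $\tau/2$. The base case — domain of size $O(1)$ — is handled by calling $\rstat_{\tau,\rho',\phi}$ on the indicator queries $\phi_{x}(\cdot)=\mathds{1}[\cdot\le x]$ for each of the constantly many candidates $x$ and returning the appropriate one; by Theorem~\ref{thm:reproducible-sq-oracle} this costs $\tilde O\big(1/(\tau^{2}(\rho'-\delta')^{2})\big)$, the source of the $\tfrac{1}{\tau^{2}(\rho-\delta)^{2}}$ factor.

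\textbf{Putting it together.}
Correctness follows by a union bound over the $\le\log^{*}|\data|$ recursion levels and the $T$ parallel accurate-median runs per level: split the failure budget so the total is at most $\delta=1/3$. For reproducibility, observe that all internal randomness — in particular the offset/rounding randomness inside every $\rstat$ call and (to the extent it is made shared) the randomness used to fix the bucket boundaries — is held fixed between the two runs; conditioning on the good events, every subroutine returns the same value in both runs, and the exceptional events are charged to the reproducibility guarantees of $\rstat$ and of the recursive calls, union-bounded over the levels with per-level budget $\rho/\poly(\log^{*}|\data|)$ (these $\poly(\log^{*})$ losses vanish into $\tilde\Omega$). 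Each level multiplies the sample requirement by the cost of producing enough next-level samples from the accurate routine, i.e.\ by $\tilde O(1/\tau^{2})$, and a careful accounting of the per-level $\delta'$-amplification upgrades this constant to $3/\tau^{2}$; compounding over $\log^{*}|\data|$ levels and multiplying by the base cost gives $n\in\tilde\Omega\big(\tfrac{1}{\tau^{2}(\rho-\delta)^{2}}\cdot(3/\tau^{2})^{\log^{*}|\data|}\big)$.

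\textbf{Main obstacle.}
The hard part will be the domain-reduction step. Because the mass-balanced bucketing is estimated from data, two runs see slightly different partitions, hence slightly different pushforward distributions $D'_{1},D'_{2}$, and one must argue that the recursive reproducible call still returns the \emph{same} bucket index. This calls for a quantitative ``reproducibility is robust to $\dtv$-close distributions'' lemma together with a preliminary sample size large enough that $\dtv(D'_{1},D'_{2})$ times the recursive sample complexity is $o(\rho/\log^{*}|\data|)$ — all without spending any genuine $\poly\log|\data|$ overhead, since even $\poly\log|\data|$ would dominate $(3/\tau^{2})^{\log^{*}|\data|}$ and break the claimed bound. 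Threading the $\tau$- and $\rho$-budgets through $\log^{*}|\data|$ levels so that they are not exhausted before the base case — choosing $k$, $T$, the per-level $\delta'$ and $\rho'$, and the preliminary sample size so the recursion closes — is the fiddly technical core; everything else is standard concentration plus the already-established properties of $\rstat$.
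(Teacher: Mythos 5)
Your high-level architecture matches the paper's: a $\log^{*}|\data|$-deep recursion that composes a very accurate non-reproducible median subroutine (your candidates $Y_1,\dots,Y_T$ play the role of the paper's distribution $D^m$ of outputs of $\median$) with a mildly accurate reproducible step, bottoming out in $\rstat$, with the $\frac{1}{\tau^2(\rho-\delta)^2}$ factor coming from the base/SQ calls and the $(3/\tau^2)^{\log^{*}|\data|}$ factor from compounding the per-level cost of manufacturing next-level samples. But the step you yourself flag as the ``main obstacle'' is a genuine gap, and it is exactly the step the paper's construction is designed to avoid. Your domain reduction uses mass-balanced buckets whose boundaries are \emph{estimated from data}, so the two runs recurse on different pushforward distributions $D'_1\neq D'_2$; Definition~\ref{def:reproducibility-sample} gives no guarantee in that situation, and the patch you sketch (a ``reproducibility is robust to $\dtv$-close distributions'' lemma) does not close the hole within the claimed sample bound. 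Quantitatively, a hybrid argument over the $m_{\mathrm{rec}}$ recursive samples needs $\dtv(D'_1,D'_2)\cdot m_{\mathrm{rec}}\ll\rho/\log^{*}|\data|$, i.e.\ $\dtv(D'_1,D'_2)\lesssim\rho/(m_{\mathrm{rec}}\log^{*}|\data|)$, and driving the boundary-estimation error that low requires a preliminary sample of size roughly $(m_{\mathrm{rec}}\log^{*}|\data|/\rho)^2$ at each level --- which squares the downstream cost per level and destroys the claimed $(3/\tau^2)^{\log^{*}|\data|}$ bound. Making the boundaries themselves reproducible is circular, since reproducible quantile location is essentially the problem being solved.

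The paper sidesteps this entirely by using a \emph{fixed, distribution-independent} domain reduction: it pairs up approximate medians drawn from $D^m$ and maps each pair to the length of their longest common binary prefix, a deterministic map into a domain of size $\Theta(d)=\Theta(\log|\data|)$. The induced distribution $D_{\lceil\log d\rceil}$ depends only on $D$ and the subroutine, not on the sample, so both runs of the recursive call see i.i.d.\ samples from the \emph{same} distribution and $\rho$-reproducibility applies directly. The recursive call then returns an approximate-median prefix \emph{length} $\ell$ (not a point), and the point itself is recovered reproducibly at the current level: the reproducible heavy-hitters algorithm $\rhh$ finds the few length-$\ell$ prefixes of $D^m$-mass at least $1/3$, reproducible $\rstat$ queries select one with mass in roughly $[1/3,2/3]$, and a final $\rstat$ query decides whether to pad it with $0$'s or $1$'s; since $D^m$ is supported (w.h.p.) on $\tau$-approximate medians of $D$, the padded string is a $\tau$-approximate median of $D$. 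If you want to salvage your bucket-based reduction, you would need either a shared-randomness/correlated rounding of the bucket boundaries or some other device making the recursive input distribution identical across runs; as written, the proof does not go through.
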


As an introduction to the key ideas of Algorithm~\ref{alg:rep-median}, we consider a weighted binary tree $T$ based on distribution $D$. 
Each internal node has two edges (a $0$-edge and a $1$-edge). Root-to-leaf paths represent binary representations of numbers. The weight of each internal node $v$ is the probability that its associated binary prefix (induced by the root-to-$v$ path) appears in an element drawn from $D$.
If within this tree we can find a node $v$ with weight in $[1/4, 3/4]$, then we can use the associated prefix to return an approximate median of $D$ with approximation parameter potentially much larger than $\tau$. 

To achieve a specified approximation parameter $\tau$, rather than using $D$ itself to construct the binary tree $T$, we will use a distribution $D^m$ over medians of $D$. Specifically, we use a non-reproducible median algorithm to sample from $\tau$-approximate medians of $D$. Identifying an approximate median of distribution $D^m$ for even a very large approximation parameter then ensures we return a $\tau$-approximate median of $D$.

The question remains of how to efficiently search $T$ to find a node $v$ of weight in $[1/4,3/4]$ (under $D^m$). We perform this search recursively by using $\rmedian$ to find a prefix length $\ell$ such that the probability of sampling two elements from $D^m$ agreeing on a prefix of length $\ell$ is large. We can then restrict our search for $v$ to nodes near level $\ell$ in $T$ (starting from the root). We apply the reproducible heavy-hitters algorithm $\rhh$ to find high weight nodes near level $\ell$ of $T$, and then exhaustively search the list of heavy-hitters to find an appropriate $v$.  

We use the following non-reproducible approximate median algorithm, that returns the median of its sample $\vec s$, as a subroutine of Algorithm~\ref{alg:rep-median}.

\begin{lemma}[Simple Median Algorithm]
	\label{lem:simple-median}
	Let sample $\vec s$ be drawn from distribution $D$.
	Algorithm $\median(\vec s)$ returns a $\tau$-approximate median on $D$ using $|\vec s| = 3(1/2-\tau)\ln(2/\delta)/\tau^2$ samples with success probability at least $1- \delta$. 
\end{lemma}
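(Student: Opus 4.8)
To prove Lemma~\ref{lem:simple-median}, the plan is to show that the empirical median $\median(\vec s)$ is, with probability at least $1-\delta$, a $\tau$-approximate median of $D$, by separately controlling the only two ways it can fail: being ``too far to the right'' (violating the first inequality of Definition~\ref{def:approximate-median}) and being ``too far to the left'' (violating the second). I will bound each failure probability by $\delta/2$ and conclude with a union bound. Throughout, $n = |\vec s| = 3(1/2-\tau)\ln(2/\delta)/\tau^2$ (rounded up), and $\median(\vec s)$ is the empirical median, i.e.\ an order statistic of rank at least $n/2$ and at most $n/2+1$ in the sorted sample.

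For the ``too far right'' event, I would fix the threshold element $x_R := \min\{x \in \data : \Pr_{x' \sim D}[x' \ge x] < 1/2 - \tau\}$; if no such element exists then this failure mode is impossible and there is nothing to prove. Since $x \mapsto \Pr_{x' \sim D}[x' \ge x]$ is nonincreasing, the ``bad-right'' set is exactly the up-set $\{x : x \ge x_R\}$, so $\median(\vec s)$ violates the first condition of Definition~\ref{def:approximate-median} only if $\median(\vec s) \ge x_R$. But if the empirical median is at least $x_R$, then strictly more than $n/2$ of the samples fall in $\{x \ge x_R\}$, a region carrying $D$-mass $q := \Pr_{x' \sim D}[x' \ge x_R] < 1/2 - \tau$. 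The count of such samples is distributed as $\mathrm{Bin}(n,q)$, and by monotonicity of the upper tail in the success probability, $\Pr[\mathrm{Bin}(n,q) > n/2] \le \Pr[\mathrm{Bin}(n, 1/2 - \tau) > n/2]$. Applying the multiplicative Chernoff bound $\Pr[\mathrm{Bin}(n,p) \ge (1+\gamma)pn] \le e^{-\gamma^2 p n/3}$ with $p = 1/2 - \tau$ and $1+\gamma = (1/2)/(1/2-\tau)$, so that $\gamma = \tau/(1/2-\tau)$, this is at most $\exp\!\big(-\tau^2 n / (3(1/2-\tau))\big)$, which is $\le \delta/2$ exactly when $n \ge 3(1/2-\tau)\ln(2/\delta)/\tau^2$. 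The ``too far left'' event is handled by the mirror-image argument using $x_L := \max\{x \in \data : \Pr_{x' \sim D}[x' \le x] < 1/2 - \tau\}$, yielding the same $\delta/2$ bound; the union bound then gives success probability at least $1-\delta$.

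I expect the main subtlety to be the discreteness of $\data$ and the possibility of atoms: the thresholds $x_L$ and $x_R$ must be chosen as the boundary elements of the relevant up-/down-sets so that the event ``$\median(\vec s)$ is bad'' is cleanly captured by a single order statistic crossing $x_R$ (resp.\ $x_L$), and one must separately dispose of the degenerate cases where no boundary element exists (in which that failure mode simply cannot occur). The other point to get right is that the Chernoff estimate is taken in precisely the multiplicative form above, evaluated at the worst-case bias $1/2 - \tau$ rather than at the true, possibly smaller, bias $q$ — this is what produces the stated bound $3(1/2-\tau)\ln(2/\delta)/\tau^2$ instead of a looser additive/Hoeffding bound of order $\ln(1/\delta)/\tau^2$, and for $\tau$ close to $1/2$ one must use the large-deviation variant of the multiplicative bound (which only improves the constant).
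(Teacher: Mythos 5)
Your proposal is correct and follows essentially the same route as the paper's proof: split failure into the two one-sided events (too many samples below the lower percentile threshold, or above the upper one), bound each by the multiplicative Chernoff bound with deviation $\gamma = \tau/(1/2-\tau)$ and bias $1/2-\tau$, giving $\exp(-\tau^2 |\vec s|/(3(1/2-\tau))) \le \delta/2$, and finish with a union bound. Your extra care about atoms and boundary elements in the discrete domain is a refinement the paper glosses over, but it does not change the argument.
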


\begin{proof}
	Algorithm $\median(\vec s)$ fails when more than half of the elements in sample $\vec s$ are either i) smaller than the $(1/2-\tau)$-percentile element of $D$ or ii) larger than the $(1/2+\tau)$-percentile element of $D$. Let event $E_i$ denote the first case and event $E_{ii}$ denote the second case. 
	Since the elements in $\vec s$ are drawn i.i.d., the first event can be bounded by a Chernoff bound. Let $X$ be a random variable denoting the number of elements in $\vec s$ that are smaller than the $(1/2-\tau)$-percentile element of $D$. 
	\begin{equation*}
		\begin{split}
			\Pr[E_i] 
			&= \Pr[X \ge (1 + \tau/(1/2 - \tau)) \E[X] ] 
			\\&\le \exp( -(\tau/(1/2 - \tau))^2 \E[X]/3 )
			\\&\le \exp \left( -\frac{\tau^2}{1/2-\tau}\frac{|\vec s|}{3} \right) = \exp (-\ln(2/\delta))
			\\&= \delta/2
		\end{split}
	\end{equation*}
	The same argument can be used to bound the second event $E_{ii}$. By a union bound, the algorithm succeeds with probability at least $1-\delta$.
\end{proof}

Before proceeding with the description of Algorithm~\ref{alg:rep-median}, we fix some useful notation for its analysis.
\begin{itemize}
	\item $n_m $ - sample complexity of $\median_{\tau, \delta_0}$
	\item $n_h$ - sample complexity of $\rhh_{\rho_0, v, \epsilon }$
	\item $n_{sq}$ - sample complexity of $\rstat_{ \tau, \rho_0, \phi}$
	\item $n_{d}$ - sample complexity of $\rmedian_{\rho, d, \tau, \delta}$
	\item $D^m$ - Algorithm~\ref{alg:rep-median} takes as input a sample from distribution $D$ over $\data$, where $|\data| = 2^d$. We use $D^m$ to denote the distribution induced by sampling $n_m$ examples from $D$, computing $\median_{\tau, \delta_0}$ on these examples, and returning the ouput
	\item $D_{\lceil \log d \rceil}$ - We use $D_{\lceil \log d \rceil}$ to denote the distribution induced by sampling 2 examples from $D^m$ and returning the longest prefix $\ell$ on which the two medians agree. Note that this new distribution is over a new domain $\data'$ with $|\data'| =  2^{\lceil \log d \rceil } \in \Theta(d) $.
	\item $\rho_0 \in O(\rho/\log^{*}|X|)$
	\item $\delta_0 \in O\left((\frac{\delta}{n_h + n_{sq}})^{2\log^{*}|\data|} \cdot \left(\frac{\tau^2}{3}\right)^{2({\log^{*}}|\data|)^2}\right) $
\end{itemize}

\begin{algorithm}[H]
	\begin{algorithmic}[1]
		\caption{$\rmedian(\vec s)$ \\
			Input: $\vec s$ - a sample of $n$ elements drawn i.i.d. from $D$ \\
			Parameters: 
			\\$\rho$ - target reproducibility parameter 
			\\ $d$ - specifies domain size $| \data | = 2^d$ 
			\\ $\tau$ - target accuracy of median 
			\\ $\delta$ - target failure probability
			\\ Output: a $\tau$-approximate median of $D$
		}\label{alg:rep-median}
	\IF{$d = 1$}
	\STATE Let $\phi_0 (x) = \begin{cases} 1 & \text{ if } x = 0 \\ 0 & o/w\end{cases}$ 
	\STATE $p_0 \gets \rstat_{\rho_0, \tau/2, \phi_0}(\vec s)$ \label{line:basecaseq}
	\COMMENT{Base case}
	\IF{$p_0 \geq 1/2 - \tau/2$}
	\RETURN $0$
	\ELSE
	\RETURN $1$
	\ENDIF
	\ENDIF 
	\STATE Break $\vec s$ into $|\vec s|/n_m$ subsamples
	\STATE Run $\median_{\tau, \delta_0}$ on each subsample to generate a new sample $\vec m$ of $\tau$-approximate medians of $D_{d}$ \label{line:medians}
	\STATE Pair up elements $\vec m_{2i}$ and $\vec m_{2i-1}$, for $i \in \{1, \cdots, |\vec m|/2\}$
	\STATE For each pair $(\vec m_{2i}, \vec m_{2i-1})$, let $l_i$ denote the longest prefix on which they agree \label{line:genl}
	\STATE Let $\vec s_{rm}$ denote the multiset of $l_i$'s
	\STATE $\ell \gets \rmedian_{\rho, \lceil \log d \rceil, \tau, \delta}(\vec s_{rm})$ \label{line:recursive}
	\STATE $\vec s_{h0}, \vec s_{h1} \gets n_h$ new examples from $\vec m$ each
	\STATE $\vec s_{\ell} \gets \{x_{|\ell} : x\in \vec s_{h0}\}$
	\COMMENT{$\vec s_{\ell}$ is the set $\vec s_{h0}$ projected onto length $\ell$ prefixes}
	\STATE $V \gets \rhh_{\rho_0, v, \epsilon }(\vec s_{\ell})$, for $v = 5/16 + \tau$, $\epsilon = 1/16$ \label{line:hh1}
	\IF{$\ell < d$}
	\STATE $\vec s_{\ell + 1} \gets \{x_{|\ell + 1} : x \in \vec s_{h1}\}$
	\STATE $V \gets V \cup \rhh_{\rho_0, v, \epsilon }(\vec s_{\ell + 1})$ \label{line:hh2}
	\COMMENT{Find vertices at level $\ell$ and $\ell + 1$ with weight $\geq 1/4$}
	\ELSE
	\RETURN the first element of $V$ \label{line:lisd}
	\ENDIF 
	\FOR{$v \in V$}\label{line:sqloop}
	\STATE Let $\phi_v(x) = \begin{cases} 1 & \text{ if } x_{||v|} = v \\ 0 & \text{o/w} \end{cases}$
	\STATE $\vec s_q \gets n_{sq}$ new examples from $\vec m$
	\STATE $p_v \gets \rstat_{\rho_0, \tau, \phi_v}(\vec s_q)$, 
	\COMMENT{Query $D^{m}$ for probability $x \leq v||1 \cdots 1$}
	\IF{$1/4 \leq p_v \leq 3/4 $}
	\STATE $s \gets v$
	\COMMENT{Find length $\ell$ prefix of weight in $[1/4 - \tau, 3/4 + \tau]$}
	\ENDIF 
	\ENDFOR 
	\STATE $s_0 = s || 0 \cdots 0$ 
	\COMMENT{$s_0$ is the prefix $s$ padded with $0$'s to length $d$}
	\STATE $s_1 = s || 1 \cdots 1$ 
	\COMMENT{$s_1$ is the prefix $s$ padded with $1$'s to length $d$}
	\STATE 
	\STATE Let $\phi_{s_0}(x) = \begin{cases} 1 & \text{ if } x \leq s_0 \\ 0 & \text{o/w}
		  \end{cases}$  
	\STATE $\vec s_{s_0} \gets n_m$ new examples from $\vec m$
	\STATE $p_{s_0} \gets \rstat_{  \rho_0, \tau, \phi_{s_0}}(\vec s_{s_0})$  \label{line:choosemedianq}
	\IF{$p_{s_0} \geq 1/8 - 2\tau $}
	\RETURN $s_0$ \label{line:returnmedian}
	\ELSE
	\RETURN $s_1$
	\ENDIF
	\end{algorithmic}
\end{algorithm}

\begin{lemma}[Termination]\label{lem:termination}
	Algorithm~\ref{alg:rep-median} terminates after $T = \log^{*}|\data|$ recursive calls. 
\end{lemma}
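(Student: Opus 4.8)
The statement is a routine unrolling of the recursion; the only quantity that needs to be tracked is the domain-size exponent, since the reproducibility, accuracy, and failure parameters $\rho,\tau,\delta$ are passed unchanged to the recursive call, and they play no role in termination.

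First I would note that a single invocation of $\rmedian_{\rho,d,\tau,\delta}$ makes at most one recursive call to $\rmedian$: in Algorithm~\ref{alg:rep-median} the only self-reference is the line $\ell \gets \rmedian_{\rho,\lceil\log d\rceil,\tau,\delta}(\vec s_{rm})$, whereas the other subroutines it invokes ($\median$, $\rhh$, $\rstat$) are not recursive, and when $d = 1$ it enters the base case and returns with no recursive call. Hence the chain of nested calls is a simple path, and the number of recursive calls equals its length.

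Next I would track the domain exponent along this path. The argument $\vec s_{rm}$ of the recursive call is a multiset of agreement-prefix lengths $l_i\in\{0,1,\dots,d\}$ of $d$-bit strings, hence a sample from a distribution over a domain of size $\Theta(d)$ --- this is the distribution $D_{\lceil\log d\rceil}$ from the notation preceding the algorithm --- so the recursive call carries exponent $d' = \lceil\log d\rceil$. Thus the exponents along the path are $d_0 = d = \log_2|\data|$ and $d_{i+1} = \lceil\log d_i\rceil$, and the recursion stops as soon as some $d_i$ equals $1$. Because $1 \le \lceil\log d\rceil < d$ for every integer $d \ge 2$, the sequence $(d_i)$ strictly decreases down to $1$ in finitely many steps; writing $g(d)$ for the number of recursive calls we get $g(1) = 0$ and $g(d) = 1 + g(\lceil\log d\rceil)$ for $d\ge 2$, i.e. $g(d)$ counts how many times one must iterate $x\mapsto\lceil\log x\rceil$ to bring $d$ to $1$. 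With $|\data| = 2^d$ this is exactly the iterated logarithm $\log^{*}|\data|$ in the convention used in the paper, which gives $T = \log^{*}|\data|$.

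I do not expect a real obstacle here. The two points that merely require care are: (i) verifying that the recursive subproblem genuinely lives on a domain of size $\Theta(d)$ (this is what forces $d' = \lceil\log d\rceil$ and not something larger), which follows from the correctness of the reduction to agreement-prefix lengths; and (ii) fixing the $O(1)$ normalization --- whether the base-case invocation is itself counted as a ``recursive call'', and which threshold defines $\log^{*}$ --- so that the count comes out to exactly $\log^{*}|\data|$ rather than $\log^{*}|\data|\pm 1$. I would settle (ii) by adopting the convention under which the recurrence $g(1)=0$, $g(d)=1+g(\lceil\log d\rceil)$ evaluates to $\log^{*}(2^d)$, and stating it explicitly; everything else is immediate from the pseudocode and the recurrence.
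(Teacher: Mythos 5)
Your proposal is correct and follows essentially the same route as the paper: the paper's proof likewise observes that the recursive call at Line 13 shrinks the domain from $2^d$ to $2^{\lceil \log d \rceil} < 2d$, so iterating $d \mapsto \lceil \log d \rceil$ reaches $d=1$ after at most $\log^{*}|\data|$ recursive calls. Your additional care about the recurrence $g(1)=0$, $g(d)=1+g(\lceil\log d\rceil)$ and the $\pm 1$ normalization of $\log^{*}$ is a harmless refinement of the same argument (the paper itself states the bound as ``no more than $\log^{*}|\data|$'').
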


\begin{proof}
	Algorithm~\ref{alg:rep-median} reaches its base case when invoked with parameter $d = 1$. At each successive recursive call (Line~\ref{line:recursive}), the domain size $2^d$ is reduced to $2^{\lceil \log d \rceil} < 2d$, and so $d=1$ after no more than $T = \log^{*}|\data|$ recursive calls.
\end{proof}

\begin{lemma}[Sample Complexity]\label{lem:sample-complexity}
	Let $\tau, \delta, \rho \in [0,1]$. Let $D$ be a distribution over $\data$, with $|\data| = 2^d$. Then
	$\rmedian_{\rho, d, \tau, \delta}$ has sample complexity 
		$$n \in O\left( \left( \frac{1}{\tau^2(\rho - \delta)^2}\right) \cdot \left(\frac{3\log(2/\delta_0)}{\tau^2}\right)^{\log^{*}|\data|} \right)$$
\end{lemma}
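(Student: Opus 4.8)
The plan is to read off from Algorithm~\ref{alg:rep-median} the recursion relating $n_d$, the sample complexity of $\rmedian_{\rho,d,\tau,\delta}$, to that of its single recursive call, solve it, and then translate the auxiliary parameters $\rho_0,\delta_0$ back to $\rho,\delta$ using that $\tilde{\Omega}$ absorbs factors polynomial in $\log^{*}|\data|$. For $d>1$ the algorithm draws $n_d$ samples from $D$, breaks them into $n_d/n_m$ blocks, and runs $\median_{\tau,\delta_0}$ on each block, producing a sample $\vec m$ from $D_d$ with $|\vec m| = n_d/n_m$. Every subsequent step draws from $\vec m$: the recursive call $\rmedian_{\rho,\lceil\log d\rceil,\tau,\delta}(\vec s_{rm})$ needs $|\vec s_{rm}| = n_{\lceil\log d\rceil}$, and each element of $\vec s_{rm}$ is the agreement-length of a fresh pair of elements of $\vec m$, consuming $2\,n_{\lceil\log d\rceil}$ of them; $\rhh_{\rho_0,v,\epsilon}$ needs $n_h$ more; each query $p_v$ needs $n_{sq}$ more; and the closing $\rstat$ step needs $O(n_m)$ more. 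Since $\rhh$ runs with $v=5/12+\tau$, $\epsilon=1/12$ it returns $v'$-heavy-hitters for $v'\in[1/3+\tau,\,1/2+\tau]$, so $|V|\le 3$ is an absolute constant. Hence $|\vec m|\ge 2\,n_{\lceil\log d\rceil}+|V|\,n_{sq}+n_h+n_m$, and since $n_d=n_m|\vec m|$ this is the recursion
\[
 n_d \;=\; n_m\,|\vec m| \;\ge\; n_m\bigl(2\,n_{\lceil\log d\rceil} + |V|\,n_{sq} + n_h + n_m\bigr),
\]
which, as $|V|\le 3$, has the form $n_d \ge 2\,n_m\cdot n_{\lceil\log d\rceil} + O\bigl(n_m(n_h+n_{sq})\bigr)$, with base case $n_1 \in \tilde{\Omega}(1/(\tau^2(\rho_0-\delta_0)^2))$ coming from the single call $\rstat_{\tau/2,\rho_0,\phi_0}$ in the $d=1$ branch (Theorem~\ref{thm:reproducible-sq-oracle}).

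Next I would unroll this recursion to depth $T=\log^{*}|\data|$ (Lemma~\ref{lem:termination}). The per-level multiplicative factor is
\[
 2\,n_m \;=\; 6(1/2-\tau)\,\frac{\ln(2/\delta_0)}{\tau^2} \;\le\; \frac{3\ln(2/\delta_0)}{\tau^2} \;=:\; B
\]
by Lemma~\ref{lem:simple-median}, which is the source of the constant $3$ in the statement. The per-level additive overhead is $n_m(n_h+n_{sq})$; here $n_h\in\tilde{O}(1/\rho_0^2)$ by Corollary~\ref{cor:rHeavyHitters-sample-complexity-constants} --- using that $v,\epsilon$ are constants and, importantly, that the cost of $\rhh$ does not depend on the domain size, so the shrinking recursive instances do not blow it up --- while $n_{sq}\in\tilde{\Omega}(1/(\tau^2(\rho_0-\delta_0)^2))$ by Theorem~\ref{thm:reproducible-sq-oracle}, so both overheads are $\tilde{O}(1/(\tau^2(\rho_0-\delta_0)^2))$. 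Solving a recursion $a_d \ge B\,a_{\lceil\log d\rceil}+c$ over $T$ levels gives, up to constants, $a \ge B^{T}(a_{\mathrm{base}}+c/(B-1))$, and plugging in $a_{\mathrm{base}},\,c/(B-1)\in\tilde{\Omega}(1/(\tau^2(\rho_0-\delta_0)^2))$ yields
\[
 n_d \;\in\; \tilde{\Omega}\!\left( \left(\frac{3\log(2/\delta_0)}{\tau^2}\right)^{\log^{*}|\data|}\cdot\frac{1}{\tau^2(\rho_0-\delta_0)^2} \right).
\]
Substituting $\rho_0=\Theta(\rho/\log^{*}|\data|)$ and $\delta_0\ll\rho_0$ and absorbing the resulting $\log^{*}|\data|$-powers into $\tilde{\Omega}$ turns the last factor into $\tilde{\Omega}(1/(\tau^2(\rho-\delta)^2))$, matching the claimed bound.

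I expect the real work to be bookkeeping about what $\tilde{\Omega}$ may and may not hide: the factor $\log(2/\delta_0)$ sits inside the $\log^{*}|\data|$-th power, so it must be carried verbatim through the whole unrolling, whereas the passage from $(\rho_0,\delta_0)$ to $(\rho,\delta)$ and the domain-size-independence of $n_h$ are exactly what keep the leading factor at $1/(\tau^2(\rho-\delta)^2)$ rather than something growing with $|\data|$. Two steps involve something genuine rather than arithmetic. First, confirming $|V|=O(1)$: this is the purpose of the otherwise odd choices $v=5/12+\tau$, $\epsilon=1/12$, which push the heavy-hitter threshold strictly above $1/3$ so that at most three heavy-hitters can exist; without this the per-level overhead would be a depth-dependent number of subroutine calls and the final bound would pick up an extra $\log^{*}$-power. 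Second, checking that the accumulated additive overheads $\sum_{i\le T}B^{i}c = O(B^{T}c)$ stay at the same order as $B^{T}a_{\mathrm{base}}$, which holds because $B>1$; one should also note that $2n_m$ is $\Theta(\log(2/\delta_0)/\tau^2)$ only for $\tau$ bounded away from $1/2$, which is the regime of interest.
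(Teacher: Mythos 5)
Your proposal is correct and follows essentially the same route as the paper: it isolates Line 9 as the only place samples from $D$ are consumed, sets up the recursion $n_d = n_m\bigl(2\,n_{\lceil\log d\rceil} + n_h + O(n_{sq})\bigr)$ using $|V|\le 3$, and unrolls it over $\log^{*}|\data|$ levels to get $(2n_m)^{\log^{*}|\data|}(n_h+n_{sq})$, which is exactly the paper's argument. The only differences are harmless bookkeeping (you charge the final query and the base case slightly differently, and you spell out the $\rho_0,\delta_0\to\rho,\delta$ translation that the paper leaves implicit inside the $\tilde{\Omega}$).
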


\begin{proof}
	We begin by arguing that, for $d > 1$, $\rmedian_{\rho, d, \tau, \delta}$ has sample complexity $n_m(2n_{\lceil \log d \rceil} + n_h +  4n_{sq})$.
	First, observe that Line~\ref{line:medians} of Algorithm~\ref{alg:rep-median} is the only line that uses the sample $\vec s$ directly, and it uses $\vec s$ to generate a sample $\vec m$ of size $|\vec s|/n_m$ from $D^m$. The remaining subroutines use subsamples from $\vec m$. Therefore, if the sample complexity of the remaining subroutines is bounded by some value $N$, then $\rmedian_{\rho, d, \tau, \delta}$ will have sample complexity $Nn_m$. We now consider the sequence of subroutines and their respective complexities. 
	
	\begin{enumerate}
		\item Line~\ref{line:recursive}: $\rmedian_{\rho, \lceil \log d \rceil, \tau, \delta}$ requires $n_{\lceil \log d \rceil}$ examples from $D_{\lceil \log d \rceil}$. Line~\ref{line:genl} generates an example from $D_{\lceil \log d \rceil}$ from 2 examples from $D^m$, and so the call to $\rmedian_{\rho, \lceil \log d \rceil, \tau, \delta}$ at Line~\ref{line:recursive} contributes $2n_{\lceil \log d \rceil}$ to the sample complexity.
		\item Line~\ref{line:hh1} and Line~\ref{line:hh2}: $\rhh_{\rho_0, v, \epsilon }$ requires $n_h$ examples from $D^m$
		\item Line~\ref{line:sqloop}: the at most 3 calls to $\rstat_{\rho_0, \tau, \phi_v}$ require $3n_{sq}$ examples from $D^m$
		\item Line~\ref{line:choosemedianq}: $\rstat_{  \rho_0, \tau,\phi_{s_0}}$ requires $n_{sq}$ examples from $D^m$ 
	\end{enumerate}
	Therefore $\rmedian_{\rho, d, \tau, \delta}$ uses $n = n_m(2n_{\lceil \log d \rceil} + 2n_h + 4n_{sq})$ examples from $D$. 
	
	In the base case, the entire contribution to the sample complexity comes from the call to $\median_{\tau, \delta_0}$, which requires $n_m$ examples from $D_1$. Unrolling the recursion, we have
	
	\begin{equation*}
		\begin{split}
			n 
			&\in O\left( (2n_m)^{\log^{*}|\data|}(n_h + n_{sq}) \right) 
			\\&\in \tilde{O}\left( \left( \frac{1}{\tau^2(\rho - \delta)^2}\right) \cdot \left(\frac{3\log(2/\delta_0)}{\tau^2}\right)^{\log^{*}|\data|} \right).
		\end{split}
	\end{equation*}
	
\end{proof}

\begin{lemma}[Accuracy]\label{lem:accuracy}
	Let $\rho, \tau, \delta \in [0,1]$ and let $n$ denote the sample complexity proved in Lemma~\ref{lem:sample-complexity}. Let $\vec s$ be a sample of elements drawn i.i.d. from $D$ such that $|\vec s| \in \Omega(n)$. Then $\rmedian(\vec s)$ returns a $\tau$-approximate median of $D$ except with probability $\delta$. 
\end{lemma}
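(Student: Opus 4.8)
The plan is to induct on the domain parameter $d$, conditioning throughout on a single ``good event'' $\mathcal{G}$ that every subroutine invoked across all of the (at most $\log^*|\data|$ many) recursion levels behaves as promised. Fix notation: let $I_D\subseteq\data$ be the interval of $\tau$-approximate medians of $D$ (the elements lying between the $(1/2-\tau)$- and $(1/2+\tau)$-percentiles of $D$), and recall from Lemma~\ref{lem:simple-median} that each invocation of $\median_{\tau,\delta_0}$ on a fresh subsample of $\vec s$ is an independent draw from a distribution $D^m$ placing mass at least $1-\delta_0$ on $I_D$; in particular Line~10 populates $\vec m$ with an i.i.d.\ sample from $D^m$. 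The pivotal structural fact, which I would isolate as a claim, is that any $y\in\data$ with $\Pr_{x\sim D^m}[x\le y]\in[2\delta_0,\,1-2\delta_0]$ must itself lie in $I_D$: since $D^m$ assigns the complement of $I_D$ mass at most $\delta_0$, such a $y$ has an element of $I_D$ weakly to its left and one strictly to its right, and $I_D$ is an interval. So it suffices to show that, on $\mathcal{G}$, the value returned by $\rmedian(\vec s)$ has $D^m$-CDF bounded away from $0$ and $1$ by an absolute constant --- we will in fact land it in roughly $[1/6,\,2/3]$, which is inside $[2\delta_0,1-2\delta_0]$ once $\delta_0$ is suitably small.

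Next I would set up the recursion on the failure probability $a(d)$ of $\rmedian_{\rho,d,\tau,\delta}$. For the base case $d=1$: $\rstat_{\tau/2,\rho_0,\phi_0}(\vec s)$ returns $p_0$ within $\tau/2$ of $\Pr_{x\sim D}[x=0]$ except with its own small failure probability, and a one-line case analysis around the threshold $1/2-\tau/2$ shows the returned bit is a $\tau$-approximate median on the two-point domain. For $d>1$ I would show that, conditioned on the success of the current level's subroutines \emph{and} on the recursive call $\rmedian_{\rho,\lceil\log d\rceil,\tau,\delta}(\vec s_{rm})$ returning correctly, the output is a $\tau$-approximate median of $D$; this gives $a(d)\le a(\lceil\log d\rceil)+(\text{current level's non-recursive failure})$. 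Since by Lemma~\ref{lem:termination} there are at most $\log^*|\data|$ levels, each performing $O(1)$ calls to $\rhh_{\rho_0,\cdot,\cdot}$ and $\rstat_{\cdot,\rho_0,\cdot}$ whose correctness-failure probabilities are governed by $\rho_0\in O(\rho/\log^*|\data|)$ and $\delta_0$ (both chosen small), a union bound yields $a(d)\le\delta$. Note the $\median_{\tau,\delta_0}$ calls of Line~10 do not contribute here: the $(1-\delta_0)$-concentration of $D^m$ on $I_D$ holds by construction, and the downstream subroutines only need genuine i.i.d.\ samples from $D^m$, which they receive.

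The heart of the inductive step is a prefix-tree argument on $D^m$. Write $w(v)$ for the $D^m$-mass of the subtree rooted at node $v$, and let $L$ be the length of the longest common prefix of two independent $D^m$-draws, so $\Pr[L\ge k]=\sum_{v\text{ at level }k}w(v)^2$ and $\vec s_{rm}$ is an i.i.d.\ sample from the law of $L$. When the recursive call succeeds, $\ell$ is a $\tau$-approximate median of $L$, which forces $\Pr[L\ge\ell]\ge 1/2-\tau$ and hence $\max_{v\text{ at level }\ell}w(v)\ge 1/2-\tau$; so $\rhh_{\rho_0,5/12+\tau,1/12}(\vec s_\ell)$ returns the level-$\ell$ nodes whose subtree mass exceeds a random cutoff $v'\in[1/3+\tau,1/2+\tau]$ (at most three of them), and --- except for an $O(\tau)$-probability event in $v'$, which we charge to $\delta$ --- this list contains a node carrying most of the collision mass. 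The per-node test $1/3\le p_v\le 2/3$, with $p_v$ within $\tau$ of $\Pr_{x\sim D^m}[\mathrm{prefix}_\ell(x)\le v]$, then pins down a node $s$ whose subtree straddles the median: the $D^m$-mass strictly left of it is at most $1/3$, its subtree mass is at least $1/3+\tau$, and the $D^m$-CDF at its right endpoint lies in $[1/3-\tau,\,2/3+\tau]$. Finally, $\rstat_{\tau,\rho_0,\phi_{s_0}}$ together with the decision threshold $1/6-2\tau$ returns whichever of the padded endpoints $s_0=s\|0\cdots0$, $s_1=s\|1\cdots1$ has $D^m$-CDF inside $[1/6-3\tau,\,2/3+\tau]$; by the claim of the first paragraph this endpoint lies in $I_D$, closing the induction.

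I expect the prefix-tree step to be the main obstacle. One must check that the particular constants --- $v=5/12+\tau$ and $\epsilon=1/12$ in the heavy-hitters call, the $[1/3,2/3]$ gate, and the $1/6-2\tau$ decision threshold --- are mutually consistent so that such a node $s$ provably exists, is found, and yields a correctly-placed endpoint, all while bookkeeping how the additive slacks accumulate: $\tau$ from the non-reproducible median, $\approx\tau$ from each $\rstat$, $\epsilon=1/12$ from $\rhh$'s random cutoff, and $\delta_0$ from $D^m$'s imperfect concentration on $I_D$. The delicate regime is when $D^m$ is highly atomic, so a single node (possibly a leaf) carries most of the mass and the random cutoff $v'$ can occasionally exceed every level-$\ell$ subtree mass --- the $O(\tau)$-per-level event flagged above, which must be absorbed into $\delta$. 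Once those constants and slacks are pinned down, the union bound over the $\log^*|\data|$ levels is routine.
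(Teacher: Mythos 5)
Your plan tracks the paper's proof almost step for step: condition on the success of every subroutine across the $\log^*|\data|$ levels, settle the base case with the SQ, use the collision probability $\Pr[L\ge\ell]\ge 1/2-\tau$ to extract a level-$\ell$ prefix of $D^m$-mass at least $1/2-\tau$, run $\rhh$ with $v-\epsilon\approx 1/3$, gate the candidates by an SQ into $[1/3,2/3]$, choose between the padded endpoints $s_0,s_1$ via the $1/6-2\tau$ threshold, and finish by sandwiching the returned endpoint inside the interval of $\tau$-approximate medians before union-bounding with $\rho_0,\delta_0$ small. Your two deviations are cosmetic: reading $\phi_v$ as a CDF query matches the pseudocode (the paper's prose reads it as the point mass of the prefix), and your ``CDF in $[2\delta_0,1-2\delta_0]$ implies membership in $I_D$'' fact is a clean substitute for the paper's conditioning on every element of $\vec m$ being an approximate median.

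The genuine gap is exactly the step you defer as ``the main obstacle'': the existence of a node in $V$ whose estimate lands in the $[1/3,2/3]$ gate. The median property of $\ell$ only gives you a level-$\ell$ prefix of mass at least $1/2-\tau$; nothing in your sketch bounds that mass away from $1$. In the atomic regime you yourself flag (e.g.\ $D^m$ uniform on two strings that first differ at coordinate $\ell+1$, so $\ell$ is a legitimate approximate median of $L$ while the unique heavy level-$\ell$ prefix has mass $1$), the heavy node is found by $\rhh$ but is rejected by the gate with certainty, under either the CDF or the point-mass reading; this failure is determined by $D^m$ and the output of the recursive call, not by the randomness of the cutoff $v'$, so it cannot be absorbed into the $O(\tau)$-probability event over $v'$ that you propose to charge to $\delta$. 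The paper closes this step differently: from $1/2-\tau\le\Pr[x_{1|\ell}=x_{2|\ell}]\le 1/2+\tau$ it asserts a prefix whose mass is at least $1/2-\tau$ \emph{and at most} $1/2+\tau$, which then passes the gate after the $\tau$-tolerance estimate. That two-sided bound (or some substitute, such as an argument constraining which level the recursion can return or a modified gate) is the missing ingredient in your plan; note also that the collision bound alone only yields a maximum prefix mass of $\sqrt{1/2+\tau}$, so the paper's upper bound is not something you can recover for free. Until that existence step is supplied, your inductive step is not closed.
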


\begin{proof}
	First, we prove that $\rmedian(\vec s)$ returns a $\tau$-approximate median of $D$, conditioned on the success of all recursive calls and subroutines.  
	We proceed inductively. In the base case we have that $|\data| = 2$, and therefore at least one of the two elements in $\data$ must be a $\tau$-approximate median. The statistical query performed in line 6 of Algorithm~\ref{alg:rep-median} uses sample $\vec s$ to estimate the fraction of $D_1$ supported on $0$, to within tolerance $\tau/2$, so long as $|\vec s| \geq n_m$. This holds from Lemma~\ref{lem:sample-complexity}, and so a $\tau$-approximate median for $D_1$ is returned in the base case.
	
	It remains to show that if a $\tau$-approximate median for $D_{\lceil \log d \rceil}$ is returned at Line~\ref{line:recursive} of Algorithm~\ref{alg:rep-median}, that a $\tau$-approximate median for $D$ is returned.  
	We first note that, except with probability $\delta_0 \cdot |\vec s|/n_m$, all elements of $\vec m$ are $\tau$-approximate medians of $D$. To generate the sample supplied to $\rmedian$ at Line 13, we pair up the elements of $\vec m$ to obtain the $|\vec s|/(2n_m)$ $l_i$, which denote the longest prefix on which a pair of elements from $\vec m$ agree. Then $\vec s_{rm}$ constitutes a sample of size $n_{\lceil \log d \rceil}$ drawn i.i.d. from $D_{\lceil \log d \rceil}$ and by inductive assumption the call to $\rmedian$ at Line~\ref{line:recursive} returns a $\tau$-approximate median of $D_{\lceil \log d \rceil}$. Therefore we have that 
	$\pr_{x_1, x_2 \sim D^{m}}[x_{1 | \ell} = x_{2 | \ell}] \geq 1/2 - \tau$ and $\pr_{x_1, x_2 \sim D^{m}}[x_{1 | \ell + 1} = x_{2 | \ell + 1}] <  1/2  + \tau$. 
	It follows that there must exist a prefix $s$ of length $\ell$ such that $\Pr_{x\sim D^{m}}[x_{|\ell} = s] \geq 1/4$.
	
	If $\ell = d$, then $x_{|\ell} = x$, and so any prefix $s$ such that $\Pr_{x\sim D^{m}}[x_{|\ell} = s] \geq 1/4$ is a $3/8$-median of $D^{m}$ and therefore a $\tau$-median of $D$. In this case $s$ is returned at Line~\ref{line:lisd}. 
	
	For the remainder of the proof, we assume $\ell < d$. We argue that there must exist a prefix $s$ of length $\ell$ or $\ell + 1$ for which $1/4 \leq \Pr_{x \sim D^{m}}[x_{| |s|} = s] \leq 3/4$. We already have that there exists a prefix $s$ of length $\ell$ such that $\Pr_{x\sim D^{m}}[x_{\ell} = s] \geq 1/4$. Suppose that $\Pr_{x \sim D^{m}}[x_{| \ell} = s] > 3/4$. Now suppose that one of $s||0$ or $s||1$ had probability greater than $3/4$ under $D^{m}$. Then it must be the case that $\Pr_{x_1, x_2 \sim D^{m}}[x_{1 |\ell + 1} = x_{2 | \ell + 1}] > 9/16$, and so $\Pr_{\ell' \sim D_{\lceil \log d \rceil}}[\ell' \leq \ell] < 1 - 9/16 = 7/16$, contradicting that $\ell$ is a $\tau$-approximate median of $D_{\lceil \log d \rceil}$. So both $s||0$ and $s||1$ must have probability less than 3/4 under $D^{m}$. Because $s$ has probability at least 3/4, it follows that at least one of $s||0$ and $s||1$ must have probability at least 1/4 under $D^{m}$, and so we have that there exists a prefix $s'$ of length $\ell+1$ such that $1/4 \leq \Pr_{x \sim D^{m}}[x_{|\ell+1} = s'] \leq 3/4$.
	
	Now that we have the existence of such a prefix, we will argue that when the loop of Line~\ref{line:sqloop} terminates, $s$ is a prefix satisfying
	$$1/4 - \tau \leq \pr_{x_1 \sim D^{m}}[x_{1 | \ell} = s] \leq 3/4+ \tau.$$ 
	Observe that the calls to $\rhh$ at Line~\ref{line:hh1} and Line~\ref{line:hh2} identify a common prefix $s$ such that $\pr_{x_1\sim D^{m}}[x_{1 | \ell} = s] \geq 1/4$. This follows from taking $v = 5/16$, $\epsilon = 1/16$, and the fact that the sample $\vec s_{\ell}$ and $\vec s_{\ell+1}$ constitute i.i.d. samples of size $n_h$ drawn from $D^m_{|\ell}$ and $D^m_{|\ell +1}$ respectively (where we use $D^{m}_{|\ell}$ to indicate the distribution induced by sampling from $D^{m}$ and returning only the first $\ell$ bits). Then we have from the proof of Lemma~\ref{lem:rHeavyHitters-correctness} that all $v-\epsilon = 1/4$-heavy hitters from $D^m_{|\ell}$ and $D^m_{|\ell +1}$ are contained in the set $V$.
	The loop beginning at Line~\ref{line:sqloop} will use reproducible statistical queries to estimate the probability of each $v \in V$ under $D^{m}_{| |v|}$. If the estimated probability $p_v \in [1/4, 3/4]$, then $v$ is stored in $s$, and so the last such string visited by the loop is the value of $s$ upon termination. 
	
	Now we show that if $s_0 = s || 0 \cdots 0$ is returned at Line~\ref{line:returnmedian}, then it is a $\tau$-approximate median of $D$, otherwise $s_1 = s || 1 \cdots 1$ is a $\tau$-approximate median. Conceptually, we partition the domain $\data$ into three sets: 
	\begin{enumerate}
		\item $C_{s_0}  = \{ x \in \data : x < s_0\}$
		\item $C_{s}  = \{ x \in \data : s_0 \leq x \leq s_1 \}$
		\item $C_{s_1}  = \{ x \in \data : x > s_1\}$
	\end{enumerate}   
	Because $s$ satisfies $1/4 - \tau \leq \pr_{x \sim D^{m}}[x_{| |s|} = s] \leq 3/4 + \tau$, it must be the case that $D^m$ assigns probability mass at least $1/4 - \tau$ to the union $C_{s_0} \cup C_{s_1}$. Then it holds that at least one of $C_{s_0}$ and $C_{s_1}$ is assigned probability mass at least $1/8 - \tau/2$. The statistical query made at Line~\ref{line:choosemedianq} estimates the probability mass assigned to $C_{s_0}$ by $D^m$ to within tolerance $\tau$, so if $s_0$ is returned, it holds that $\Pr_{x \sim D^m}[x < s_0] \geq 1/8 - 3\tau$. Because we know $\Pr_{x \sim D^m}[x \in C_s] \geq 1/4 - \tau$, we then also have that $\Pr_{x \sim D^m}[x \geq s_0] \geq 1/4 - \tau$. Because $D^m$ is a distribution over $\tau$-approximate medians of $D$, we have that $s_0$ is a $\tau$-approximate median of $D$ as desired. If $s_0$ is not returned, then it must be the case that $\Pr_{x \sim D^m}[x > s_1] \geq 1/8 - 3\tau$, and a similar argument shows that $s_1$ must be a $\tau$-approximate median of $D$.
	
	Finally, we argue that all recursive calls and subroutines are successful, except with probability $\delta$. Failures can occur exclusively at the following calls.
	\begin{itemize}
		\item Line~\ref{line:medians}: the $\log^{*}|\data|\cdot |\vec s|/(n_m)$ calls to $\median_{\tau, \delta_0}$ 
		\item Line~\ref{line:recursive}: the $\log^{*}|\data|$ recursive calls to $\rmedian_{\rho, \lceil \log d \rceil, \tau, \delta}$
		\item Line~\ref{line:hh1} and Line~\ref{line:hh2}: the $2\log^{*}|\data|$ calls to $\rhh_{\rho_0, v, \epsilon }$
		\item Line~\ref{line:sqloop}: the (at most) $4\log^{*}|\data|$ calls to $\rstat_{ \rho_0, \tau, \phi_{v}}$
		\item Line~\ref{line:choosemedianq}: the $\log^{*}|\data|$ calls to $\rstat_{ \rho_0, \tau, \phi_{s_0}}$
	\end{itemize}
	Calls to $\median_{\tau, \delta_0}$ dominate the total failure probability, and so taking 
	$\delta_0 \in O( \frac{\delta }{|\vec s| \log^{*}|\data|})$ suffices to achieve failure probability $\delta$.  
\end{proof}

\begin{lemma}[Reproducibility]\label{lem:reproducibility}
	Let $\rho, \tau, \delta \in [0,1]$ and let $n$ denote the sample complexity proved in Lemma~\ref{lem:sample-complexity}. Let $\vec s$ be a sample of $O(n)$ elements drawn i.i.d. from $D$. Then $\rmedian_{\rho, d, \tau, \delta}$ is $\rho$-reproducible. 	
\end{lemma}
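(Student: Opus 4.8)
The plan is to show that reproducibility composes through the subroutine calls of Algorithm~\ref{alg:rep-median} and then to union bound the (rare) reproducibility failures over all such calls in the recursion. Fix two parallel runs of $\rmedian_{\rho, d, \tau, \delta}$ on independent i.i.d.\ samples $\vec s^{(1)}, \vec s^{(2)}$ from $D$ but with a common internal randomness string $r$. Every random choice made by the algorithm --- the offset inside each $\rstat$ call, the cutoff $v'$ inside the $\rhh$ call, and the randomness consumed by the recursive call --- is read from $r$, hence is identical in the two runs; the only difference is the sample. The key structural observation I would establish first is that, given these shared random bits, the output of $\rmedian_{\rho,d,\tau,\delta}$ is a \emph{deterministic} function of the outputs of (i) the recursive call to $\rmedian$ (Line~13), (ii) the call to $\rhh_{\rho_0, v, \epsilon}$ (Line~16), (iii) the at most three calls to $\rstat_{\tau,\rho_0,\phi_v}$ in the loop over $V$ (Line~20), and (iv) the final $\rstat_{\tau,\rho_0,\phi_{s_0}}$ call (Line~28) --- or, in the base case $d=1$, of the single $\rstat_{\tau/2,\rho_0,\phi_0}$ call (Line~6). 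Hence if every one of these subroutine calls returns the same value in run~$1$ as in run~$2$, the two runs produce the same output. The non-reproducible calls to $\median$ (Line~9) are harmless: $\median$ is used purely to convert $\vec s$ into the derived sample $\vec m$, and since $\vec s$ is i.i.d.\ from $D$ and the partition of $\vec s$ into blocks is fixed, $\vec m$ is i.i.d.\ from the distribution $D^m$, which depends only on $D$ and $(\tau,\delta_0)$ --- not on $r$ and not on which run we are in. So the reproducible subroutines downstream always receive genuine i.i.d.\ samples from run-independent distributions.

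Next I would prove, by induction on $d$, that the two runs of $\rmedian_{\rho,d,\tau,\delta}$ disagree with probability at most $c \cdot (\text{number of recursive levels at or below } d) \cdot \rho_0$ for an absolute constant $c$, with $\rho_0 = \Theta(\rho/\log^{*}|\data|)$ fixed throughout the recursion. The base case $d=1$ is immediate from Theorem~\ref{thm:reproducible-sq-oracle}: the lone $\rstat$ call is $\rho_0$-reproducible and, by Lemma~\ref{lem:sample-complexity}, is supplied with enough samples to meet that guarantee, so the runs agree except with probability $\rho_0$. For the inductive step I would order the subroutine calls in the order they are executed and, for each, bound the probability that it is the \emph{first} call whose outputs differ across the two runs. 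Conditioned on all earlier subroutine calls having agreed, the call in question is invoked with identical parameters and with fresh, mutually independent i.i.d.\ samples from the same, run-independent distribution: here we use that $\vec s_h, \vec s_q, \vec s_{s_0}$, and $\vec s_{rm}$ are drawn from disjoint blocks of $\vec m$ (so conditioning on earlier outputs does not perturb their distribution), that the common prefix length $\ell$ and the heavy-hitter set $V$ are then the same in both runs, and that the intermediate distributions $D^m$ and $D_{\lceil \log d \rceil}$ are run-independent. Under this conditioning the call's own reproducibility guarantee applies: Theorem~\ref{thm:reproducible-sq-oracle} for each $\rstat$ call (failure $\le \rho_0$), Lemma~\ref{lem:rHeavyHitters-correctness} for the $\rhh$ call (failure $\le \rho_0$), and the inductive hypothesis for the recursive call. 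Summing, $\rmedian_{\rho,d,\tau,\delta}$ disagrees with probability at most $(\text{number of local calls at this level})\cdot \rho_0$ plus the inductive bound for domain exponent $\lceil\log d\rceil$, which closes the induction.

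It then remains to count the subroutine calls. By Lemma~\ref{lem:termination} the recursion has depth $\log^{*}|\data|$, and at each level there is one $\rhh$ call and at most four $\rstat$ calls --- the loop over $V$ contributes at most three, since at most three strings can have mass at least $v-\epsilon \ge 1/3$ under a distribution, and the $\phi_{s_0}$ query contributes one more; the base level contributes a single $\rstat$ call. So the total number of reproducible-subroutine invocations over the whole recursion is $O(\log^{*}|\data|)$, each $\rho_0$-reproducible. Unrolling the induction (equivalently, taking one union bound over all of these calls) shows the two runs disagree with probability $O(\log^{*}|\data|) \cdot \rho_0$, which is at most $\rho$ once the constant hidden in $\rho_0 = \Theta(\rho/\log^{*}|\data|)$ is chosen appropriately. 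This yields $\rho$-reproducibility of $\rmedian_{\rho,d,\tau,\delta}$.

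The step I expect to be the crux is the conditioning argument in the second paragraph. Because the queries $\phi_v$ issued in the loop and the prefix length $\ell$ that dictates where $\rhh$ searches are themselves random and depend on the outputs of earlier subroutines, one cannot treat the subroutine calls as a fixed, independent list of events; the honest route is to expose the internal randomness in execution order and argue that, on the event that everything resolved so far matched across the two runs, the next call is a well-defined fixed invocation of a reproducible primitive on i.i.d.\ inputs --- which is precisely where the disjointness of the $\vec m$-blocks and the run-independence of $D^m$ and $D_{\lceil \log d\rceil}$ do the real work. A secondary, bookkeeping obstacle is verifying that every invocation is handed a sample large enough to satisfy the hypotheses of Theorem~\ref{thm:reproducible-sq-oracle} and Lemma~\ref{lem:rHeavyHitters-correctness}; this is exactly what Lemma~\ref{lem:sample-complexity} is set up to deliver, and I would simply invoke it.
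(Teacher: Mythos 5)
Your proposal is correct and follows essentially the same route as the paper's proof: an induction over the $\log^{*}|\data|$ levels of recursion, conditioning in execution order on agreement of the recursive call, the $\rhh$ output $V$ (with $|V|\le 3$), and the $\rstat$ calls, then applying each primitive's $\rho_0$-reproducibility and a union bound with $\rho_0 = \Theta(\rho/\log^{*}|\data|)$. Your explicit remarks on why the non-reproducible $\median$ calls are harmless (they only produce the run-independent intermediate distribution $D^m$) and on the disjointness of the $\vec m$-blocks make precise points the paper leaves implicit, but the argument is the same.
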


\begin{proof}
	We prove the lemma by inductive argument. First, we observe that reproducibility of the value returned in the base case depends only on the value $p_0 \gets \rstat_{\rho_0, \tau/2 ,\phi_0}(\vec s)$ in Line~\ref{line:basecaseq}. Therefore, reproducibility in the base case follows from the $\rho_0$-reproducibility of $\rstat_{\rho_0, \tau/2, \phi_0}$.
	
	 We now argue that if the $i+1$th recursive call is $\rho$-reproducible, that the $i$th recursive call is $(\rho + 5\rho_0)$-reproducible. 
	
	Two parallel executions of the $i$th level of recursion, given samples $\vec s_1$ and $\vec s_2$ drawn i.i.d. from the same distribution $D$, will produce the same output so long as the following values are the same:
	\begin{enumerate}
		\item $\ell \gets \rmedian_{\rho, d, \tau, \delta}(\vec s_{rm})$ at Line~\ref{line:recursive}
		\item $V \gets \rhh_{\rho_0, v, \epsilon }(\vec s_{\ell})$ at Line~\ref{line:hh1}
		\item $V \gets V \cup \rhh_{\rho_0, v, \epsilon }(\vec s_{\ell + 1})$ at Line~\ref{line:hh2}
		\item $s \gets \rstat_{ \rho_0,\tau, \phi_{s_0}}(\vec s_{meds} )$ when the loop at Line~\ref{line:sqloop} terminates
		\item $p_{s_0} \gets \rstat_{ \rho_0, \tau, \phi_{s_0}}(\vec s_{s_0})$ at Line~\ref{line:choosemedianq}
	\end{enumerate}
	produce the same value. We have that 1 holds by inductive assumption. 
	
	Conditioning on 1, the calls to $\rhh_{\rho_0, v, \epsilon }$ are made on samples drawn i.i.d. from the same distribution, and so the $\rho_0$-reproducibility of $\rhh_{\rho_0, v, \epsilon }$ guarantees that $V$ contains the same list of heavy-hitters in both runs except with probability $2\rho$. 
	
	Conditioning on both 1 and 2, it follows that the loop at Line~\ref{line:sqloop} iterates over the same strings $V$, and so both runs make the same sequence of statistical queries $\rstat_{\tau, \rho_0, \phi_v}$. From conditioning on 2, and the values of $v$ and $\epsilon$, we have that $|V| \leq 3$, and so the $\rho_0$-reproducibility of $\rstat_{\tau, \rho_0, \phi_{s_0}}$ gives us that sequence of values $p_v \gets \rstat_{\rho_0, \tau,\phi_{v}}(\vec s_q)$ is the same in both runs, except with probability $3_{\rho_0}$. 
	
	Finally, conditioning on 1, 2, and 3, the values of $s_0$ and $s_1$ are the same across both runs, and so the same statistical query $\rstat_{\rho_0, \tau, \phi_{s_0}}$ is made in both runs. Whether $s_0$ or $s_1$ is returned depends only on the value $r_{s_0} \gets \rstat_{ \tau, \rho_0, \phi_{s_0}}(\vec s_{s_0})$, and so the $\rho_0$-reproducibility of $\rstat_{ \rho_0,\tau, \phi_{s_0}}$ gives us that the same string is returned by both executions. A union bound over all failures of reproducibility then gives us that the $i$th recursive call will be $(\rho + 6\rho_0)$-reproducible. 
	
	From Lemma~\ref{lem:termination}, we have that no more than $T = \log^{*}|\data|$ recursive calls are made by the algorithm. Therefore $\rmedian_{\rho, d, \tau, \delta}$ is reproducible with parameter $\rho_0 + 5T\rho_0 \leq 6\rho_0\log^*|\data| = \rho$.
\end{proof}

Theorem~\ref{thm:median} then follows as a corollary of Lemma~\ref{lem:sample-complexity}, Lemma~\ref{lem:accuracy}, and Lemma~\ref{lem:reproducibility}.

\repmedian

\section{Learning Halfspaces}
\label{sec:halfspaces}

In Section~\ref{sec:statistical-queries}, we saw how combining a concentration bound with a randomized rounding technique yielded a reproducible algorithm. Specifically, given a statistical query algorithm with an accuracy guarantee (with high probability) on the 1-dimensional space $[0,1]$, we can construct a reproducible statistical query algorithm using randomized rounding. By sacrificing a small amount of accuracy, our reproducible statistical query algorithm can decide on a canonical return value in $[0,1]$. 

In this section, we extend this argument from $\R$ to $\R^d$, by way of an interesting application of a randomized rounding technique from the study of foams \cite{KDRW12}. 
Algorithm~$1$ in \cite{KDRW12} probabilistically constructs a tiling of $\R^d$ such that every point is rounded to a nearby integer lattice point. This tiling has an additional property that the probability that two points are not rounded to the same point by a constructed tiling is at most linear in their $l_2$ distance. 
In the usual PAC-learning setting, there is a simple weak learning algorithm for halfspaces that takes examples $(\vec{x}_i,y_i) \in \data \times \pmone$, normalizes them, and returns the halfspace defined by vector $\sum_i \vec{x}_i \cdot y_i$ \cite{Ser02PAC}. 
We show a concentration bound on the sum of normalized vectors from a distribution, and then argue that all vectors within the concentration bound are reasonable hypotheses with non-negligible advantage.
The combination of this concentration bound and the foam-based rounding scheme yields a reproducible halfspace weak learner $\rhalfspacewkl$.

However, constructing this foam-based rounding scheme takes expected time that is exponential in the dimension $d$. We give an alternative rounding scheme that randomly translates the integer lattice and rounds points to their nearest translated integer lattice point. This construction yields another reproducible halfspace weak learner $\rhalfspacewklboxes$ with roughly an additional factor of $d$ in the sample complexity, but with polynomial runtime.  
In Section~\ref{sec:boosting}, we show how to combine these reproducible weak learners with a reproducible boosting algorithm, yielding an polynomial-time reproducible strong learner for halfspaces.

\subsection{Reproducible Halfspace Weak Learner: An Overview}

Let $D$ be a distribution over $\R^d$, and let $\exor$ be an example oracle for $D$ and $f$, where $f: \R^d \rightarrow \pmone$ is a halfspace that goes through the origin. 
Let $\| \vec{x}\|$ denote the $l_2$ norm of vector $\vec{x}$. 
We assume that $D$ satisfies a (worst-case) margin assumption with respect to $f$. 

\begin{restatable}{definition}{margindef}[Margin]
	Let $D$ be a distribution over $\R^d$. We say $D$ has margin $\tau_f$ with respect to halfspace 
	$f(\vec{x}) \eqdef \sgn(\vec{w} \cdot \vec{x})$ if 
	$
	\frac{\vec{x}
		\cdot f(\vec{x}) 
	}{\|\vec{x}\|} 
	\cdot \frac{ \vec{w} }{ \|\vec{w}\|} 
	\ge \tau_f$ for all $x \in \supp(D)$. 
	Additionally, we say $D$ has (worst-case) margin $\tau$ if
	$\tau = \sup_f \tau_f$.
\end{restatable}

Our reproducible halfspace weak learner $\rhalfspacewkl$ uses its input to compute an empirical estimation $\vec{z}$ of the expected vector $\Exp_{\vec{x} \sim D}[\vec{x} \cdot f(x)]$. Then, $\rhalfspacewkl$ uses its randomness to construct a rounding scheme $R$ via Algorithm~$\constructfoam$. $R$ is used to round our (rescaled) empirical estimation $\vec{z}$, and the resulting vector defines the returned halfspace. The algorithm relies on the margin assumption to ensure that the weak learner's returned hypothesis is positively correlated with the true halfspace $f$.\footnote{The parameter $a$ is a constant, but we leave it in variable form for convenience in the analysis; we choose $a = .05$ in this proof for clarity of presentation, but one could optimize the choice of $a$ to yield a slightly better sample complexity.}

\begin{figure}[H]
	\begin{algorithm}[H]
		\caption{$\rhalfspacewklfull$ 
			~ // a $\rho$-reproducible halfspace weak learner
			\\Parameters: $\rho$ - desired reproducibility 
			\\ $d$ - dimension of halfspace
			\\ $\tau$ -  assumed margin
			\\ $a$ - a constant, $a = .05$
			\\Input: A sample $\vec s$ of 
			$m = 
			\left( \frac{896 \sqrt{d}}{\tau^2 \rho} \right)^{1/(1/2-a)}$ 
			examples $(\vec{x}_i, y_i)$ drawn i.i.d. from distribution $D$
			\\Output: A hypothesis with advantage $\tau/4$ on $D$ against $f$
		}	
		\label{alg:reproducible-halfspace}
		\begin{algorithmic}
			\STATE $k \gets \frac{1}{m} \frac{8\sqrt{d}}{\tau^2}
			= 8 \cdot 
			\left( \frac{\rho}{896} \right)^{1/(1/2-a)}
			\left( \frac{\tau^2}{\sqrt{d}} \right)^{(1/2+a)/(1/2-a)}
			$
			\COMMENT{Scaling factor}
			\STATE $\vec{z} \gets 
			\sum_S \frac{\vec{x}_i}{\|\vec{x}_i\|} \cdot y_i$
			\STATE 
			$R \gets_r \constructfoam(d)$ (Algorithm~\ref{alg:construct-foam})
			\COMMENT{Rounding scheme $R: \R^d \rightarrow \Z^d$}
			\STATE $\vec{w} \gets R(k \cdot \vec{z})$ 
			\RETURN Hypothesis $h(\vec{x}) \eqdef
			\frac{\vec{x}}{\|\vec{x}\|} 
			\cdot \frac{ \vec{w} }{ \|\vec{w}\|}$
		\end{algorithmic}
	\end{algorithm}
\end{figure}


The subroutine $\constructfoam$ previously appeared as Algorithm~$1$ in \cite{KDRW12}. For completeness, we include a description below (Algorithm~\ref{alg:construct-foam}). 

\begin{figure}[H]
	\begin{algorithm}[H]
		\caption{$\constructfoam(d)$ 
			~ // Algorithm~1 in \cite{KDRW12}
			\\ Input: dimension $d$
			\\ Output: rounding scheme $R: \R^d \rightarrow \Z^d$
		}	
		\label{alg:construct-foam}
		\begin{algorithmic}
			\STATE Let $f: [0,1]^d \rightarrow \R$ s.t. $f(x_1, \dots, x_d) \eqdef \prod_{i=1}^d (2 \sin^2 (\pi x_i))$
			\STATE Let all points in $\R^d$ be unassigned
			\FOR {stage $t = 1, 2, \dots$ until all points are assigned}
				\STATE Uniformly at random sample $Z_t, H_t$ from $[0,1)^d \times (0, 2^d)$.
				\STATE Let droplet $D_i$ be the set of points $\{x | x \in -Z_t + [0,1)^d, f(x + Z_t) > H_t\}$.
				\STATE Let $R$ map all currently unassigned points in $D_i$ to $(0,0, \dots, 0)$ and extend this assignment periodically to all integer lattice points.
			\ENDFOR
			\RETURN $R$ 
		\end{algorithmic}
	\end{algorithm}
\end{figure}


The following is the main result of this section. 

\begin{theorem}
	\label{thm:reproducible-halfspace-wkl}
	Let $D$ be a distribution over $\R^d$, and let $f: \R^d \rightarrow \pmone$ be a halfspace with margin $\tau$ in $D$. 
	Then $\rhalfspacewklfull$
	is a $(\rho, \tau/4, \rho/4)$-weak learner for halfspaces.
	That is,
	 Algorithm~\ref{alg:reproducible-halfspace} $\rho$-reproducibly returns a hypothesis
	$h$ such that,
	with probability at least $1-\rho/2$,
	$\frac{1}{2}\Exp_{\vec{x} \sim D} h(\vec{x}) f(\vec{x}) \ge \tau/4$, 
	using a sample of size $
	m
	=
	\left( \frac{896 \sqrt{d}}{\tau^2 \rho} \right)^{20/9}
	$. 
\end{theorem}

\begin{proof}
	\textbf{Correctness (Advantage):} 
	We argue correctness in two parts. 	First, we show the expected weighted vector $\Exp_{\vec{x} \sim D} \left[\frac{\vec{x} \cdot f(\vec{x})}{\|\vec{x}\|} \right]$  defines a halfspace with good advantage (see Lemma~\ref{lem:adv-expected-weighted-vector-hypothesis}), following the arguments presented in Theorem~$3$ of \cite{Ser02PAC}. Then, we argue that rounding the empirical weighted vector $\vec{z}$ in Algorithm~\ref{alg:reproducible-halfspace} only slightly rotates the halfspace. By bounding the possible loss in advantage in terms of the amount of rotation (Lemma~\ref{lem:adv-perturbed-halfspace}), we argue that the rounded halfspace $\vec{w}/|\vec{w}|$ also has sizable advantage.

	By Lemma~\ref{lem:adv-expected-weighted-vector-hypothesis}, the expected weighted vector $\Exp_{\vec{x} \sim D} \left[\frac{\vec{x} \cdot f(\vec{x})}{\|\vec{x}\|} \right]$ has advantage $\tau/2$ on $D$ and $f$. 
	The martingale-based concentration bound in Corollary~\ref{cor:concentration-inequality} implies that the distance between $\vec{z}$ and 
	$\Exp[\vec z] = m \cdot \Exp_{\vec{x} \sim D} \left[\frac{\vec{x} \cdot f(\vec{x})}{\|\vec{x}\|} \right]$
	is less than $4m^{1/2+a}$ with probability at least $1-e^{-m^{2a}/2}$ for any $a \in (0,1/2)$ (chosen later). 
	Then, the vector is scaled by $k$ and rounded. 
	Any rounding scheme $R$ randomly generated by $\constructfoam$ always rounds its input to a point within distance $\sqrt{d}$ (Observation~\ref{obs:foam-rounding-max-distance}).
	Combining, the total distance between vectors $\frac{\vec{w}}{k \cdot \|\Exp[\vec z]\|}$ 
	and
	$\frac{\Exp[\vec z]}{\|\Exp[\vec z]\|}
	$
	is at most
	$$
	\frac{4 m^{1/2 + a} + \sqrt{d} / k}{\|\Exp[\vec z]\|}
	\text{.}
	$$
	As $D$ has margin $\tau$ with respect to $f$, for all $\vec{x} \in \supp(D)$, $\frac{\vec{x} }{\|\vec{x}\|} \cdot f(\vec{x})$ has length at least $\tau$ in the direction of the expected weighted vector 	$\Exp_{\vec{x} \sim D} \left[\frac{\vec{x} \cdot f(\vec{x})}{\|\vec{x}\|} \right]$.
	Thus, 
	$\|\Exp[\vec z]\| \ge \tau m$, 
	and the above quantity is at most
	$
	\frac{4m^{1/2 + a} + \sqrt{d} / k}{\tau m} 
	$.
	Simplifying, 
	$
	\frac{4m^{1/2 + a}}{\tau m} 
	= \frac{4}{\tau m^{1/2-a}}
	= \frac{4\tau}{896}\frac{\rho}{\sqrt{d}}
	< \tau/8
	$
	and
	$
	\frac{\sqrt{d} / k}{\tau m} 
	= \frac{\sqrt{d}}{\tau} \frac{\tau^2}{8\sqrt{d}}
	= \tau/8
	$.
	By applying Lemma~\ref{lem:adv-perturbed-halfspace} with $\theta = \tau/8 + \tau/8 = \tau/4$, we can conclude that $h$ has advantage at least $\tau/2 - \tau/4 = \tau/4$, 
	as 
	desired.\footnote{A 
		dedicated reader may notice that the scaling factor $k$ is subconstant. A possible error may arise if the scaling factor is so small that the halfspace vector $\vec{z} \cdot k$ gets rounded to $0$ by the rounding function $R$ (constructed by $\constructfoam$). Fortunately, with our choice of parameters, this turns out to not be an issue. The empirical vector sum $\vec{z}$ has norm at least $\tau \cdot m$, where $\tau$ is the margin size and $m$ is the sample complexity. As we have chosen scaling factor $k$ such that $m \cdot k = 8\sqrt{d}/\tau^2$, the input given to $R$ has norm at least $8\sqrt{d}/\tau$. Every rounding function $R$ constructed by $\constructfoam$ rounds its input to a point at distance at most $\sqrt{d}$ away (Observation~\ref{obs:foam-rounding-max-distance}), so we can be sure that $R$ never rounds our vector to the zero vector.}

	\textbf{Reproducibility:}
	Let $\vec{z}_1$ and $\vec{z}_2$ denote the empirical sums of vectors $\vec{x}_i y_i$ from two separate runs of $\rhalfspacewkl$. 
	It suffices to show that the rounding scheme $R$ constructed by $\constructfoam$ rounds 
	$k \cdot \vec{z}_1$ and $k \cdot \vec{z}_2$
	 to the same vector $\vec{w}$ with high probability. 
	The distance between $\vec{z}_1$ and $\vec{z}_2$ is at most $2 \cdot 4 m^{1/2+a}$ with probability at least 
	$1-2e^{-m^{2a}/2}
	$, 
	by
	 Corollary~\ref{cor:concentration-inequality}, 
	 the triangle inequality, and a union bound. 
	After scaling by $k$, this distance is at most
	$
	8 k m^{1/2+a}
	$.
	By Lemma~\ref{lem:foam-rounding}, the probability that $R$ does not round 
	$k \cdot \vec{z}_1$ and $k \cdot \vec{z}_2$
	to same integer lattice point is at most $
	7 \cdot 8 k m^{1/2+a}
	$.
	Altogether, the reproducibility parameter is at most
	$$
	2e^{-m^{2a}/2} 
	+ 	56 k m^{1/2+a}
	\text{.}
	$$
	The second term satisfies
	$
	56 k m^{1/2+a}
	= 
	448 \cdot \frac{\rho}{896} \cdot 1
	= \rho/2
	$,
	and the first term
	$
	2e^{-m^{2a}/2}
	\le \rho/2
	$
	when
	$
	m \ge \left( 2\ln(4/\rho) \right)^{1/(2a)}	
	$.
	So, as long as $a$ is chosen such that
	$m = 
	\left( \frac{896 \sqrt{d}}{\tau^2 \rho} \right)^{1/(1/2-a)}
	\ge 
	\left( 2\ln(4/\rho) \right)^{1/(2a)}	
	$,
	the algorithm is $\rho$-reproducible.
	This occurs 
	if 
	$
	\left( \frac{896}{\rho} \right)^{2a/(1/2-a)}
	\ge 
	 2\ln(4/\rho) 
	$, which is true for all values of $\rho \in (0,1)$ when $a = .05$.\footnote{The constant $a$ can be improved slightly if $a$ is chosen as a function of $\rho$.}
	
	
	\textbf{Failure rate:}	
	The algorithm succeeds when the martingale concentration bound holds. So, the failure probability of $\rhalfspacewkl$ is at most $e^{-m^{2a}/2} \le \rho/4$. 
	
	\textbf{Sample complexity:}
	Plugging in $a = .05$ in the expression
	$m = \left( \frac{896 \sqrt{d}}{\tau^2 \rho} \right)^{1/(1/2-a)}$ 
	yields the conclusion. 	
\end{proof}

\subsection{Reproducible Weak Halfspace Learner -- Definitions and Lemmas}

\subsubsection{Foams-Based Rounding Scheme from \texorpdfstring{\cite{KDRW12}}{[KDRW12]}}

For completeness, we restate relevant results from \cite{KDRW12} for our construction.

\begin{lemma}[Combining Theorem~$1$ and Theorem~$3$ of \cite{KDRW12}]
	\label{lem:foam-rounding}
	Let $R: \R^d \rightarrow \Z^d$ be the randomized rounding scheme constructed by Algorithm~\ref{alg:construct-foam} (Algorithm~1 in \cite{KDRW12}). Let $x, y \in \R^d$, and let  $\eps \eqdef d_{l_2} (x , y)$. Then $\Pr[R(x) = R(y)] \ge 1 - O(\eps)$, where the probability is over the randomness used in the algorithm.
\end{lemma}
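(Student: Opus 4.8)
The plan is to reconstruct the argument underlying Theorems~1 and~3 of \cite{KDRW12} for the specific construction in Algorithm~\ref{alg:construct-foam}; alternatively one may simply cite those results, since Algorithm~\ref{alg:construct-foam} is verbatim their Algorithm~1. Fix $x,y\in\R^d$ with $\eps=\ltwo{x-y}$ and assume $\eps<1$ (otherwise the bound is trivial). First I would reduce the multi-stage process to a single stage. For a single stage with random $(Z,H)\in[0,1)^d\times(0,2^d)$, say a point $p$ is \emph{captured} if $f(p+Z)>H$, in which case it is (periodically) assigned to cell $\lfloor p+Z\rfloor$, using $1$-periodicity of $f$. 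Let $A$ be the event that at least one of $x,y$ is captured, and $B\subseteq A$ the event that both are captured \emph{and} $\lfloor x+Z\rfloor=\lfloor y+Z\rfloor$. A short geometric-race argument gives $\Pr[R(x)\ne R(y)]\le \Pr[A\setminus B]/\Pr[A]$: as long as both points are unassigned at the start of a stage, the stage is a fresh independent copy of $(Z,H)$, and if the first such stage that is not a ``neither-captured'' stage lies in $B$ then $R(x)=R(y)$; hence $\{R(x)\ne R(y)\}$ is contained in the event that a stage in $A\setminus B$ occurs before any $B$-stage, which by the memoryless structure has probability $\Pr[A\setminus B\mid A]$.

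Second, I would estimate the two probabilities. Since $\int_0^1 2\sin^2(\pi t)\,dt=1$ we have $\int_{[0,1)^d}f=1$, so $\Pr[f(x+Z)>H]=\E_Z[f(x+Z)]/2^d=2^{-d}$ and likewise for $y$, whence $\Pr[A]=2\cdot 2^{-d}-\Pr[\text{both captured}]\ge 2^{-d}$. For $A\setminus B$ I would split into (i) exactly one captured and (ii) both captured but to different cells. For (i), $\Pr[\text{exactly one}]=\E_Z[\,|f(x+Z)-f(y+Z)|\,]/2^d$, and the key estimate is that $\E_Z[\,|f(x+Z)-f(y+Z)|\,]=O(\eps)$ \emph{with no dependence on $d$}. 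Writing $v=x-y$ and using $f(x+Z)-f(y+Z)=\int_0^1 \nabla f(y+Z+tv)\cdot v\,dt$ together with translation invariance, this reduces to bounding $\E_Z[\,|\nabla f(Z)\cdot v|\,]$; since $\partial_i f=2\pi f\cot(\pi x_i)$, this equals $2\pi\,\E_{Z\sim\mu}\big[\,\big|\sum_i v_i\cot(\pi Z_i)\big|\,\big]$ where $\mu$ is the probability measure with density $f$ on $[0,1)^d$. Under $\mu$ the coordinates are independent with $\E_\mu[\cot(\pi Z_i)]=0$ and $\E_\mu[\cot^2(\pi Z_i)]=1$, so Cauchy--Schwarz bounds this by $2\pi\sqrt{\sum_i v_i^2}=2\pi\eps$. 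For (ii), write $\eps_i=|x_i-y_i|$; the event ``both captured'' contributes a factor $2^{-d}$ because $\Pr[\text{both captured}\mid Z]=\min(f(x+Z),f(y+Z))/2^d$, and a cell-boundary crossing in coordinate $i$ forces $x_i+Z_i$ within $\eps_i$ of an integer, where $2\sin^2(\pi(x_i+Z_i))=O(\eps_i^2)$; integrating, $\Pr[\text{both captured and boundary crossed in }i]\le 2^{-d}\int_{\text{crossing}_i}f(x+Z)\,dZ=2^{-d}\cdot O(\eps_i^3)$, and summing over $i$ gives $2^{-d}\cdot O(\eps^3)$ using $\sum_i\eps_i^3\le\eps^3$.

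Combining, $\Pr[R(x)\ne R(y)]\le\big(2^{-d}\cdot O(\eps)+2^{-d}\cdot O(\eps^3)\big)/2^{-d}=O(\eps)$, as claimed. The main obstacle is step (i): obtaining a \emph{dimension-free} bound on $\E_Z[\,|f(x+Z)-f(y+Z)|\,]$. This is exactly the content of the spherical-cubes construction and requires exploiting the product structure of $f$ (independence of the coordinates under $\mu\propto f$, together with the vanishing first moment of $\cot(\pi Z_i)$) rather than a worst-case Lipschitz estimate $|\nabla f\cdot v|\le\|\nabla f\|\,\|v\|$, which would insert a spurious factor of $\sqrt d$. A minor technical point is justifying the fundamental-theorem-of-calculus step for $f$ along lines meeting the coordinate hyperplanes, where $f$ vanishes and $\nabla\log f$ blows up; this is harmless since $f$ is continuous everywhere and $\nabla f$ is integrable, so $f$ is absolutely continuous along almost every line.
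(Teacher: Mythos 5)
Your proposal is correct, but it takes a genuinely different route from the paper. The paper's proof of Lemma~\ref{lem:foam-rounding} is essentially a citation: it quotes Theorem~3 of \cite{KDRW12} (that $\int_{[0,1)^d} |\langle \nabla f, u\rangle| \le 2\pi$ for every unit vector $u$) and Theorem~1 of \cite{KDRW12} (that the expected number $N$ of droplet-boundary crossings along the segment $\overline{xy}$ is $\eps \cdot \int |\langle \nabla f, u\rangle|$ up to a negligible $W\eps^2$ term, hence at most $6.3\,\eps$), and then applies Markov's inequality to $N$. You instead re-derive the content of both cited theorems for this specific density: your first-capture ``race'' reduction to a single stage, the identity $\Pr[\text{exactly one captured}\mid Z] = |f(x+Z)-f(y+Z)|/2^d$, the fundamental-theorem-of-calculus plus translation-invariance step, and the dimension-free bound $\E_Z[|\nabla f(Z)\cdot v|] \le 2\pi\eps$ via the product structure (under the $f$-tilted measure the coordinates are independent, $\cot(\pi Z_i)$ has mean $0$ and second moment $1$, and Cauchy--Schwarz avoids the spurious $\sqrt d$) are exactly the mechanism behind the cited results, and your extra $O(\eps^3)$ accounting for ``both captured into different cells'' correctly handles the event the crossing-count argument subsumes. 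What each approach buys: the paper's version is two lines and inherits the constant $6.3$ from \cite{KDRW12}, while yours is self-contained, gives an explicit constant essentially $2\pi + O(\eps^2)$, and makes the stopping-time structure of the multi-stage process explicit (conditioned on neither point being assigned, each stage is a fresh draw, so the separation probability is $\Pr[A\setminus B]/\Pr[A]$). One small remark: your closing concern about the fundamental theorem of calculus is unnecessary, since $f(z)=\prod_i 2\sin^2(\pi z_i)$ is $C^\infty$ and periodic; the apparent singularity of $\cot(\pi z_i)$ at integer coordinates is cancelled by the factor $f$, so $\nabla f$ is smooth and the line-integral step needs no absolute-continuity caveat.
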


\begin{proof}
	Theorem~$3$ of \cite{KDRW12} states that 
	$f(\vec{x}) = \Pi_{i=1}^d (2\sin^2(\pi x_i))$
	 is a \textit{proper} density function
	and $\int_{[0,1)^d} |\langle \nabla f, u \rangle| \le 2 \pi$ for all unit vectors $u$. 
	Theorem~$1$ of \cite{KDRW12} states the following. Let $f$ be a proper density function, and points $x,y \in \R^d$ such that $y = x + \eps \cdot u$, where $\eps > 0$ and $u$ is a unit vector. Let $N$ denote the number of times the line segment $\overline{xy}$ crosses the boundary between different droplets (potentially mapping to the same integer lattice point) in an execution of $\constructfoam$. Then
	$\Exp[N] \approx \eps \cdot \int_{[0,1)^d]} |\langle \nabla f, u \rangle|$, where the $\approx$ notation is hiding a $W \eps^2$ term, where $W>0$ is a universal constant depending only on $f$. 
	The authors refine this statement (\cite{KDRW12}, page 24) to show that the $W \eps^2$ term can be made arbitrarily small. 
	Combining, 
	$\Exp[N] \le 2\pi\eps  + W\eps^2 < 6.3  \eps$.
	By Markov's inequality, $\Pr[N = 0]	< 1 - 6.3 \eps$. 
\end{proof}

\begin{observation}
	\label{obs:foam-rounding-max-distance}
	$\constructfoam$ always outputs a rounding scheme $R$ with the following property: the ($l_2$) distance between any vector $\vec{v} \in \R^d$ and $R(\vec{v})$ is at most $\sqrt{d}$.
\end{observation}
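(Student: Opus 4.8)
The plan is to track, for an arbitrary point $\vec v \in \R^d$, exactly which droplet is responsible for assigning it, and to observe that any such droplet lives inside a translate of the unit cube. First I would recall the structure of a single stage of $\constructfoam$: at stage $t$ the algorithm draws $Z_t \in [0,1)^d$ and $H_t \in (0,2^d)$, and the droplet it carves out is $D_i = \{x \in -Z_t + [0,1)^d : f(x+Z_t) > H_t\}$, which by construction is a subset of the half-open cube $-Z_t + [0,1)^d$. The assignment rule then sends every still-unassigned point of $D_i$ to the origin and extends this periodically, i.e.\ for every $p \in \Z^d$ it sends the translate $D_i + p$ to $p$.

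Next I would use this to locate $R(\vec v)$. Since $R$ is a total function on $\R^d$ (the loop runs until every point is assigned), there is a stage $t$ and a lattice point $p = R(\vec v) \in \Z^d$ such that $\vec v$ lies in $D_i + p$ for the droplet $D_i$ carved at that stage; equivalently $\vec v - p \in D_i \subseteq -Z_t + [0,1)^d$. Writing this out coordinatewise, for each coordinate $j$ we get $v_j - p_j \in \left[-Z_{t,j},\, 1 - Z_{t,j}\right)$, and because $Z_{t,j} \in [0,1)$ this interval is contained in $(-1,1)$. Hence $|v_j - p_j| < 1$ for every coordinate.

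Finally I would conclude $\| \vec v - R(\vec v) \|_2 = \left( \sum_{j=1}^{d} (v_j - p_j)^2 \right)^{1/2} < \sqrt{d}$, which in particular is at most $\sqrt d$. The only point requiring care is the bookkeeping around the periodic extension --- making sure that ``$\vec v$ is assigned to $p$'' really does force membership of $\vec v$ in the translate $D_i + p$ of a droplet of unit-cube diameter --- but once the periodic tiling is written out explicitly the bound is immediate, and no probabilistic argument is needed: the conclusion holds for every rounding scheme in the support of $\constructfoam$.
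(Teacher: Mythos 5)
Your argument is correct and is essentially the paper's own reasoning: the paper justifies the observation by noting that $R$ moves each coordinate of $\vec v$ to its floor or ceiling, which is exactly the coordinatewise bound $|v_j - p_j| < 1$ you derive from $\vec v - p \in D_i \subseteq -Z_t + [0,1)^d$. Your version simply unpacks the droplet/periodic-extension bookkeeping explicitly before summing the coordinates to get $\|\vec v - R(\vec v)\|_2 < \sqrt{d}$.
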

This follows from noticing that $R$ maps each coordinate of $\vec{v}$ to its floor or ceiling.

\begin{theorem}[Runtime of $\constructfoam$; \cite{KDRW12}, page 25]
	\label{thm:foam-construction-runtime}
	There are universal constants $1 < c < C$ such that Algorithm~\ref{alg:construct-foam}, when run with
	$f(\vec{x}) = \Pi_{i=1}^d (2\sin^2(\pi x_i))$, 
	takes between $c^d$ and $C^d$ stages except with probability at most $c^{-d}$.
\end{theorem}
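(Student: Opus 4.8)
The plan is to analyze $\constructfoam$ on the torus $\R^d/\Z^d$, on which stage $t$ removes the ``droplet'' $\{[x] : f(x+Z_t) > H_t\}$ (with $f$ extended periodically) from the set of still-unassigned points, and to track the Lebesgue measure $\mu_t$ of the unassigned set after $t$ stages. Everything rests on the single computation $\int_{[0,1)^d} f = \prod_{i=1}^d \int_0^1 2\sin^2(\pi u)\,du = 1$: since $Z_t \sim \mathrm{Unif}([0,1)^d)$ and $H_t \sim \mathrm{Unif}(0,2^d)$ are independent, for any fixed $x$ we get $\Pr[x \text{ avoids the stage-}t \text{ droplet}] = \E_{y,H}[\mathbf{1}\{f(y)\le H\}] = 1 - \E[f]/2^d = 1 - 2^{-d}$, so $\E[\mu_t] = (1-2^{-d})^t$ by Fubini. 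I would also record two elementary facts: each $\partial_j f = 2\pi\sin(2\pi x_j)\prod_{i\ne j}2\sin^2(\pi x_i)$ has magnitude at most $2\pi\cdot 2^{d-1}$, so $f$ is $L$-Lipschitz on the torus with $L = \pi\sqrt d\,2^d$; and, by Markov, $\mu\{f\ge 3\}\le\E[f]/3 = 1/3$.

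\textbf{Lower bound.} Fix any $c$ with $1 < c < \sqrt 2$. If the algorithm halts within $c^d$ stages, it halts at some stage $t^{\star}$ whose droplet $D_{t^{\star}}$ covers the entire remaining set $U_{t^{\star}-1}$; hence either $\mu(U_{t^{\star}-1}) < 1/2$ or $\mu(D_{t^{\star}}) \ge 1/2$. In the first case, since the $\mu_t$ are nonincreasing, $\mu_{c^d-1} < 1/2$; as $\E[\mu_{c^d-1}] = (1-2^{-d})^{c^d-1} \ge 1 - (c/2)^d$, applying Markov to $1-\mu_{c^d-1}\ge 0$ bounds this case by $2(c/2)^d$. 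In the second case some stage $t \le c^d$ has $\mu\{f > H_t\} \ge 1/2$; since $\mu\{f\ge 3\}\le 1/3$ this forces $H_t < 3$, an event of probability $3\cdot 2^{-d}$ per stage, hence of probability at most $3c^d 2^{-d} = 3(c/2)^d$ over all stages by a union bound. So the probability of halting within $c^d$ stages is $O((c/2)^d) \le c^{-d}$ for $d$ large.

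\textbf{Upper bound.} The real work is converting ``most of the \emph{measure} is covered after $\Theta(2^d)$ stages'' into ``\emph{every} point is covered''. I would fix an $\eta$-net $\mathcal N$ of the torus in $\ell_2$ with $\eta = 1/(2L) = 1/(2\pi\sqrt d\,2^d)$ and $|\mathcal N| \le (\sqrt d/\eta)^d = (2\pi d\,2^d)^d = 2^{O(d^2)}$. Call $p \in \mathcal N$ \emph{well-covered} at stage $t$ if $f(p+Z_t) > H_t + L\eta$; by $L$-Lipschitzness this forces every $x$ with $\|x-p\|_2\le\eta$ into the stage-$t$ droplet, and
\[
\Pr[p \text{ well-covered at } t] = 2^{-d}\,\E_y[\max(0,\,f(y) - L\eta)] \ge 2^{-d}(\E[f] - L\eta) = 2^{-(d+1)}
\]
since $L\eta = 1/2$. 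Thus $\Pr[p \text{ never well-covered in } T \text{ stages}] \le \exp(-T\,2^{-(d+1)})$, and once every net point has been well-covered the $\eta$-balls around net points cover the torus, so the algorithm has halted. Taking $T = K d^2 2^d$ for a large enough absolute constant $K$ makes the union bound over $\mathcal N$ equal to $2^{O(d^2)}\exp(-Kd^2/2) = \exp(-\Theta(d^2))$, which is below $c^{-d}$ for $d$ large; and $K d^2 2^d \le C^d$ for any fixed $C > 2$ and all large $d$, giving the claimed upper bound (this reproduces the analysis of \cite{KDRW12}).

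\textbf{Main obstacle.} The delicate step is exactly the one above: the natural per-stage progress quantity is covered \emph{volume}, which only buys $\Theta(2^d)$ stages before a constant fraction of the torus is covered, whereas the theorem demands that \emph{no single point} survive; and because $f$ is as steep as $\|\nabla f\|\sim 2^d$, the net needed to turn ``deep enough into a droplet'' into ``a whole $\eta$-ball inside the droplet'' has mesh $\sim 2^{-d}/\sqrt d$, i.e.\ size $2^{\Theta(d^2)}$. The point to make explicit is that this is still harmless: $\Theta(d^2 2^d)$ stages is only $C^d$ for any $C>2$, and each net point becomes well-covered at geometric rate $2^{-d}$ per stage, which comfortably beats a $2^{\Theta(d^2)}$ union bound. (A multiscale net -- coarse where $f$ is small and flat, fine only near the bumps, where $f$ is anyway easily covered -- would remove the polynomial-in-$d$ overhead, but is unnecessary.) Finally, since the stated constants are asymptotic, finitely many small $d$ are absorbed by shrinking $c$ and enlarging $C$.
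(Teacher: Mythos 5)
This theorem is not proved in the paper at all: it is imported verbatim from \cite{KDRW12} (page~25 of the journal version), so there is no in-paper argument to compare against. Judged on its own, your sketch is a sound, essentially self-contained derivation in the spirit of the original analysis. The key computations check out: $\int_{[0,1)^d} f = 1$ gives per-stage coverage probability exactly $2^{-d}$ for any fixed point of the torus, hence $\E[\mu_t]=(1-2^{-d})^t$; the lower bound correctly splits the halting event into ``little mass left before the last stage'' (Markov on $1-\mu$) and ``one droplet has mass $\ge 1/2$'' (which forces $H_t<3$ since $\mu\{f\ge 3\}\le 1/3$, an event of probability $3\cdot 2^{-d}$ per stage); and the upper bound correctly converts volume coverage into pointwise coverage via the Lipschitz bound $\|\nabla f\|_2\le \pi\sqrt d\,2^d$, a mesh-$\eta=1/(2L)$ net of size $2^{O(d^2)}$, and the observation that a net point is ``well-covered'' with probability at least $2^{-(d+1)}$ per stage, so $T=\Theta(d^2 2^d)\le C^d$ stages suffice except with probability $e^{-\Theta(d^2)}$. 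Working on the torus is legitimate because the droplet assignment is extended periodically. The only place I would tighten is the constants/small-$d$ bookkeeping at the end: the Markov bound $O((c/2)^d)\le c^{-d}$ forces $c$ strictly below $\sqrt 2$ and is vacuous for small $d$, but your ``shrink $c$, enlarge $C$'' remark becomes airtight once you note that the algorithm can never halt after a single stage (the point $x\equiv -Z_1$ has $f(x+Z_1)=0<H_1$), so for $c$ close enough to $1$ that $c^d\le 2$ on the finitely many small $d$, the lower-bound event has probability zero there, and enlarging $C$ handles the corresponding upper-bound events. With that observation spelled out, the sketch is a complete proof plan for the cited statement.
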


\subsubsection{Weak Learning Definitions}

\begin{definition}[Weak Learning Algorithm (in the Filtering Model)]
	Let $\CCC$ be a concept class of functions from domain $\data$ to $\pmone$, and let $f \in \CCC$. Let $D$ be a distribution over $\data$. 
	Let $\wkl$ be an algorithm that takes as input a labeled sample $S = \{(x_i, f(x_i))\}_m$ drawn i.i.d. from $D$, and outputs a hypothesis $h: \data \rightarrow [-1,1]$. 	
	Then $\wkl$ is a $(\gamma, \delta)$-\textit{weak learner} for $\CCC$ with sample complexity $m$ if, for all $f, D$, with probability at least $1 - \delta$, $\wkl(S)$ outputs a hypothesis $h: \data \rightarrow [-1,1]$ such that
	$\Exp_{x \sim D} f(x) h(x) \ge 2\gamma$,
	where $S$ is a sample of size $|S| = m$ drawn i.i.d. from $D$.
\end{definition}

We say a $(\gamma, \delta)$-weak learner has \textit{advantage} $\gamma$.
Equivalently, if a hypothesis $h$ satisfies
$\frac{1}{2}\Exp_{x \sim D} f(x) h(x) \ge \gamma$, then we say $h$ has advantage $\gamma$ (on $D$ and $f$). 

\begin{definition}[Reproducible Weak Learning Algorithm]
	Algorithm $\rwkl$ is a $(\rho, \gamma, \delta)$-weak learner if $\rwkl$ is $\rho$-reproducible and a $(\gamma, \delta)$-weak learner. 
\end{definition}

\subsubsection{Halfspaces and Their Advantage}

\margindef*

\begin{lemma}[Advantage of Expected Weighted Vector Hypothesis \cite{Ser02PAC}]
	\label{lem:adv-expected-weighted-vector-hypothesis}
	Let $f(\vec{x}) \eqdef \sgn(\vec{w} \cdot \vec{x})$ be a halfspace, and let $D$ be a distribution over $\R^d$ with margin $\tau$ with respect to $f$. Let $\vec{z} 
	= \Exp_{\vec{v} \sim D} \left[ \frac{\vec{v}}{\|\vec{v}\|} f(\vec{v}) \right]$. 
	Then the hypothesis 
	$h_{\vec{z}}(\vec{x}) 
	= \frac{\vec{x}}{\|\vec{x}\|} \cdot \frac{\vec{z}}{\|\vec{z}\|}
	$
	has advantage at least $\tau/2$. 
\end{lemma}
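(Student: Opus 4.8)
The plan is to compute the population correlation $\Exp_{\vec x \sim D}[f(\vec x)\, h_{\vec z}(\vec x)]$ in closed form, show it equals $\|\vec z\|$, and then lower bound $\|\vec z\|$ using the margin assumption. This is the (population) specialization of the argument behind Theorem~3 of \cite{Ser02PAC}.

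First I would unfold the definition $h_{\vec z}(\vec x) = \langle \vec x/\|\vec x\|,\ \vec z/\|\vec z\|\rangle$ and use linearity of expectation to write
\[
\Exp_{\vec x \sim D}[f(\vec x)\, h_{\vec z}(\vec x)] = \Big\langle \Exp_{\vec x \sim D}\Big[\frac{\vec x}{\|\vec x\|}\, f(\vec x)\Big],\ \frac{\vec z}{\|\vec z\|}\Big\rangle = \Big\langle \vec z,\ \frac{\vec z}{\|\vec z\|}\Big\rangle = \|\vec z\|,
\]
where the middle equality is just the definition $\vec z = \Exp_{\vec v \sim D}[(\vec v/\|\vec v\|)\, f(\vec v)]$. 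The normalization $\vec z/\|\vec z\|$ is well-defined as soon as we know $\vec z \neq 0$, which will follow from the next step (and if $\tau = 0$ the statement is vacuous, so we may assume $\tau > 0$).

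Second I would lower bound $\|\vec z\|$ by projecting onto the direction of the true separator. Let $\hat w = \vec w/\|\vec w\|$. The margin hypothesis states that $\langle \vec x/\|\vec x\|,\ \hat w\rangle\, f(\vec x) \ge \tau$ for every $\vec x \in \supp(D)$, so taking expectations over $\vec x \sim D$ gives $\langle \vec z,\ \hat w\rangle = \Exp_{\vec x \sim D}\big[\langle \vec x/\|\vec x\|,\ \hat w\rangle\, f(\vec x)\big] \ge \tau$. Since $\hat w$ is a unit vector, Cauchy–Schwarz yields $\|\vec z\| \ge \langle \vec z,\ \hat w\rangle \ge \tau$. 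Combining the two steps, $\Exp_{\vec x \sim D}[f(\vec x)\, h_{\vec z}(\vec x)] = \|\vec z\| \ge \tau = 2 \cdot (\tau/2)$, which is exactly the statement that $h_{\vec z}$ has advantage at least $\tau/2$.

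There is no real obstacle here: the only points requiring a little care are the well-definedness of $\vec z/\|\vec z\|$ (handled by the lower bound $\|\vec z\| \ge \tau > 0$) and the legitimacy of pushing the pointwise margin inequality through the expectation, which is valid precisely because the inequality holds for all $\vec x$ in the support of $D$. Note also that $h_{\vec z}$ indeed maps into $[-1,1]$ by Cauchy–Schwarz, so it is a legal hypothesis in the sense of the weak-learning definition.
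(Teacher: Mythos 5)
Your proposal is correct and follows essentially the same route as the paper's proof: compute the correlation $\Exp[f\,h_{\vec z}]=\|\vec z\|$ directly from the definition of $\vec z$, then lower bound $\|\vec z\|$ by its projection onto $\vec w/\|\vec w\|$ via Cauchy--Schwarz and the pointwise margin assumption. The only (welcome) addition is your explicit remark that $\|\vec z\|\ge\tau>0$ guarantees $\vec z/\|\vec z\|$ is well defined, which the paper leaves implicit.
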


\begin{proof}


	The advantage of $\hz$
	is
	$\frac{1}{2} \Exp_{\vec{x} \sim D} [\hz(\vec{x}) f(\vec{x})]
	= \frac{1}{2} \Exp_{\vec{x} \sim D} \left[ \frac{\vec{x}}{\|\vec{x}\|} \cdot \frac{\vec{z}}{\|\vec{z}\|} \cdot f(\vec{x}) \right]
	=  \frac{ \vec{z} \cdot \vec{z}}{2\|\vec{z}\|} 
	= \frac{\|\vec{z}\|}{2}
	\ge 
	\frac{\vec{z} \cdot \vec{w}}{2\|\vec{w}\|} 
	$,
	by the Cauchy-Schwarz inequality.
	Vector $\vec{z}$ is a convex combination of 
	$\frac{\vec{x} \cdot f(\vec{x}) }{\|\vec{x}\|}$ terms, for $\vec{x} \in \supp(D)$.
	By the margin assumption, 
	$
	\frac{\vec{x} \cdot f(\vec{x}) }{\|\vec{x}\|} 
	\cdot \frac{ \vec{w} }{ \|\vec{w}\|} 
	\ge \tau$ for all $x \in \supp(D)$. 
	Thus, 
	$\frac{\vec{z} \cdot \vec{w}}{2\|\vec{w}\|} \ge \frac{\tau}{2}$.

\end{proof}

\begin{lemma}[Advantage of Perturbed Halfspaces]
	\label{lem:adv-perturbed-halfspace}
	Consider a halfspace defined by unit vector $\vec{w}$, and let 
	$h(\vec{x}) = 
	\frac{\vec{x}}{\|\vec{x}\|} 
	\cdot  \vec{w}$.
	Assume $h$ has advantage $\gamma$, i.e.
	$\frac{1}{2}\Exp_{x \sim D} f(x) h(x) \ge \gamma$.  
	Let $\vec{u}$ be any vector such that $\|\vec{u}\| \le \theta$, where $\theta \in [0,\sqrt{3}/2)$. 
	Let perturbed vector $\vec{w'} = 
	\frac{ \vec{w} + \vec{u}}
	{\| \vec{w} + \vec{u}\|}
	$, and let
	 $h'(\vec{x}) = 
	 \frac{\vec{x}}{\|\vec{x}\|} 
	 \cdot  \vec{w'}$.
	Then $h'$ has advantage at least $\gamma - \theta$. 
\end{lemma}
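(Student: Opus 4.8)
The plan is to reduce the statement to the elementary geometric estimate $\|\vec{w}' - \vec{w}\| \le 2\theta$ and then transfer that bound on the hypothesis directions to a bound on the advantages using Cauchy--Schwarz; no probabilistic ingredients are needed beyond $|f(\vec{x})| = 1$ pointwise.

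First I would record the reduction. For every $\vec{x} \in \supp(D)$, since $f(\vec{x}) \in \{\pm 1\}$ and $\vec{x}/\|\vec{x}\|$ is a unit vector,
\[
\big| f(\vec{x}) h'(\vec{x}) - f(\vec{x}) h(\vec{x}) \big| = \Big| \frac{\vec{x}}{\|\vec{x}\|} \cdot (\vec{w}' - \vec{w}) \Big| \le \|\vec{w}' - \vec{w}\|
\]
by Cauchy--Schwarz, uniformly in $\vec{x}$ and with no dependence on $D$. Taking expectations over $\vec{x} \sim D$ and invoking the hypothesis $\Exp_{\vec{x}\sim D} f(\vec{x}) h(\vec{x}) \ge 2\gamma$ gives $\Exp_{\vec{x}\sim D} f(\vec{x}) h'(\vec{x}) \ge 2\gamma - \|\vec{w}' - \vec{w}\|$. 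So it suffices to show $\|\vec{w}' - \vec{w}\| \le 2\theta$, which would yield advantage at least $\gamma - \theta$, exactly accounting for the factor of $2$ in the $2\gamma$ normalization in the definition of advantage.

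Second I would prove the geometric bound. Write $\vec{v} = \vec{w} + \vec{u}$, so that $\vec{w}' = \vec{v}/\|\vec{v}\|$ and $\|\vec{v} - \vec{w}\| = \|\vec{u}\| \le \theta$. The hypothesis $\theta < \sqrt{3}/2 < 1$ ensures $\|\vec{v}\| \ge \|\vec{w}\| - \|\vec{u}\| \ge 1 - \theta > 0$, so $\vec{w}'$ is well-defined. Then by the triangle inequality,
\[
\|\vec{w}' - \vec{w}\| \le \Big\| \frac{\vec{v}}{\|\vec{v}\|} - \vec{v} \Big\| + \|\vec{v} - \vec{w}\| = \big| 1 - \|\vec{v}\| \big| + \|\vec{u}\|,
\]
and by the reverse triangle inequality $\big| 1 - \|\vec{v}\| \big| = \big| \|\vec{w}\| - \|\vec{v}\| \big| \le \|\vec{v} - \vec{w}\| = \|\vec{u}\| \le \theta$. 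Hence $\|\vec{w}' - \vec{w}\| \le 2\|\vec{u}\| \le 2\theta$, which combined with the reduction above completes the proof.

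I do not expect a real obstacle here: the only points requiring care are (i) checking that $\vec{w} + \vec{u} \neq \vec{0}$ so $\vec{w}'$ is defined — which is precisely what the assumption $\theta < \sqrt{3}/2$ (indeed any $\theta < 1$) guarantees — and (ii) using $|f(\vec{x})| = 1$ so that the per-point discrepancy between $h'$ and $h$ is controlled \emph{uniformly} by $\|\vec{w}' - \vec{w}\|$, making the argument entirely distribution-free.
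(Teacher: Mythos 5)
Your proof is correct, and its overall shape matches the paper's: both arguments reduce the lemma to the bound $\|\vec{w'}-\vec{w}\|\le 2\theta$ and then transfer that bound to the advantage via Cauchy--Schwarz (you apply Cauchy--Schwarz pointwise and then take expectations; the paper applies it once to $(\vec{w'}-\vec{w})\cdot\Exp_{\vec{x}\sim D}\bigl[\tfrac{\vec{x}}{\|\vec{x}\|}f(\vec{x})\bigr]$ --- these are interchangeable). The one genuine difference is how the $2\theta$ bound is obtained. The paper argues via worst-case geometry: the normalized vector $\vec{w'}$ is farthest from $\vec{w}$ when the ray through $\vec{w}+\vec{u}$ is tangent to the ball of radius $\theta$ around $\vec{w}$, giving the exact extremal value $\|\vec{w'}-\vec{w}\|^2 = 2-2\sqrt{1-\theta^2}$, which is then compared to $4\theta^2$ using $\theta<\sqrt{3}/2$. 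You instead pass through $\vec{v}=\vec{w}+\vec{u}$ with the triangle inequality, bounding $\|\vec{w'}-\vec{w}\|$ by $\bigl|1-\|\vec{v}\|\bigr|+\|\vec{u}\|\le 2\|\vec{u}\|$ via the reverse triangle inequality. Your route is more elementary --- it avoids justifying the extremal tangent configuration --- and it makes explicit that the hypothesis $\theta<\sqrt{3}/2$ is only used to ensure $\vec{w}+\vec{u}\neq\vec{0}$ (any $\theta<1$ suffices for your argument), whereas the paper's calculation yields a slightly sharper numerical bound on $\|\vec{w'}-\vec{w}\|$ that is not needed for the stated conclusion.
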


\begin{proof}
	First, we bound the maximum distance between $\vec{w}$ and $\vec{w'}$. Then, we apply Cauchy-Schwarz to bound the advantage loss. 
	$\vec{w'}$ is constructed by perturbing $\vec{w}$ by a vector $\vec{u}$, and then normalizing to norm $1$. $\vec{w'}$ is furthest away from $\vec{w}$ when the vector $\vec{w'}$ is tangent to the ball of radius $\|\vec{u}\|$ around $\vec{w}$. In this case, 
	$\|\vec{w'} - \vec{w}\|^2 
	= (1-\sqrt{1-\theta^2})^2 + \theta^2 
	= 2 - 2\sqrt{1-\theta^2}$.
	Since $\theta < \sqrt{3}/2$, $2 - 2\sqrt{1-\theta^2} < 4\theta^2$. So, 
	$\|\vec{w'} - \vec{w}\|^2 < 4 \theta^2
	$.
	The advantage of $h'$ is
	\begin{equation*}
		\begin{split}
			\frac{1}{2}\Exp_{\vec{x} \sim D} 
			\left[
			\frac{\vec{x}}{\|\vec{x}\|} 
			\cdot  \vec{w'}
			\cdot
			f(\vec{x})
			\right] 
			&= 
			\frac{1}{2}
			\Exp_{\vec{x} \sim D} 
			\left[
			\frac{\vec{x}}{\|\vec{x}\|} 
			\cdot (\vec{w} + (\vec{w'} - \vec{w}))
			\cdot
			f(\vec{x})
			\right]
			\\&= 	
			\gamma +
			\frac{1}{2}
			(\vec{w'} - \vec{w}) \cdot 
			\Exp_{\vec{x} \sim D} 
			\left[
			\frac{\vec{x}}{\|\vec{x}\|} 
			\cdot
			f(\vec{x})
			\right]
		\end{split}
	\end{equation*}

By Cauchy-Schwarz, the second term of the right-hand side has magnitude at most $\sqrt{4\theta^2 \cdot 1}/2$, so the advantage of $h'$ is at least $\gamma - \theta$. 
\end{proof}

\subsubsection{Concentration Bound on Sum of Normalized Vectors}

Let $D$ be a distribution on $\R^n$.
Let ${\bf v} = \{ {\bf v_1}, \ldots, {\bf v_T}\} \in D^T$
be a random sample of $T$ vectors from $D$ with the
following properties:
\begin{enumerate}
	\item $\Exp_{{\bf v} \in D^T} [ \sum_{i=1}^{T} {\bf v_i}] - 
	\Exp_{v \in D} [v]  =0$.
	\item $\forall v \in D$, $|| v ||_2 \leq c$.
\end{enumerate}

\begin{restatable}{lemma}{concentrationinequalitylem}
	\label{lem:concentration-inequality}
	Let $D, {\bf v} \in D^T$ satisfy properties
	(1) and (2) above, and
	let ${\bf v^{\leq T}} = \sum_{i=1}^T {\bf v_i}$.
	Then for all $\Delta>0$, 
	$$\Pr_{{\bf v}} [|| {\bf v^{\leq T}} ||_2 \geq \sqrt{T}(1 + c/2) + \Delta] \leq e^{-\Delta^2/2c^2T}.$$
\end{restatable}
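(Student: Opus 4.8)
I would prove this as a bounded-differences (McDiarmid-type) concentration statement for the function $g({\bf v_1},\dots,{\bf v_T}) \eqdef \|\sum_{i=1}^T {\bf v_i}\|_2$, viewed as a function of the $T$ independent samples ${\bf v_1},\dots,{\bf v_T}$, combined with a separate second-moment bound on $\Exp[g]$. The key point is that McDiarmid's factor of $2$ in the exponent exactly absorbs the factor-$2$ slack in the natural bounded-difference constant, yielding the clean exponent $\Delta^2/(2c^2T)$.

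First I would fix the bounded-difference constant. Holding all coordinates but the $k$-th fixed and replacing ${\bf v_k}$ by any ${\bf v_k'}$, the triangle inequality gives $|g(\dots,{\bf v_k},\dots) - g(\dots,{\bf v_k'},\dots)| \le \|{\bf v_k} - {\bf v_k'}\|_2 \le \|{\bf v_k}\|_2 + \|{\bf v_k'}\|_2 \le 2c$ by property (2). Hence $g$ has the bounded-differences property with constant $c_k = 2c$ in every coordinate. Applying McDiarmid's inequality --- equivalently, applying Azuma--Hoeffding to the Doob exposure martingale $X_k \eqdef \Exp[\,g \mid {\bf v_1},\dots,{\bf v_k}\,]$, whose increments are confined to an interval of width $2c$ and therefore satisfy $\Exp[\,e^{\lambda(X_k - X_{k-1})} \mid {\bf v_1},\dots,{\bf v_{k-1}}\,] \le e^{\lambda^2(2c)^2/8}$ by Hoeffding's lemma --- gives $\Pr[\,g \ge \Exp[g] + \Delta\,] \le \exp(-2\Delta^2/(4c^2T)) = \exp(-\Delta^2/(2c^2T))$.

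Next I would bound $\Exp[g]$. By Jensen, $\Exp[g] = \Exp\|{\bf v^{\leq T}}\|_2 \le (\Exp\|{\bf v^{\leq T}}\|_2^2)^{1/2}$, and expanding the square, $\Exp\|{\bf v^{\leq T}}\|_2^2 = \sum_i \Exp\|{\bf v_i}\|_2^2 + \sum_{i\ne j}\Exp[{\bf v_i}\cdot{\bf v_j}]$. Property (1) forces $\Exp_{v\sim{\cal D}}[v] = 0$ (the ${\bf v_i}$ are i.i.d., so $\Exp[\sum_i {\bf v_i}] = T\,\Exp[v]$ must equal $\Exp[v]$), so each cross term $\Exp[{\bf v_i}\cdot{\bf v_j}] = \Exp[{\bf v_i}]\cdot\Exp[{\bf v_j}] = 0$ by independence. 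Property (2) then gives $\sum_i \Exp\|{\bf v_i}\|_2^2 \le Tc^2$, so $\Exp[g] \le c\sqrt{T} \le \sqrt{T}(1 + c/2)$, the last step holding in the regime $c \le 2$ relevant here (with equality at $c=2$, the value used in Corollary~\ref{cor:concentration-inequality}). Chaining the two bounds, $\Pr[\|{\bf v^{\leq T}}\|_2 \ge \sqrt{T}(1+c/2) + \Delta] \le \Pr[\,g \ge \Exp[g] + \Delta\,] \le e^{-\Delta^2/(2c^2T)}$, as claimed.

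The one place to be careful is the constant in the exponent: the crude bound $|X_k - X_{k-1}| \le 2c$ together with vanilla Azuma only yields $e^{-\Delta^2/(8c^2T)}$, and recovering the sharp $e^{-\Delta^2/(2c^2T)}$ requires using that each martingale increment lies in an \emph{interval of width} $2c$ rather than merely having absolute value at most $2c$ --- i.e., Hoeffding's lemma applied to the conditional increment (this is exactly what the factor of $2$ in McDiarmid's bound records). The remaining ingredients --- the triangle-inequality increment bound and the orthogonality of independent mean-zero vectors in the second-moment computation --- are routine.
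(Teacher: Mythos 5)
Your proof is correct in the regime the paper actually uses, but it takes a genuinely different route from the paper. You treat $\|{\bf v}^{\leq T}\|_2$ as a function of the $T$ independent samples, apply McDiarmid (Doob martingale plus Hoeffding's lemma, with bounded-difference constant $2c$) to get concentration around $\Exp\|{\bf v}^{\leq T}\|_2$ with exactly the exponent $\Delta^2/(2c^2T)$, and then bound the mean by $c\sqrt{T}$ via Jensen and orthogonality of i.i.d.\ mean-zero vectors. The paper instead works with the norm process itself: it introduces a stopping time at radius $\sqrt{T}$, shows via the parallel/orthogonal decomposition that outside this radius the expected per-step growth of the norm is at most of order $c/(2\sqrt{T})$, subtracts this drift to obtain a supermartingale, and applies Azuma; the $\sqrt{T}(1+c/2)$ threshold then arises as the stopping radius plus the accumulated drift correction. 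What your approach buys is simplicity and a statement centered at the true mean, with no stopping-time bookkeeping; what the paper's approach buys is that the stated threshold $\sqrt{T}(1+c/2)$ is obtained for \emph{every} $c$, whereas your final comparison $\Exp\|{\bf v}^{\leq T}\|_2 \le c\sqrt{T} \le \sqrt{T}(1+c/2)$ needs $c \le 2$ (you flag this correctly). Since the lemma is only invoked through Corollary~\ref{cor:concentration-inequality} with $c=2$, your restriction is harmless for the paper's purposes, though as a standalone replacement for the lemma as stated you would either need to add the hypothesis $c\le 2$ or restate the bound as concentration around $\Exp\|{\bf v}^{\leq T}\|_2$ (or around $c\sqrt{T}$).
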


For a proof, see Appendix~\ref{apps:concentration-inequality}.

\begin{corollary}
	\label{cor:concentration-inequality}
	Let $D$ be a distribution supported on the unit ball in $d$ dimensions, and let $f$ be a halfspace. 
	Let $S$ be a sample of $T$ examples $(\vec{x}_i, f(\vec{x}_i))$ drawn i.i.d. from $D$, and let $\vec{z} = \sum_S \vec{x}_i \cdot f(\vec{x}_i)$. 
	Let $a \in (0,1/2)$. 
	Then
	$\Pr_{S \sim D} 
	\left[ 
	\|\vec{z} - 
	T \Exp_{\vec{v} \sim D} [\vec{v} f(\vec{v})] \|
	\ge 4 T^{1/2 + a} 
	\right]
	\le e^{-T^{2a}/2}
	$.
\end{corollary}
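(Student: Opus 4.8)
The plan is to obtain the corollary as a direct instantiation of Lemma~\ref{lem:concentration-inequality} after a centering step.

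First I would fix the distribution to feed into the lemma. Let $\mu \eqdef \Exp_{(\vec{x},y)\sim D}[\vec{x}\cdot y]$ be the mean of a single signed example and let $\mathbf{v}_i \eqdef \vec{x}_i y_i - \mu$ for the examples $(\vec{x}_i, y_i)$ of $S$; these are i.i.d.\ draws from the distribution $\mathcal{D}'$ of $\vec{x} y - \mu$ with $(\vec{x},y)\sim D$. By construction $\sum_{i=1}^T \mathbf{v}_i = \vec{z} - T\mu = \vec{z} - \Exp_S[\vec{z}]$, which is exactly (up to the notational shorthand $\Exp_{\vec v\sim D}\vec v$ standing for $\Exp_S[\vec z] = T\mu$) the vector whose norm the corollary controls. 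It remains to check the two hypotheses of Lemma~\ref{lem:concentration-inequality}: property (1) holds since $\Exp_{\mathcal{D}'}[\mathbf{v}] = 0$; for property (2), as $D$ is supported on the unit ball we have $\|\vec{x}_i y_i\| = \|\vec{x}_i\| \le 1$ and, by Jensen, $\|\mu\| \le \Exp\|\vec{x} y\| \le 1$, so $\|\mathbf{v}_i\| \le 2$ by the triangle inequality. Hence we may take $c = 2$.

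Next I would apply the lemma and tune the deviation parameter. With $c = 2$, Lemma~\ref{lem:concentration-inequality} yields, for every $\Delta > 0$,
\[
\Pr_S\!\left[\,\bigl\|\vec{z} - \Exp_S[\vec{z}]\bigr\|_2 \ \ge\ 2\sqrt{T} + \Delta\,\right] \ \le\ e^{-\Delta^2/(8T)}.
\]
Choosing $\Delta = 2T^{1/2+a}$ makes the exponent exactly $-\tfrac{4T^{1+2a}}{8T} = -\tfrac{T^{2a}}{2}$, matching the claimed failure bound. Finally I would check that the deterministic shift is absorbed: $2\sqrt{T} + 2T^{1/2+a} \le 4T^{1/2+a}$ reduces to $\sqrt{T} \le T^{1/2+a}$, i.e.\ $T \ge 1$, so the event $\{\|\vec{z} - \Exp_S[\vec z]\| \ge 4T^{1/2+a}\}$ is contained in the event bounded above, giving the corollary.

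I do not expect a real obstacle here; the entire analytic content lives in Lemma~\ref{lem:concentration-inequality}, and the corollary is just centering, the crude bound $c \le 2$, and a choice of $\Delta$ balancing the $2\sqrt{T}$ shift against the sub-Gaussian tail. The only point requiring a little care is constant-chasing — in particular noticing that the $1 + c/2 = 2$ factor in the lemma produces the $2\sqrt{T}$ term, which is dominated by $2T^{1/2+a}$ for $T\ge 1$ and so does not degrade the clean $4T^{1/2+a}$ bound.
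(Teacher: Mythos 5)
Your proposal is correct and matches the paper's proof: the paper likewise translates $D$ by its expectation so that properties (1) and (2) of Lemma~\ref{lem:concentration-inequality} hold with $c=2$, plugs in $\Delta = 2T^{1/2+a}$, and absorbs the $2\sqrt{T}$ shift using $T^{1/2+a} \ge T^{1/2}$. Your write-up just makes the centering, the bound $\|\mathbf{v}_i\|\le 2$, and the interpretation of $\Exp_{\vec v \sim D}\vec v$ as $\Exp_S[\vec z]$ more explicit than the paper's terse version.
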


\begin{proof}
	In order to have $D$ satisfy the properties (1) and (2) above, we must translate $D$ by the expectation $\Exp_{\vec{v} \sim D} [\vec{v}f(\vec{v})]$. After this translation, the maximum length of a vector in the support is $c = 2$. Plugging in $\Delta = 2T^{1/2 + a}$ and noting $2T^{1/2 + a} \ge 2T^{1/2}$ yields the conclusion. 
\end{proof}

\subsection{Coordinate-Based Rounding Scheme}

Algorithm $\rhalfspace$ uses polynomial sample complexity and runs in polynomial time except for subroutine $\constructfoam$, which runs in expected exponential time in the dimension $d$ (Theorem~\ref{thm:foam-construction-runtime}). 
Next, we consider a simpler rounding scheme that rounds points coordinate-by-coordinate to a randomly shifted integer lattice. 
This rounding scheme requires tighter concentration bounds, resulting in approximately another factor of $d$ in the sample complexity. In return, it can be constructed by $\constructboxes$ and executed in linear time in sample complexity $m$ and dimension $d$.

\begin{figure}[H]
	\begin{algorithm}[H]
		\caption{$\constructboxes(d)$ 
			~ // constructs coordinate-based rounding schemes
			\\ Input: dimension $d$
			\\ Output: rounding scheme $R: \R^d \rightarrow \R^d$
		}	
		\label{alg:construct-boxes}
		\begin{algorithmic}
			\STATE Uniformly at random draw $Z$ from $[0,1)^d$.
			\STATE Let box $B$ be the set of points $\{x | \forall i \in [d], x_i \in [-1/2 + z_i, 1/2 + z_i)\}$
			\STATE Let $R$ map all points in $B$ to point $Z$ and extend this assignment periodically by integer lattice points
			\RETURN $R$ 
		\end{algorithmic}
	\end{algorithm}
\end{figure}

\begin{lemma}
	\label{lem:boxes-rounding}
	Let $R: \R^d \rightarrow \R^d$ be the randomized rounding scheme constructed by $\constructboxes$. Let $\vec x, \vec y \in \R^d$, and let  $\eps \eqdef d_{l_2} (\vec x , \vec y)$. 
	Then $\Pr[R( \vec x) = R( \vec y)] \ge 1 - d \eps $.
\end{lemma}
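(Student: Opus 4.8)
The plan is to exploit the fact that $\constructboxes$ rounds coordinate by coordinate, so that $R(\vec x) = R(\vec y)$ if and only if $x_i$ and $y_i$ are rounded to the same point of the shifted grid $z_i + \Z$ for every $i \in [d]$. Writing $Z = (z_1,\dots,z_d)$, the definition of the box $B$ together with the periodic extension means that $R(\vec x)_i = z_i + k$ for the unique $k \in \Z$ with $x_i - z_i \in [k-1/2, k+1/2)$, i.e. $R(\vec x)_i = z_i + \lfloor x_i - z_i + 1/2 \rfloor$, and likewise for $\vec y$. Hence $R(\vec x) = R(\vec y)$ iff $\lfloor x_i - z_i + 1/2\rfloor = \lfloor y_i - z_i + 1/2\rfloor$ for all $i$. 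Since the $z_i$ are drawn independently and uniformly from $[0,1)$, it suffices to bound, for each coordinate separately, the probability that this per-coordinate equality fails, and then take a union bound.

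First I would handle a single coordinate. Fix $i$ and assume without loss of generality $x_i \le y_i$. The breakpoints of the coordinate-$i$ partition lie at $z_i - 1/2 + \Z$; the two floors above differ exactly when one such breakpoint falls in the half-open interval $(x_i, y_i]$ (the half-open convention matching that of the boxes, so that the split probability is exactly right rather than off by a measure-zero set). Because the breakpoints are spaced exactly one apart and $z_i$ is uniform on $[0,1)$, the probability that a fixed interval of length $y_i - x_i$ contains a breakpoint is $\min(y_i - x_i, 1) \le |x_i - y_i|$. (If $d\eps \ge 1$ the claimed bound is vacuous, so we may assume $\eps < 1/d$, whence $|x_i - y_i| \le \eps < 1$ and the minimum equals $|x_i - y_i|$.)

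Next I would note that $|x_i - y_i| \le \|\vec x - \vec y\|_2 = \eps$ for every coordinate $i$, so each coordinate is "split" with probability at most $\eps$. A union bound over the $d$ coordinates then yields $\Pr[R(\vec x) \ne R(\vec y)] \le d\eps$, which is the claim. (One could in fact save a $\sqrt d$ by summing the $|x_i - y_i|$ directly and using $\|\cdot\|_1 \le \sqrt d\,\|\cdot\|_2$, but the cruder bound $d\eps$ is all that is needed for the coordinate-based weak learner.)

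There is no substantial obstacle here; the only points needing care are respecting the half-open convention of the boxes so the per-coordinate split probability is exactly $\min(|x_i-y_i|,1)$, and using the independence of the $z_i$ to legitimately analyze the coordinates one at a time before combining them with the union bound.
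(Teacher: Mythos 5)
Your proposal is correct and follows essentially the same argument as the paper: bound each coordinate's probability of being split by a grid breakpoint by $|x_i - y_i| \le \eps$ and union bound over the $d$ coordinates. Your additional care with the half-open convention, the vacuous case $d\eps \ge 1$, and the remark about saving a $\sqrt d$ via the $\ell_1$ bound are fine refinements of the same crude $\ell_2$-to-coordinatewise conversion the paper uses.
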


\begin{proof}
	We bound this probability by a crude $l_2$ to $l_1$ distance conversion. 
	If $\vec x$ and $\vec y$ have $l_2$ distance $\eps$, then the distance between $x_i$ and $y_i$ is at most $\eps$ for all coordinates $i \in [d]$. The $i$'th coordinate of $\vec x$ and $\vec y$ are not rounded to the same point with probability $|x_i - y_i|$. By a union bound,
	$\Pr[R( \vec x) = R( \vec y)] \ge 1 - d \eps$.
\end{proof}

\begin{observation}
	\label{obs:boxes-rounding-max-distance}
	$\constructboxes$ always outputs a rounding scheme $R$ with the following property: the ($l_2$) distance between any vector $\vec{v} \in \R^d$ and $R(\vec{v})$ is at most $\sqrt{d}/2$.
\end{observation}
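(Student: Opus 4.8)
The plan is to unwind the definition of the rounding scheme $R$ produced by $\constructboxes$ and observe that it displaces each coordinate of its input by at most $1/2$. First I would fix the offset $Z = (z_1, \dots, z_d) \in [0,1)^d$ drawn in the first line of the algorithm, and note that the box $B$ together with its periodic extension by the integer lattice tiles $\R^d$: because the defining intervals $[-1/2 + z_i, 1/2 + z_i)$ are half-open, every $\vec{v} \in \R^d$ lies in exactly one translate $B + \vec{k}$ with $\vec{k} \in \Z^d$, and $R$ sends $\vec{v}$ to the corresponding center $Z + \vec{k}$. This makes $R(\vec{v})$ unambiguous.

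Next I would bound the per-coordinate displacement. Membership $\vec{v} \in B + \vec{k}$ means exactly that $v_i \in [-1/2 + z_i + k_i,\ 1/2 + z_i + k_i)$ for every $i \in \{1, \dots, d\}$, so $|v_i - (z_i + k_i)| \le 1/2$ coordinatewise. Squaring and summing over the $d$ coordinates gives $\|\vec{v} - R(\vec{v})\|^2 = \sum_{i=1}^d (v_i - z_i - k_i)^2 \le d/4$, hence $\|\vec{v} - R(\vec{v})\| \le \sqrt{d}/2$. Since this estimate used no property of $Z$ beyond $Z \in [0,1)^d$, it holds for every rounding scheme $R$ in the support of $\constructboxes(d)$, which is exactly the claimed statement.

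There is essentially no real obstacle here; the only point requiring any care is verifying that the periodic extension is well-defined, i.e. that the half-open translates of $B$ partition $\R^d$ rather than overlap or leave gaps, which is what justifies speaking of ``the'' value $R(\vec{v})$ in the first place.
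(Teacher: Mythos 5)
Your proof is correct and follows essentially the same reasoning as the paper, which simply notes that $R$ moves each coordinate of $\vec{v}$ by at most $1/2$, so the $\ell_2$ displacement is at most $\sqrt{d}/2$. Your additional verification that the half-open translates of $B$ tile $\R^d$ (so $R(\vec{v})$ is well-defined) is a fine, if minor, elaboration of the same argument.
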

This follows from noticing that $R$ maps each coordinate of $\vec{v}$ to value within distance $1/2$.

\subsubsection{Reproducible Halfspace Weak Learner using Boxes}

\begin{figure}[H]
	\begin{algorithm}[H]
		\caption{$\rhalfspacewklboxesfull$ 
			~ // a $\rho$-reproducible halfspace weak learner
			\\Parameters: desired reproducibility $\rho$, dimension $d$, assumed margin $\tau$, constant $a = .1$
			\\Input: A sample $S$ of 
			$m = 
			 \left( \frac{64 d^{3/2}}{\tau^2 \rho} \right)^{1/(1/2-a)}
			$
			examples $(\vec{x}_i, y_i)$ drawn i.i.d. from distribution $D$
			\\Output: A hypothesis with advantage $\gamma/4$ on $D$ against $f$
		}	
		\label{alg:reproducible-halfspace-wkl-boxes}
		\begin{algorithmic}
			\STATE $k \gets \frac{1}{m} \frac{4\sqrt{d}}{\tau^2}
			= 4 \cdot 
			\left( \frac{\rho \cdot \tau ^{1+2a}}{64 \cdot d^{5/4 + a/2}} \right)^{1/(1/2-a)}
			$
			\COMMENT{Scaling factor}
			\STATE $\vec{z} \gets 
			\sum_S \frac{\vec{x}_i}{\|\vec{x}_i\|} \cdot y_i$
			\STATE 
			$R \gets_r \constructboxes(d)$ (Algorithm~\ref{alg:construct-boxes})
			\COMMENT{Rounding scheme $R: \R^d \rightarrow \R^d$}
			\STATE $\vec{w} \gets R(k \cdot \vec{z})$ 
			\RETURN Hypothesis $h(\vec{x}) \eqdef
			\frac{\vec{x}}{\|\vec{x}\|} 
			\cdot \frac{ \vec{w} }{ \|\vec{w}\|}$
		\end{algorithmic}
	\end{algorithm}
\end{figure}

\begin{theorem}
	\label{thm:reproducible-halfspace-wkl-boxes}
	Let $D$ be a distribution over $\R^d$, and let $f: \R^d \rightarrow \pmone$ be a halfspace with margin $\tau$ in $D$. 
	Then $\rhalfspacewklfull$
	is a $(\rho, \tau/4, \rho/4)$-weak learner for halfspaces.
	That is,
	Algorithm~\ref{alg:reproducible-halfspace} $\rho$-reproducibly returns a hypothesis
	$h$ such that,
	with probability at least $1-\rho/2$,
	$\Pr_{\vec{x} \sim D} h(\vec{x}) f(\vec{x}) \ge \tau/4$, 
	using a sample of size $
	m
	=
	\left( \frac{64 d^{3/2}}{\tau^2 \rho} \right)^{5/2}
	$. 
\end{theorem}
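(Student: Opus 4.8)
I would run the proof of Theorem~\ref{thm:reproducible-halfspace-wkl} essentially verbatim, replacing every appeal to the foam rounding scheme by its coordinate-based analogue: Observation~\ref{obs:boxes-rounding-max-distance} in place of Observation~\ref{obs:foam-rounding-max-distance} (maximum displacement $\sqrt d/2$ instead of $\sqrt d$), and Lemma~\ref{lem:boxes-rounding} in place of Lemma~\ref{lem:foam-rounding} (separation-failure probability at most $d\eps$ instead of $O(\eps)$). All the substantive analytic work is already contained in Corollary~\ref{cor:concentration-inequality}, Lemma~\ref{lem:adv-expected-weighted-vector-hypothesis}, Lemma~\ref{lem:adv-perturbed-halfspace}, and Lemma~\ref{lem:boxes-rounding}; what remains is to check that the new parameters $m=\left(64 d^{3/2}/(\tau^2\rho)\right)^{1/(1/2-a)}$, $k=\frac{1}{m}\cdot\frac{4\sqrt d}{\tau^2}$, and $a=.1$ make the four required estimates (advantage, reproducibility, failure rate, sample complexity) close.

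\textbf{Advantage.} I would first invoke Lemma~\ref{lem:adv-expected-weighted-vector-hypothesis} to get that $\Exp_{\vec x\sim D}\!\left[\vec x f(\vec x)/\|\vec x\|\right]$ is a direction of advantage $\tau/2$; then Corollary~\ref{cor:concentration-inequality} gives $\|\vec z-\Exp[\vec z]\|\le 4m^{1/2+a}$ except with probability $e^{-m^{2a}/2}$; then Observation~\ref{obs:boxes-rounding-max-distance} says rounding $k\vec z$ displaces it by at most $\sqrt d/2$. Dividing by $\|\Exp[\vec z]\|\ge\tau m$ (which follows from the margin assumption exactly as in the foams proof), the unit direction of $\vec w$ differs from that of $\Exp[\vec z]$ by at most $\frac{4m^{1/2+a}+\sqrt d/(2k)}{\tau m}$. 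Substituting the chosen $m$ and $k$ makes the first summand $\frac{4}{\tau m^{1/2-a}}=\frac{\tau\rho}{16 d^{3/2}}\le\tau/8$ and the second $\frac{\sqrt d}{2\tau k m}=\tau/8$, so Lemma~\ref{lem:adv-perturbed-halfspace} with $\theta=\tau/4$ gives advantage at least $\tau/2-\tau/4=\tau/4$. As in the foams case I would add a footnote noting that although $k$ is subconstant, $k\vec z$ has norm at least $k\tau m=4\sqrt d/\tau$, which exceeds the $\sqrt d/2$ maximum displacement of $R$, so $R$ never rounds $k\vec z$ to the origin.

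\textbf{Reproducibility, failure rate, sample complexity.} Since the two runs share internal randomness, they build the same rounding scheme $R$, so it suffices to control $\|k\vec z_1-k\vec z_2\|$. By the triangle inequality and a union bound over two instances of Corollary~\ref{cor:concentration-inequality}, $\|\vec z_1-\vec z_2\|\le 8m^{1/2+a}$ except with probability $2e^{-m^{2a}/2}$; after scaling this is $8km^{1/2+a}$, so Lemma~\ref{lem:boxes-rounding} gives $\Pr[R(k\vec z_1)\neq R(k\vec z_2)]\le 8dkm^{1/2+a}$. Plugging in $k$ and $m$, the quantity $8dkm^{1/2+a}$ equals exactly $\rho/2$, so the reproducibility parameter is at most $2e^{-m^{2a}/2}+\rho/2$, and the first term is $\le\rho/2$ once $m\ge\left(2\ln(4/\rho)\right)^{1/(2a)}$. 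For $a=.1$ this reduces to $\left(64 d^{3/2}/(\tau^2\rho)\right)^{1/2}\ge 2\ln(4/\rho)$, which I would verify for all $\rho\in(0,1)$ using $\left(64 d^{3/2}/(\tau^2\rho)\right)^{1/2}\ge 8/\sqrt\rho$ together with $4/\sqrt\rho\ge\ln(4/\rho)$ on that range. The same estimate yields the failure rate $e^{-m^{2a}/2}\le\rho/4$, and setting $a=.1$ in $1/(1/2-a)=5/2$ gives the stated sample complexity $m=\left(64 d^{3/2}/(\tau^2\rho)\right)^{5/2}$.

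\textbf{Main obstacle.} I do not expect a genuine difficulty, since the theorem is a routine adaptation of Theorem~\ref{thm:reproducible-halfspace-wkl}. The only points that require care are purely arithmetic: confirming that the extra factor of $d$ coming from the crude $l_2$-to-$l_1$ bound in Lemma~\ref{lem:boxes-rounding} is exactly absorbed by the $d^{3/2}$ inside $m$ and the slightly larger constant $a=.1$, and confirming that the shrunken scaling factor $k$ still keeps $k\vec z$ bounded away from $\vec 0$; both go through cleanly with the stated parameters.
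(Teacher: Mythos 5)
Your proposal is correct and follows essentially the same route as the paper's own proof: the same chain Lemma~\ref{lem:adv-expected-weighted-vector-hypothesis} $\to$ Corollary~\ref{cor:concentration-inequality} $\to$ Observation~\ref{obs:boxes-rounding-max-distance}/Lemma~\ref{lem:boxes-rounding} $\to$ Lemma~\ref{lem:adv-perturbed-halfspace}, with the identical computations $4/(\tau m^{1/2-a})\le\tau/8$, $\sqrt d/(2k\tau m)=\tau/8$, $8dkm^{1/2+a}=\rho/2$, and the check $m^{2a}\ge 2\ln(4/\rho)$ at $a=.1$. The only (harmless) additions are your explicit verification that $k\vec z$ cannot be rounded to the origin and your direct argument that $8/\sqrt\rho\ge 2\ln(4/\rho)$, which the paper handles by noting $a=.07$ already suffices before rounding up to $a=.1$.
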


\begin{proof}
	\textbf{Correctness (Advantage):} 
	The proof proceeds almost identically to the proof of Theorem~\ref{thm:reproducible-halfspace-wkl}.
	By Lemma~\ref{lem:adv-expected-weighted-vector-hypothesis}, the expected weighted vector $\Exp_{\vec{x} \sim D} \left[\frac{\vec{x} \cdot f(\vec{x})}{\|\vec{x}\|} \right]$ has advantage $\tau/2$ on $D$ and $f$. 
	Any rounding scheme $R$ randomly generated by $\constructboxes$ always rounds its input to a point within distance $\sqrt{d}/2$, so the distance between vectors $\frac{\vec{w}}{k \cdot \|\Exp[\vec z]\|}$ 
	and
	$\frac{\Exp[\vec z]}{\|\Exp[\vec z]\|}$
		is at most
	$$
	\frac{4 m^{1/2 + a} + \sqrt{d} / 2k}{\tau m}
	\text{.}
	$$
	
	Simplifying, 
	$
	\frac{4m^{1/2 + a}}{\tau m} 
	= \frac{4}{\tau m^{1/2-a}}
	= \frac{4\tau}{64}\frac{\rho}{d^{3/2}}
	< \tau/8
	$
	and
	$
	\frac{\sqrt{d} / 2k}{\tau m} 
		= \frac{\sqrt{d}}{2\tau} \frac{\tau^2}{4\sqrt{d}}
	= \tau/8
	$.
	By applying Lemma~\ref{lem:adv-perturbed-halfspace} with $\theta = \tau/8 + \tau/8$, we can conclude that $h$ has advantage at least $\tau/2 - (\tau/8 + \tau/8) = \tau/4$, 
	as 
	desired.
	
	\textbf{Reproducibility:}
	Let $\vec{z}_1$ and $\vec{z}_2$ denote the empirical sums of vectors $\vec{x}_i y_i$ from two separate runs of $\rhalfspacewkl$. 
	It suffices to show that the rounding scheme $R$ constructed by $\constructboxes$ rounds 
	$k \cdot \vec{z}_1$ and $k \cdot \vec{z}_2$
	to the same vector $\vec{w}$ with high probability. 
	The distance between $\vec{z}_1$ and $\vec{z}_2$ is at most $2 \cdot 4 m^{1/2+a}$ with probability at least 
	$1-2e^{-m^{2a}/2}
	$, 
	by
	Corollary~\ref{cor:concentration-inequality}, 
	the triangle inequality, and a union bound. 
	After scaling by $k$, this distance is at most
	$
	8 k m^{1/2+a}
	$.
	By Lemma~\ref{lem:boxes-rounding}, the probability that $R$ does not round 
	$k \cdot \vec{z}_1$ and $k \cdot \vec{z}_2$
	to same integer lattice point is at most $
	d \cdot 8 k m^{1/2+a}
	$.
	Altogether, the reproducibility parameter is at most
	$$
	2e^{-m^{2a}/2} 
	+ 	8 dk m^{1/2+a}
	\text{.}
	$$
	The second term satisfies
	$
	8 d k m^{1/2+a}
	= 
	8d (km/ m^{1/2-a})
	= \rho/2
	$,
	and the first term
	$
	2e^{-m^{2a}/2}
	\le \rho/2
	$
	when
	$
	m \ge \left( 2\ln(4/\rho) \right)^{1/(2a)}	
	$.
	So, as long as $a$ is chosen such that
	$m = 
	\left( \frac{64 d^{3/2}}{\tau^2 \rho} \right)^{1/(1/2-a)}
	\ge 
	\left( 2\ln(4/\rho) \right)^{1/(2a)}	
	$,
	the algorithm is $\rho$-reproducible.
	This occurs 
	if 
	$
	\left( \frac{64}{\rho} \right)^{2a/(1/2-a)}
	\ge 
	2\ln(4/\rho) 
	$, which is true for all values of $\rho \in (0,1)$ when $a = .07$.
	For simpler constants, we use $a = .1$. 
	
	\textbf{Failure rate:}	
	The algorithm succeeds when the martingale concentration bound holds. So, the failure probability of $\rhalfspacewkl$ is at most $e^{-m^{2a}/2} \le \rho/4$. 
	
	\textbf{Sample complexity:}
	Plugging in $a = .1$ in the expression
	$m = \left( \frac{64 d^{3/2}}{\tau^2 \rho} \right)^{1/(1/2-a)}$ 
	yields the conclusion. 	
\end{proof}

\section{Reproducible Boosting}
\label{sec:boosting}

In this section, we argue that a small modification of the boosting algorithm in \cite{Servedio:03jmlr} is a reproducible boosting algorithm. 
Given access to a reproducible weak learner, this boosting algorithm $\rho$-reproducibly outputs a hypothesis. 
Boosting algorithms are a natural candidate for constructing reproducible algorithms ---
many boosting algorithms in the standard PAC-setting are deterministic, and the final classifier returned is often a simple function of the weak learner hypotheses (e.g. a majority vote).
Combining this reproducible boosting algorithm with our reproducible halfspace weak learner from Section~\ref{sec:halfspaces} yields a reproducible strong learner for halfspaces. 

Specifically, we modify the smooth boosting algorithm described in \cite{Servedio:03jmlr} in the batch setting, presenting it in the filtering setting \cite{BradleySchapire:07}. 
This boosting algorithm has three main components, all of which can be made reproducible: (i) checking for termination (via a statistical query), (ii) running the weak learner (reproducible by assumption), and (iii) updating the weighting function (deterministic). The final classifier is a sum of returned weak learner hypotheses. With high probability over two runs, our boosting algorithm $\rboost$ collects the exact same hypotheses $h_1, \dots, h_T$ from its reproducible weak learner.


\subsection{Reproducible Boosting Algorithm: An Overview}

In smooth boosting algorithms, a ``measure" function $\mu: \data \rightarrow [0,1]$ determines a reweighting of distribution $D$. The induced reweighted distribution, denoted $D_\mu$, is defined by the probability density function $D_\mu (x) = \mu(x) \cdot D(x)/ d(\mu)$, where $d(\mu)$ is a normalizing factor $\Exp_{x \sim D} \mu(x)$. We refer to $d(\mu)$ as the \textit{density} of measure $\mu$. 
A sample $\vec s$ is drawn from $D_\mu$ and passed to the weak learner $\rwkl$. 
Sampling from $D_\mu$ using example oracle $\exor$ is done by rejection sampling --- draw a sample $(x,y)$ from $\exor$ and a random $b \in_r [0,1]$; if $r \le \mu(x)$, keep $(x,y)$; otherwise, reject $x$ and loop until we keep $(x,y)$. On expectation, we require $m/d(\mu)$ examples from $D$ to sample $m$ examples from $D_\mu$.

At the beginning of the algorithm, $\mu(x) = 1$ for all $x \in \supp(D)$. Weak learner hypotheses $h_t$ are used to modify update $\mu$ (and thus $D_\mu$) for future weak learner queries. 
The algorithm terminates when the density $d(\mu)$ drops below the desired accuracy parameter $\eps$ --- at this point, the majority vote hypothesis $\boldh = \sign(\sum_t h_t)$ has accuracy at least $1-\eps$ over $D$. 

More specifically, we define $\mu_{t+1}(x) = M(g_t(x))$ using a base measure function $M: \R \rightarrow [0,1]$ and score function $g: \data \rightarrow \R$. 
As in \cite{Servedio:03jmlr}, we 
use a capped exponential function as our base measure function 
$M(a) = \begin{cases}
1 & a \le 0 \\
(1 - \gamma)^{a/2} & a > 0
\end{cases}$. 
The score function is 
$g_t(x) = \sum_{i=1}^t (h_i(x) f(x) - \theta)$, where $\theta < \gamma$ is chosen as a function of $\gamma$.

\begin{figure}[H]
	\begin{algorithm}[H]
		\caption{$\rboostfull$ 
			~ // a $\rho$-reproducible boosting algorithm
			\\Input: 
				A sample $\vec s$ of $m$
					examples $(\vec{x}_i, y_i)$ drawn i.i.d. from distribution $D$.
			\\Access to reproducible weak learner $\rwkl$ with advantage $\gamma$ and sample complexity $m_\rwkl$.
			\\Parameters: desired reproducibility $\rho$, accuracy $\eps$, constant $\theta \eqdef \gamma/(2 + \gamma)$, 
			round complexity $T = O(1/\eps \gamma^2)$
			\\Output: A hypothesis 
			$\boldh = \sgn \left( \sum_{t=1}^T h_t \right)$, where the $h_t$'s are weak learner hypotheses. 
		}	
		\label{alg:reproducible-boosting}
		\begin{algorithmic}
			\STATE $g_0 (x) \eqdef 0$
			\STATE $\mu_1 (x) \eqdef M(g_0) = 1$
			\COMMENT{``Measure" function for reweighting}
			\STATE $t \gets 0$
			\WHILE{1}
				\STATE $t \gets t+1$
				\STATE $D_{\mu_t}(x) \eqdef \mu_t(x) \cdot D(x)/d(\mu_{t})$
				\COMMENT{Reweighted distribution}	
				\STATE $\vec s_1 \gets \widetilde O( {m_\rwkl}/{\eps} )$ fresh examples from $\vec s$			
				\STATE $\vec s_\rwkl \gets \rejectionsampler(\vec s_1, 
				m_\rwkl	, 
				\mu_t; r_1)$
				\COMMENT{Rejection sampling for $\rwkl$}
				\STATE Hypothesis $h_t \gets \rwkl(\vec s_\rwkl; r_2)$
				\STATE $g_t (x) \eqdef g_{t-1}(x) + h_t(x) f(x) -\theta$
				\COMMENT{Reweight distribution using $h_t$}
				\STATE $\mu_{t+1}(x) \eqdef M(g_t(x))$
				\STATE $\vec s_2 \gets \widetilde O 
				\left( 
				\frac{1}{\rho^2 \eps^3 \gamma^2}	
				\right)
				$ fresh examples from $\vec s$
				\COMMENT{{Run $\rstat$ to check if $d(\mu_{t+1}) \le \eps$}}
				\IF[tolerance $\tau = \eps/3$, reproducibility $\rho_0 = \rho/(3T)$]{$\rstat_{\tau, \rho_0, \phi}(\vec s_2; r_3) \le 2\eps/3$ }
				\STATE Exit while loop
				\COMMENT{failure rate $\rho/(12T)$, query $\phi(x,y) = \mu(x)$}
				\ENDIF
			\ENDWHILE
			\RETURN $\boldh \gets \sgn \left( \sum_{t} h_t \right)$
		\end{algorithmic}
	\end{algorithm}
\end{figure}

\begin{figure}[H]
	\begin{algorithm}[H]
		\caption{$\rejectionsamplerfull$ 
			~ // draw a sample from distribution $D_\mu$
			\\Input: sample $\vec s_{\text{all}}$ drawn i.i.d. from distribution $D$, target size of output sample $\mtarget \in [|\sall|]$, and 
			description of measure function $\mu: \data \rightarrow [0,1]$.		
			\\Output: $\bot$  or a sample $\skept$ of size $|\skept| = \mtarget$
		}	
		\label{alg:rejection-sampler}
		\begin{algorithmic}
			\STATE $\skept \gets \emptyset$
			\FOR{$i = 1$ to $i = |\sall|$}
				\STATE Use randomness $r$ to randomly pick a $b \in [0,1]$
				\IF[Reject $(x_i, y_i)$ w. p. $1 - \mu(x)$]{$\mu(x_i) \ge b$}
					\STATE $\skept \gets \skept || (x_i, y_i)$
					\COMMENT{Add example $(x_i, y_i)$ to $\skept$}
				\ENDIF
				\IF{$|\skept| = \mtarget$}
					\RETURN $\skept$
				\ENDIF
			\ENDFOR
			\RETURN $\bot$
			\COMMENT{Ran out of fresh samples in $\sall$}
		\end{algorithmic}
	\end{algorithm}
\end{figure}

A subtle note is that this boosting algorithm must precisely manage its sample $\vec s$ and random string $r$ when invoking subroutines. 
In order to utilize the reproducibility of subroutines (e.g. $\rwkl$), the boosting algorithm needs to ensure that it uses random bits from the same position in $r$.
A first-come first-serve approach to managing $r$ (i.e. each subroutine uses only the amount of randomness it needs) fails immediately for $\rboost$ --- the amount of randomness $\rejectionsampler$ needs is dependent on the sample, so the next subroutine (in this case, $\rwkl$) may not be using the same randomness across two (same-randomness $r$) runs of $\rboost$. 

If one can precisely upper bound the amount of randomness needed for each of $L$ subroutines, then $r$ can be split into chunks $r_1 || r_2 ||  \dots || r_L$, avoiding any desynchronization issues. Alternatively, one can split $r$ into $L$ equally long random strings by only using bits in positions equivalent to $l \mod L$ for subroutine $l \in [L]$.

\subsection{Analysis of {\texorpdfstring{$\rboost$}{rBoost}} (Algorithm~\ref{alg:reproducible-boosting})}

As before, function $f$ in concept class $C$ is a function from domain $\data$ to $\pmone$. $D$ is a distribution over $\data$. 

\begin{theorem}[Reproducible Boosting]
	\label{thm:reproducible-boosting}
	Let $\eps > 0, \rho > 0$.
	Let $\rwkl$ be a $(\rho_r, \gamma, \delta_\rwkl)$-weak learner. 
	Then $\rboostfull$ is $\rho$-reproducible and 
	with probability at least $1-\rho$, 
	outputs a hypothesis $\boldh$ such that
	$\Pr_{x \sim D} [\boldh(x) = f(x)] \ge 1- \eps$. 
	$\rboost$ runs for $T = O(1/(\eps \gamma^2_\rwkl))$ rounds and uses  
	$	\widetilde O 
	\left(
	\frac{	m_{\rwkl(\rho/(6T))}}{\eps^2 \gamma^2}
	+ 
	\frac{1}{\rho^2 \eps^3 \gamma^2}	
	\right)
	$ samples,
	where the $\widetilde{O}$ notation hides $\log(1/(\rho \eps \gamma^2))$ factors
	and $m_{\rwkl(\rho/(6T))}$ denotes the sample complexity of $\rwkl$ with reproducibility parameter $\rho/(6T)$. 
\end{theorem}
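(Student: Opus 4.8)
The plan is to prove the three assertions — round complexity together with accuracy, $\rho$-reproducibility, and the sample bound — largely separately, with correctness inherited (after adapting to the filtering setting) from the smooth-boosting analysis of \cite{Servedio:03jmlr} and reproducibility being the genuinely new ingredient. For correctness I would condition on a ``good event'' on which every $\rejectionsampler$ call returns a sample rather than $\bot$, every call to $\rwkl$ returns a hypothesis $h_t$ of advantage $\gamma$ on the current reweighting $D_{\mu_t}$, and every $\rstat$ termination check returns a value within $\eps/3$ of the true density $d(\mu_t)=\Exp_{x\sim D}[\mu_t(x)]$. While the loop is running the $\rstat$ estimate is $>2\eps/3$, so on this event $d(\mu_t)>\eps/3$ throughout; hence $D_{\mu_t}(x)\le (3/\eps)D(x)$, each kept example is an honest draw from $D_{\mu_t}$, and a Chernoff bound — absorbed into the $\widetilde O(m_\rwkl/\eps)$ sizing of $\vec s_1$ — delivers the no-$\bot$ guarantee.

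Next I would run the standard potential argument, tracking $\Exp_{x\sim D}[g_t(x)]$ together with $d(\mu_t)$ and using $\theta=\gamma/(2+\gamma)<\gamma$, to show that $d(\mu_t)$ must fall below $\eps/3$ within $T=O(1/(\eps\gamma^2))$ rounds, so the loop exits at some $t^\star\le T$. At exit the $\rstat$ estimate is $\le 2\eps/3$, hence $d(\mu_{t^\star+1})\le\eps$; and any $x$ with $\boldh(x)\ne f(x)$ satisfies $f(x)\sum_t h_t(x)\le 0$, so $g_{t^\star}(x)\le -\theta t^\star<0$ and $\mu_{t^\star+1}(x)=M(g_{t^\star}(x))=1$. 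Therefore $\Pr_{x\sim D}[\boldh(x)\ne f(x)]\le \Exp_{x\sim D}[\mu_{t^\star+1}(x)]\le\eps$. Choosing $\delta_\rwkl$, the $\rstat$ failure parameter, and the hidden logarithmic factors so that the union bound over the $\le T$ rounds leaves total failure $\le\rho/2$ completes this part.

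For reproducibility I would argue by induction on $t$ that two runs of $\rboostfull$ sharing the internal randomness $r$ agree on $\mu_t$ as a function, on whether the loop exited before round $t$, and (if not) on $h_t$, except with probability $O(t(\rho_r+\rho_0))$, where $\rho_0=\rho/(3T)$ is the reproducibility demanded of the $\rstat$ check and the $\bot$ and weak-learner-correctness failures are folded into the $\rho/2$ above. The base case holds since $\mu_1\equiv 1$. For the step, conditioned on both runs having the same $\mu_t$: the two $\rejectionsampler$ calls use disjoint fresh draws from $D$ but the \emph{same} randomness block, and — conditioned on neither returning $\bot$ — each produces a sample distributed as $m_\rwkl$ i.i.d.\ draws from $D_{\mu_t}$ even though the concrete samples differ, so the $\rho_r$-reproducibility of $\rwkl$ on its shared randomness block forces $h_t^{(1)}=h_t^{(2)}$ except with probability $\rho_r$; this makes $g_t$ and $\mu_{t+1}$ identical; the termination check then runs $\rstat$ on fresh i.i.d.\ draws from $D$ with a shared randomness block, and its $\rho_0$-reproducibility forces the same exit decision in both runs except with probability $\rho_0$. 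Throughout, randomness must be managed carefully: since $\rejectionsampler$ consumes a data-dependent number of bits, $r$ has to be partitioned into fixed blocks (or read at fixed residue positions) so that $\rwkl$ and $\rstat$ see identical bits across the two runs, and I would state this explicitly. Summing over the $\le T$ rounds with $\rho_r,\rho_0=O(\rho/T)$, both runs collect the same $h_1,\dots,h_{t^\star}$ and exit at the same round, so $\boldh=\sgn(\sum_t h_t)$ is identical; combined with the $\rho/2$ from the first part this yields $\rho$-reproducibility and simultaneously the ``with probability $\ge 1-\rho$'' accuracy statement.

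The sample complexity is then bookkeeping: each of the $T=O(1/(\eps\gamma^2))$ rounds draws $\vec s_1$ of size $\widetilde O(m_\rwkl/\eps)$ (the keep-probability is $\Omega(\eps)$, so this many fresh examples let $\rejectionsampler$ deliver $m_\rwkl$ examples with high probability) and $\vec s_2$ of size $\widetilde O(1/(\rho^2\eps^3\gamma^2))$ for the $\rstat$ density check, the latter sized by Theorem~\ref{thm:reproducible-sq-oracle} with tolerance $\eps/3$ and reproducibility $\rho/(3T)$; summing over rounds and substituting $T$ gives the claimed $\widetilde O\!\left(\frac{m_{\rwkl(\rho/(6T))}}{\eps^2\gamma^2}+\frac{1}{\rho^2\eps^3\gamma^2}\right)$, with $m_\rwkl$ evaluated at reproducibility parameter $O(\rho/T)$. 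I expect the reproducibility step to be the main obstacle: the delicate points are verifying that $\rejectionsampler$'s output really is a bona fide i.i.d.\ sample from $D_{\mu_t}$ so that the weak learner's reproducibility hypothesis genuinely applies, cleanly absorbing the $\bot$ event, and keeping the randomness blocks synchronized across the two runs despite $\rejectionsampler$'s data-dependent consumption — whereas the potential bound giving $T=O(1/(\eps\gamma^2))$ and the $M(g_{t^\star})=1$ argument for accuracy are essentially lifted from \cite{Servedio:03jmlr}.
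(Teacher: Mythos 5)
Your overall architecture matches the paper's: correctness and round complexity inherited from the smooth-boosting analysis of \cite{Servedio:03jmlr} with the $\rstat$-based termination check, reproducibility via a per-round union bound over the three failure sources (rejection sampler returning $\bot$, weak learner, $\rstat$), careful block-partitioning of the shared randomness, and the same sample-complexity bookkeeping. However, there is a genuine gap at the one step that is actually new in this theorem. In your inductive step you write that, conditioned on neither run returning $\bot$, each run's rejection-sampled input is distributed as $m_\rwkl$ i.i.d.\ draws from $D_{\mu_t}$, ``so the $\rho_r$-reproducibility of $\rwkl$ on its shared randomness block forces $h_t^{(1)}=h_t^{(2)}$ except with probability $\rho_r$.'' This is not a valid application of Definition~\ref{def:reproducibility-sample}: that definition quantifies over a pair of samples drawn \emph{independently} of each other and of the internal randomness, whereas here the two inputs to $\rwkl$ are produced by running $\rejectionsampler$ on the two raw samples with the \emph{same} randomness block $r_1$. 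The shared acceptance thresholds (and the data-dependent number of raw examples consumed) correlate the two kept samples, so their joint law is not $D_{\mu_t}^{m_\rwkl}\times D_{\mu_t}^{m_\rwkl}$; marginal correctness of each sample's distribution, which is all you establish, does not license invoking the weak learner's reproducibility guarantee. You flag this as a ``delicate point,'' but the proof as written simply asserts the conclusion.

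The paper closes exactly this gap with Lemma~\ref{lem:rejection-sample-then-run-reproducible-algo}: introduce an imaginary third run $\rwkl(\vec s_3;r_2)$ where $\vec s_3$ is drawn from $D_{\mu_t}$ using \emph{fresh} rejection-sampling randomness, so that each real run paired with the imaginary run is a legitimate independent pair for the reproducibility definition; each agrees with the imaginary run except with probability $\rho_r$, and a union bound (triangle-inequality style) gives agreement of the two real runs except with probability $2\rho_r$, plus the separately bounded $\bot$ probability from Lemma~\ref{lem:failure-rate-of-rejection-sampler}. This factor of $2$ is why the paper runs $\rwkl$ with parameter $\rho/(6T)$ to charge $\rho/(3T)$ per round. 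Your argument can be repaired by inserting this hybrid lemma (and adjusting your per-round constants accordingly); without it, the claim that the weak-learner hypotheses coincide with probability $1-\rho_r$ per round is unsupported.
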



For readability, we break the proof into components for round complexity, correctness, reproducibility, sample complexity, and failure probability. 

\begin{proof}
	\textbf{Round Complexity: }
	Theorem~3 in \cite{Servedio:03jmlr} gives a $T = O(1/(\eps \gamma^2_\rwkl))$ round complexity bound for this boosting algorithm in the batch setting. Analogous arguments hold in the filtering setting, so we defer to \cite{Servedio:03jmlr} for brevity. 
	
	\textbf{Correctness: }
	Similarly, since reproducible weak learner $\rwkl$ satisfies the definitions of a weak learner, the correctness arguments in \cite{Servedio:03jmlr} also hold. A small difference is the termination condition --- rather than terminate when the measure satisfies $d(\mu) < \eps$, our algorithm terminates when the density estimated by $\rstat$ is less than $2\eps/3$. 
	We run $\rstat$ on query $\phi(x) = \mu(x)$ with tolerance parameter $\eps/3$. Thus, when the $\rboost$ terminates, $d(\mu_t) < \eps$.

	\textbf{Reproducibility: }
	We show this boosting algorithm not only reproducibly outputs the same hypothesis $\boldh$, but that each returned weak learner hypothesis $h_t$ is identical across two runs of the boosting algorithms (using the same randomness $r$) with high probability. 
	The reweighted distribution $D_{\mu_t}$ depends only on the previous weak learner hypotheses $h_1, \dots, h_{t-1}$, so the only possibilities for loss of reproducibility are: (i) returning $\bot$ while rejection sampling from $D_\mu$; (ii) running the reproducible weak learner; and (iii) using a statistical query to decide to exit the while 
	loop. 
	We note that our choice of parameters adds non-reproducibilty at most $\rho/(3T)$ for each and apply a union bound over at most $T$ rounds of boosting. 
	
	\begin{enumerate}
		\item By Lemma~\ref{lem:failure-rate-of-rejection-sampler}, 
		$
		O( \frac{m_\rwkl}{\eps} \cdot \log(T/\rho) )
		$ examples suffice to guarantee $\rejectionsampler$ outputs $\bot$ with probability at most $\rho/(6T)$. Union bounding over two runs, this is at most $\rho/(3T)$. 
		
		\item By Lemma~\ref{lem:rejection-sample-then-run-reproducible-algo}, 
		running $\rwkl$ with reproducibility parameter $\rho/(6T)$ will add a $\rho/(3T)$ contribution to the non-reproducibility.
		
		\item We run $\rstat$ with reproducibility parameter $\rho/(3T)$. 
	\end{enumerate}
	
	\textbf{Sample Complexity: }
	There are two contributions to the sample complexity: samples used for the weak learner $\rwkl$, and samples used by $\rstat$ to estimate the density of measure $\mu_t$. Fresh samples are used for each of $T$ rounds of boosting. 
	Together, by Theorem~\ref{thm:reproducible-sq-oracle} and 
	the definition of $\vec s_1$ (in Algorithm~\ref{alg:reproducible-boosting}), the sample complexity is 
	$$
	O \left(
	T \cdot 
	\left(
	\frac{	m_{\rwkl(\rho/(6T))}}{\eps}
	\cdot \log(T/\rho)
	+ 
	\frac{\log (T/\rho)}{(\eps^2)(\rho)^2}	
	\right)
	\right)
	=
	\widetilde O 
	\left(
	\frac{	m_{\rwkl(\rho/(6T))}}{\eps^2 \gamma^2}
	+ 
	\frac{1}{\rho^2 \eps^3 \gamma^2}	
	\right)
	$$
	where the $\widetilde{O}$ notation hides $\log(1/(\rho \eps \gamma^2))$ factors
	and $m_{\rwkl(\rho/(6T))}$ denotes the sample complexity of $\rwkl$ with reproducibility parameter $\rho \eps \gamma^2$.

	\textbf{Failure Probability: }
	Assuming the weak learner returns correct hypotheses when it is reproducible, the boosting algorithm $\rboost$ is correct when it is reproducible, so the failure probability is bounded above by $\rho$.\footnote{
		A more precise sample complexity statement in terms of the failure probability $\delta$ can be obtained by unboxing the error probabilities. 
	The algorithm can fail if $\rejectionsampler$ outputs $\bot$,  if $\rwkl$ fails, and if $\rstat$ fails. 
	Bounding each of these quantities by $\delta/(3T)$ ensures that the $\rboost$ has failure rate $\delta$.} 	
\end{proof}

\subsection{Rejection Sampling Lemmas}

Next, we show that reproducibility composes well with rejection sampling throughout the execution of $\rboost$.

\begin{lemma}[Failure Rate of $\rejectionsampler$]
	\label{lem:failure-rate-of-rejection-sampler}
	Let measure $\mu$ have density $d(\mu) \ge \eps/3$. 
	Let $\sall$ be a sample drawn i.i.d. from distribution $D$.
	If $|\sall| \ge \frac{24 \mtarget}{ \eps} \cdot \log (1/\delta)$,
	then  $\rejectionsampler(\sall, \mtarget, \mu; r)$ outputs $\bot$ with probability at most $\delta$. 
\end{lemma}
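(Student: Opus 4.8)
The plan is to identify the event ``$\rejectionsampler$ outputs $\bot$'' with a lower-tail deviation of a binomial random variable and then apply a multiplicative Chernoff bound. For each index $i \in \{1,\dots,|\sall|\}$, let $X_i$ be the indicator that example $(x_i,y_i)$ is added to $\skept$ at iteration $i$, i.e.\ that the freshly drawn uniform $b \in [0,1]$ satisfies $b \le \mu(x_i)$. Since $b$ is drawn uniformly and independently of $x_i$, and the $x_i$ are i.i.d.\ from $D$, the $X_i$ are i.i.d.\ Bernoulli with
\[
\Pr[X_i = 1] = \E_{x_i \sim D}\big[\Pr_b[\,b \le \mu(x_i)\,]\big] = \E_{x \sim D}[\mu(x)] = d(\mu) \ge \eps/3 .
\]
The key observation is that $\rejectionsampler$ returns $\bot$ if and only if fewer than $\mtarget$ examples are ever kept over the whole pass, i.e.\ if and only if $X \eqdef \sum_{i=1}^{|\sall|} X_i < \mtarget$; the early-termination rule only fires once $\mtarget$ keeps have accumulated, so it does not change this count.

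Next I would bound the mean: $\E[X] = |\sall|\, d(\mu) \ge |\sall|\,\eps/3 \ge 8\,\mtarget \log(1/\delta)$, using the hypothesis $|\sall| \ge 24\,\mtarget\log(1/\delta)/\eps$. The case $\mtarget = 0$ is trivial, so assume $\mtarget \ge 1$; assuming also $\log(1/\delta) \ge 1$ we get $\mtarget / \E[X] \le 1/8$, so writing $\mtarget = (1-\beta)\E[X]$ yields $\beta \ge 7/8$. The multiplicative Chernoff lower-tail bound $\Pr[X \le (1-\beta)\E[X]] \le \exp(-\beta^2 \E[X]/2)$ then gives
\[
\Pr[X < \mtarget] \le \exp\!\big(-\tfrac{1}{2}\cdot\tfrac{49}{64}\cdot \E[X]\big) \le \exp\!\big(-\E[X]/4\big) \le \exp\!\big(-2\mtarget\log(1/\delta)\big) \le \delta^2 \le \delta,
\]
which is the claim. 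For $\delta > 1/e$ one can instead invoke the bound with $\delta' = 1/e$: since the required sample size is monotone in $\log(1/\delta')$, the same $|\sall|$ suffices and the resulting failure probability is at most $1/e^2 < \delta$.

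\textbf{Main obstacle.} There is no substantive obstacle; the closest things to subtleties are (i) checking that a given iteration keeps its example with probability \emph{exactly} $d(\mu)$ and that these events are independent across iterations, which is immediate from the i.i.d.\ draws and the fresh $b$ per iteration, and (ii) correctly equating the $\bot$ event with the binomial lower-tail event $\{X < \mtarget\}$ in the presence of the early-stopping rule. Both are straightforward, and the constant $24$ in the hypothesis leaves slack (the argument in fact yields failure probability $\delta^2$).
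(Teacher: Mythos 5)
Your proposal is correct and follows essentially the same route as the paper: both identify the event that $\rejectionsampler$ outputs $\bot$ with the lower-tail event that a binomial $X \sim B(|\sall|, d(\mu))$ falls below $\mtarget$, lower-bound $\E[X]$ via $d(\mu) \ge \eps/3$ and the stated size of $\sall$, and apply a multiplicative Chernoff bound. The only difference is cosmetic --- the paper takes deviation factor $1/2$ and gets failure probability exactly $\delta$, while you take $\beta \ge 7/8$ and obtain the slightly stronger $\delta^2$; your extra care about the early-stopping rule and the edge cases $\mtarget = 0$, $\delta > 1/e$ is fine but not needed for the paper's argument.
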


\begin{proof}
	The probability $\rejectionsampler$ outputs $\bot$ is precisely the probability a binomial random variable $X \sim B(|\sall|, d(\mu))$ is at most $\mtarget$. 
	By a Chernoff bound,
$	\Pr [X \le (1-.5) |\sall| \cdot d(\mu)]
	\le \exp(- |\sall| \cdot d(\mu)/8)
	\le \exp(- |\sall| \cdot \eps/24)$. 
	Thus, 	
	$\Pr[X \le \mtarget] 
	\le \delta	$.
\end{proof}

\begin{remark}
	The following is a justification of why we may assume $d(\mu) \ge \eps/3$ in the previous lemma.  
	
	When $\rejectionsampler$ is first called in round $1$ of $\rboost$, $\mu(x) = 1$ for all $x$, so $d(\mu) = 1$.
	In subsequent rounds $t \ge 2$, $\rejectionsampler$ is only called if, in previous round $t-1$, $\rstat$ estimated $d(\mu)$ to be at least $2\eps/3$. $\rstat$ is run with tolerance $\eps/3$, so $d(\mu) \ge \eps/3$ whenever $\rstat$ succeeds. Whenever we apply the above lemma, we are assuming the success of previous subroutines (by keeping track of and union bounding over their error). 

\end{remark}

The following lemma shows that rejection sampling before running a reproducible algorithm only increases the non-reproducibility $\rho$ by a factor of $2$. To be precise, we let $p$ denote the probability the rejection sampler returns $\bot$. However, when we apply this Lemma in the proof of Theorem~\ref{thm:reproducible-boosting}, we will have already accounted for this probability. 

\begin{lemma}[Composing Reproducible Algorithms with Rejection Sampling]
	\label{lem:rejection-sample-then-run-reproducible-algo}
	Let $\AAA(\vec{s}, r)$ be a $\rho$-reproducible algorithm with sample complexity $m$ 
	Let $\mu: \data \rightarrow [0,1]$.
	Consider $\BBB$, the algorithm defined by composing $\rejectionsampler(\vec{s}', m, \mu; r')$ with $\AAA(\vec{s}; r)$. 
	Let $q$ be the probability that $\rejectionsampler$ returns $\bot$. 
	Then $\BBB$ is a $2q + 2\rho$-reproducible algorithm.
\end{lemma}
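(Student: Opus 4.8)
The plan is to run $\BBB$ twice with the same internal randomness $(r',r)$ but on fresh samples $\vec{s}_1',\vec{s}_2'$ drawn i.i.d.\ from $D$, and bound the probability that the two outputs differ. Write $\skept^{(j)}=\rejectionsampler(\vec{s}_j', m, \mu; r')$ for $j\in\{1,2\}$, and let $E_j$ be the event that $\skept^{(j)}\neq\bot$; by hypothesis $\Pr[E_j^c]=p$. When $E_j$ fails $\BBB$ produces $\bot$, so the two runs can only genuinely disagree inside $E_1\cap E_2$; a union bound over $E_1^c,E_2^c$ contributes at most $2p$, and the remaining task is to bound the probability that $\AAA(\skept^{(1)};r)\neq\AAA(\skept^{(2)};r)$ on $E_1\cap E_2$.

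The first key step is the standard fact that, conditioned on $E_j$, the sample $\skept^{(j)}$ is distributed as $m$ i.i.d.\ draws from $D_\mu$. In the $i$-th iteration $\rejectionsampler$ draws $b_i$ uniformly from $[0,1]$ (a fresh block of $r'$) and keeps the $i$-th example iff $\mu(x_i)\ge b_i$; over the joint randomness of the example and $b_i$, the $i$-th example is kept independently with probability $\E_{x\sim D}[\mu(x)]=d(\mu)$, and conditioned on being kept its law is $D(x)\mu(x)/d(\mu)=D_\mu$. Since the pairs $(\text{example},b_i)$ are i.i.d.\ across $i$, conditioning on the whole vector of keep/reject indicators — which is exactly what determines whether $\bot$ is returned — leaves the kept examples i.i.d.\ from $D_\mu$; in particular, conditioned on $E_j$ (``at least $m$ examples kept''), the first $m$ kept examples are i.i.d.\ from $D_\mu$.

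The second key step handles the fact that $\skept^{(1)}$ and $\skept^{(2)}$ are \emph{not} independent: they are coupled through the shared rejection randomness $r'$, so one cannot apply the reproducibility of $\AAA$ directly to the pair. Instead we introduce a ghost sample $\vec{t}$ of $m$ i.i.d.\ draws from $D_\mu$, independent of $\vec{s}_1',\vec{s}_2',r',r$. Conditioned on $E_1$, the triple $(\skept^{(1)},\vec{t},r)$ has exactly the joint law required by the reproducibility definition of $\AAA$ ($\skept^{(1)},\vec{t}$ independent and each $D_\mu^m$; $r$ independent internal randomness), so $\Pr[\AAA(\skept^{(1)};r)\neq\AAA(\vec{t};r)\mid E_1]\le\rho$, whence $\Pr[\AAA(\skept^{(1)};r)\neq\AAA(\vec{t};r),\,E_1]\le\rho$; symmetrically with $\skept^{(2)}$ and $E_2$. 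Using $\{x\neq y\}\subseteq\{x\neq z\}\cup\{z\neq y\}$ to route the comparison through $\vec{t}$ gives
$$\Pr\!\left[\AAA(\skept^{(1)};r)\neq\AAA(\skept^{(2)};r),\ E_1\cap E_2\right]\ \le\ 2\rho,$$
and adding the $2p$ from the $\bot$ events yields that two runs of $\BBB$ disagree with probability at most $2\rho+2p$, which is the claimed bound (the second summand in the lemma statement should read $2p$).

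I expect the main obstacle to be exactly the second step: it is tempting but incorrect to argue that $\skept^{(1)}$ and $\skept^{(2)}$ are each marginally $D_\mu^m$ and therefore that $\AAA$'s reproducibility applies to the pair — the shared $r'$ couples them, and moreover the conditional law of $\skept^{(1)}$ given $E_1\cap E_2$ need not even be $D_\mu^m$. The clean resolution is to (i) pay $2p$ up front for the $\bot$ events so that afterward one only ever conditions on a single run's success event $E_j$ (under which $\skept^{(j)}$ really is $D_\mu^m$), and (ii) compare both runs to an independent ghost copy $\vec{t}$, which is the only regime in which $\AAA$'s reproducibility guarantee is literally applicable.
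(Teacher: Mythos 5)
Your proposal is correct and follows essentially the same route as the paper's proof: an imaginary/ghost run on an independent $D_\mu^m$ sample, a triangle-inequality comparison of each real run to the ghost, and a union bound of $2p$ over the $\bot$ events, giving $2\rho+2p$ (and you correctly note the lemma statement's ``$2\rho+2\rho$'' is a typo for $2\rho+2p$). Your write-up is in fact slightly more careful than the paper's about conditioning only on a single run's success event when invoking the reproducibility of $\AAA$, but this is the same argument.
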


\begin{proof}
	Since $\AAA$ is $\rho$-reproducible, 
	        $\Pr_{{\vec s_1},{\vec s_2},r} \left[
	\mathcal{A}({\vec s_1}; r) = \mathcal{A}({\vec s_2}; r) 
	\right]
	\ge 1 - \rho$.
	However, the rejection sampling is done with correlated randomness, so $\vec{s}_1$ and $\vec{s}_2$ are not independent. 
	Consider an imaginary third run of algorithm $\AAA(\vec s_3; r)$, where $\vec s_3$ is drawn from $D_\mu$ using separate randomness. 
	We will use a triangle-inequality-style argument (and a union bound) to derive the conclusion. 
	Conditioned on $\rejectionsampler$ not returning $\bot$, algorithm $\BBB(\vec s_1'; r' || r)$ 
	  returns the same result as $\AAA(\vec s_3, r)$ (when both algorithms use randomness $r$ for the execution of $\AAA$) with probability at least $1-\rho$. The same statement holds for the second run $\BBB(\vec s_1'; r' || r)$. 
	Thus,
	 $$\Pr_{{\vec s'_1},{\vec s'_2}, r' || r} \left[
	\BBB({\vec s'_1}; r'||r) = \BBB({\vec s'_2}; r'|| r) | \text{ neither run outputs } \bot 
	\right]
	\ge 1 - 2\rho.$$
	
	Finally, $\BBB$ may fail to be reproducible if either $\rejectionsampler$ call returns $\bot$, so we union bound over this additional $2q$ probability. 
\end{proof}

\subsection{Reproducible Strong Halfspace Learner}

We give two reproducible strong learners for halfspaces by combining boosting algorithm $\rboost$ with  reproducible weak halfspace learners $\rhalfspacewkl$ and $\rhalfspacewklboxes$.


\begin{corollary}
	\label{cor:reproducible-strong-halfspace-learner}
	Let $D$ be a distribution over $\R^d$, and let $f: \R^d \rightarrow \pmone$ be a halfspace with margin $\tau$ in $D$. 
	Let $\eps > 0$. 
	Then 
	\begin{itemize}
		\item 	Algorithm $\rboost$ run with weak learner $\rhalfspacewkl$
		$\rho$-reproducibly returns a hypothesis
		$\boldh$ such that,
		with probability at least $1-\rho$,
		$\Pr_{\vec{x} \sim D} [\boldh(\vec{x}) = f(\vec{x})]  \ge 1 - \eps$, 
		using a sample of size $
			\widetilde O 
		\left(
		\frac{d^{10/9}}{\tau^{76/9} \rho^{20/9} \eps^{28/9}}
		\right)
		$. 
		
		\item Algorithm $\rboost$ run with weak learner $\rhalfspacewklboxes$
		$\rho$-reproducibly returns a hypothesis
		$\boldh$ such that,
		with probability at least $1-\rho$,
		$\Pr_{\vec{x} \sim D} [\boldh(\vec{x}) = f(\vec{x})]  \ge 1 - \eps$, 
		using a sample of size $
		\widetilde O 
		\left(
		\frac{d^{15/4}}{\tau^{10} \rho^{5/2} \eps^{9/2}} 
		\right)
		$. 
	\end{itemize}

\end{corollary}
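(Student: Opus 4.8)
The plan is to obtain both bullets by instantiating the reproducible boosting algorithm of Theorem~\ref{thm:reproducible-boosting} with the two reproducible weak halfspace learners of Section~\ref{sec:halfspaces}, and then propagating parameters. The one conceptual point to dispatch first is that the weak learner's guarantee survives being run inside the boosting loop: $\rboost$ feeds its weak learner samples from the reweighted distributions $D_{\mu_t}$, not from $D$ itself, so we must check that $f$ still has margin $\tau$ with respect to every $D_{\mu_t}$. This is immediate, since $\supp(D_{\mu_t}) \subseteq \supp(D)$ and the margin condition is a pointwise inequality on the support; hence $\rhalfspacewkl$ and $\rhalfspacewklboxes$ remain valid weak learners at every round. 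This is exactly where it matters that our weak learners are reproducible in the distribution-free (margin) setting rather than for a single fixed $D$.

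Granting this, by Theorem~\ref{thm:reproducible-halfspace-wkl} the algorithm $\rhalfspacewkl$, run with target reproducibility $\rho_r$, is a $(\rho_r,\gamma,\delta_\rwkl)$-weak learner for halfspaces of margin $\tau$ with advantage $\gamma = \Theta(\tau)$, failure probability $\delta_\rwkl = \rho_r/4$, and sample complexity $m_\rwkl(\rho_r) = \bigl(896\sqrt{d}/(\tau^2\rho_r)\bigr)^{20/9}$. Feeding this weak learner into Theorem~\ref{thm:reproducible-boosting} directly yields that $\rboost$ is $\rho$-reproducible and, with probability at least $1-\rho$, returns $\boldh$ with $\Pr_{\vec{x}\sim D}[\boldh(\vec{x}) = f(\vec{x})] \ge 1-\eps$, running for $T = O(1/(\eps\gamma^2)) = O(1/(\eps\tau^2))$ rounds. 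The failure bound is consistent because Theorem~\ref{thm:reproducible-boosting} invokes the weak learner with reproducibility parameter $\rho_r = \rho/(6T) = \Theta(\rho\eps\tau^2)$, so $\delta_\rwkl = \rho_r/4 \le \rho/(3T)$ as needed.

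It then remains to read off the sample complexity. Theorem~\ref{thm:reproducible-boosting} gives overall sample complexity $\widetilde{O}\!\left(\frac{m_\rwkl(\rho/(6T))}{\eps^2\gamma^2} + \frac{1}{\rho^2\eps^3\gamma^2}\right)$. Substituting $\gamma = \Theta(\tau)$, $T = \Theta(1/(\eps\tau^2))$, hence $\rho/(6T) = \Theta(\rho\eps\tau^2)$, into $m_\rwkl(\rho_r) = \bigl(896\sqrt{d}/(\tau^2\rho_r)\bigr)^{20/9}$ and simplifying — the first term dominates the second for small $\tau,\eps,\rho$ — gives the stated bound $\widetilde{O}\bigl(d^{10/9}/(\tau^{76/9}\rho^{20/9}\eps^{28/9})\bigr)$. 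For the second bullet the identical argument applies with $\rhalfspacewklboxes$ in place of $\rhalfspacewkl$: using Theorem~\ref{thm:reproducible-halfspace-wkl-boxes}, whose weak learner has $m_\rwkl(\rho_r) = \bigl(64 d^{3/2}/(\tau^2\rho_r)\bigr)^{5/2}$, the same substitution gives $\widetilde{O}\bigl(d^{15/4}/(\tau^{10}\rho^{5/2}\eps^{9/2})\bigr)$, the extra (roughly linear in $d$) factor being the price of the polynomial-time box rounding scheme.

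I expect the only delicate part of this argument to be the last step of bookkeeping: the weak learner's reproducibility must be shrunk to $\rho/(6T)$, which injects extra $\eps$- and $\tau$-dependence, and that dependence is then raised to the weak learner's sample-complexity exponent ($20/9$, resp.\ $5/2$). Matching the precise exponents in the statement is a routine, if slightly tedious, calculation; the boosting guarantee, the weak-learning guarantees, and the margin-preservation observation are all already in hand, so there is no further structural obstacle.
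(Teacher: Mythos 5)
Your proposal is correct and follows essentially the same route as the paper's proof: instantiate Theorem~\ref{thm:reproducible-boosting} with each of the two weak learners, take advantage $\gamma=\tau/4$ so $T=O(1/(\eps\tau^2))$, run the weak learner at reproducibility $\rho/(6T)$, and substitute its sample complexity from Theorems~\ref{thm:reproducible-halfspace-wkl} and~\ref{thm:reproducible-halfspace-wkl-boxes} into the boosting bound $\widetilde O\left(\frac{m_{\rwkl(\rho/(6T))}}{\eps^2\gamma^2}+\frac{1}{\rho^2\eps^3\gamma^2}\right)$. Your preliminary observation that the margin condition is preserved under the reweighted distributions $D_{\mu_t}$ (because $\supp(D_{\mu_t})\subseteq\supp(D)$ and the margin is a pointwise condition on the support) is left implicit in the paper, so making it explicit is a fine addition rather than a deviation.
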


\begin{proof}
	For the first strong learner, we compose Theorem~\ref{thm:reproducible-boosting} with Theorem~\ref{thm:reproducible-halfspace-wkl}. 
		$\rhalfspacewkl$ has advantage $\gamma = \tau/4$, so
	$\rboost$ has round complexity $T = O(1/(\eps \gamma^2)) = O(1/(\eps \tau^2))$. 
	$\rboost$ runs $\rhalfspacewkl$ with parameter $\rho_\rwkl = \rho/6T$, so the sample complexity $m_\rhalfspacewkl$ is
$	O \left( \frac{d^{10/9}}{\tau^{58/9} \rho^{20/9} \eps^{10/9}} \right)$. Thus, the sample complexity for the boosting algorithm is
	
	$$
	\widetilde O 
	\left(
	\frac{d^{10/9}}{\tau^{76/9} \rho^{20/9} \eps^{28/9}}
	\right)
	$$
	
	For the second strong learner, we compose Theorem~\ref{thm:reproducible-boosting} with Theorem~\ref{thm:reproducible-halfspace-wkl-boxes}. 
	As before, $\rboost$ has round complexity $T = O(1/(\eps \tau^2))$ and runs $\rhalfspacewklboxes$ with parameter $\rho_\rwkl = \rho/6T$. 
	The sample complexity  $m_{\rhalfspacewklboxes}$ is
	$O\left (
	\left(\frac{d^{3/2}}{\tau^2 \rho_{\rwkl}} \right)^{5/2}
	\right) 
	= \left (
	\left(\frac{d^{3/2}}{\tau^4 \rho \eps} \right)^{5/2}
	\right) 
	$. 
	Thus, the sample complexity for the boosting algorithm is
	$
	\widetilde O 
	\left(
	\frac{	m_{\rwkl(\rho/(6T))}}{\eps^2 \gamma^2}
	+ 
	\frac{1}{\rho^2 \eps^3 \gamma^2}	
	\right)
	=
	\widetilde O 
	\left(
	\frac{	1}{\eps^2 \gamma^2}
	\left(\frac{d^{3/2}}{\tau^2 \rho_{\rwkl}} \right)^{5/2}
	\right)
	=
	\widetilde O 
	\left(
	\frac{d^{15/4}}{\tau^{10} \rho^{5/2} \eps^{9/2}} 
	\right)
	$.
\end{proof}

\begin{remark}
	$\rhalfspacewklboxes$ can be run in time polynomial in the input parameters, so the strong learner obtained by running $\rboost$ with weak learner $\rhalfspacewklboxes$ is a $\poly(1/\eps, 1/\rho, 1/\tau, d)$-time algorithm.
	However, the other weak learner $\rhalfspacewkl$ uses a foams construction subroutine from \cite{KDRW12} that takes expected exponential in $d$ runtime. The corresponding strong learner runs in time polynomial in $1/\eps, 1/\rho,$ and $1/\tau$, but exponential in $d$. 
\end{remark}

\subsection{Discussion}


Algorithm~\ref{alg:reproducible-boosting} follows the smooth boosting framework of Servedio   \cite{Servedio:03jmlr}, which also shows how to boost a weak halfspace learner under a margin assumption on the data. They show that their boosted halfspace learner obtains a hypothesis with good margin on the training data, and then apply a fat-shattering dimension argument to show generalization to the underlying distribution with sample complexity $\tilde{O}(1/(\tau \epsilon)^2)$. Notably, this gives sample complexity independent of $d$. Moreover, their smooth boosting algorithm is tolerant to malicious noise perturbing an $\eta \in O(\tau \epsilon)$ fraction of its sample. 

A generic framework for differentially private boosting was given in \cite{BCS20}, with an application to boosting halfspaces. Their boosting algorithm also follows the smooth boosting framework, but uses a variant of the round-optimal boosting algorithm given in \cite{BHK09}. Their halfspace learner similarly requires a margin assumption on the data and tolerates random classification noise at a rate $\eta \in O(\tau\epsilon )$. They give two generalization arguments for their halfspace learner, both of which are dimension-independent. The first follows from prior work showing that differential privacy implies generalization \cite{BNSSSU16} and gives sample complexity $\tilde{O}(\frac{1}{\epsilon \alpha \tau^2} + \frac{1}{\epsilon^2\tau^2} + \alpha^{-2} + \epsilon^{-2})$ for approximate differential privacy parameters $(\alpha, \beta)$. The second follows from a fat-shattering dimension argument and gives a tighter bound of $\widetilde{O} \left(\frac{1}{\epsilon \alpha \tau^2} \right)$.  

Boosting algorithms have been thoroughly studied over the past few decades, and there are many types of boosting algorithms (e.g. distribution-reweighting, branching-program, gradient boosting) with different properties (e.g. noise-tolerance, parallelizability, smoothness, batch vs. filtering). It would be interesting to see which of these techniques can be made reproducible, and at what cost.

\section{SQ--Reproducibility Lower Bound}
\label{sec:coin-problem-sq-lb}

How much does it cost to make a nonreproducible algorithm into a reproducible one? In this section, we show a lower bound for reproducible statistical queries via a reduction from the coin problem. 

\begin{theorem}[SQ--Reproducibility Lower Bound]
	\label{thm:sq-reproducibility-lb}
	Let $\tau > 0$ and let $\delta \le 1/16$. 
	Let query $\phi: \data \rightarrow [0,1]$ be a statistical query. 
	Let $\AAA$ be a $\rho$-reproducible SQ algorithm for $\phi$ with tolerance less than $\tau$ and success probability at least $1 - \delta$. 
	Then $\AAA$ has sample complexity at least $m \in \Omega(1/(\tau^2 \rho^2))$.
\end{theorem}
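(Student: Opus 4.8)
The plan is to reduce to the $\tau$-coin problem and prove the bound there. First I would apply a black-box reduction: instantiate $\data = \{0,1\}$, $\phi = \mathrm{id}$, and let a $p$-biased coin play the role of the input distribution $D$, so that $\E_{x \sim D}\phi(x) = p$. If $\AAA$ has tolerance $\tau' < \tau$ and success $1-\delta$, then $\AAA'(\vec s; r) := \Ind[\AAA(\vec s; r) > 1/2]$ outputs the correct side of $1/2$ whenever $p \le 1/2-\tau$ or $p \ge 1/2+\tau$ and $\AAA$ succeeds, and postprocessing by a fixed function preserves $\rho$-reproducibility and sample complexity. Since reproducible SQ oracles are reproducible \emph{on every} distribution (cf.\ the proof of Theorem~\ref{thm:reproducible-sq-oracle}), $\AAA'$ is $\rho$-reproducible on $\mathrm{Bern}(p)$ for \emph{all} $p \in [1/2-\tau, 1/2+\tau]$, including the intermediate values where nothing is demanded of its output. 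So it suffices to show any such $\BBB := \AAA'$ needs $m = \Omega(\tau^{-2}\rho^{-2})$ samples.

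For fixed internal randomness $r$, let $A_r \subseteq \{0,1\}^m$ be the (deterministic) event that $\BBB(\cdot;r) = 1$, and $a_p(r) := \mathrm{Bern}(p)^m(A_r)$. Two facts drive the argument. \emph{(i) Reproducibility:} $\rho$-reproducibility on $\mathrm{Bern}(p)$ says $\E_r[a_p(r)^2 + (1-a_p(r))^2] \ge 1-\rho$, i.e.\ $\E_r[2 a_p(r)(1-a_p(r))] \le \rho$; since $\min(a, 1-a) \le 2a(1-a)$, the quantity $\beta_p(r) := \min(a_p(r), 1-a_p(r))$ satisfies $\E_r[\beta_p(r)] \le \rho$ for \emph{every} $p$. \emph{(ii) Correctness:} $\E_r[a_{1/2-\tau}(r)] \le \delta$ and $\E_r[1-a_{1/2+\tau}(r)] \le \delta$, so by Markov the canonical answer $g_p(r) := \Ind[a_p(r) > 1/2]$ is $0$ for a $\ge 1-2\delta$ fraction of $r$ at $p = 1/2-\tau$ and $1$ for a $\ge 1-2\delta$ fraction at $p = 1/2+\tau$; hence on a set $G$ with $\Pr_r[G] \ge 1-4\delta \ge 3/4$ the canonical answer flips from $0$ to $1$ as $p$ sweeps from $1/2-\tau$ to $1/2+\tau$.

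Next I would lay a uniform grid $1/2-\tau = p_1 < \dots < p_N = 1/2+\tau$, spacing $\Delta = 2\tau/(N-1)$, with $N$ chosen below. The standard estimate $d_{\mathrm{TV}}(\mathrm{Bern}(p_i)^m, \mathrm{Bern}(p_{i+1})^m) \le C\sqrt m\,\Delta =: \eta$ (Pinsker plus tensorization of KL, valid for $\tau \le 1/4$; $C$ an absolute constant) gives $|a_{p_i}(r) - a_{p_{i+1}}(r)| \le \eta$ for every $r$, because $A_r$ is a fixed event. At a step where $g_{p_i}(r) = 0$ but $g_{p_{i+1}}(r) = 1$ this forces $\beta_{p_i}(r) + \beta_{p_{i+1}}(r) = a_{p_i}(r) + (1 - a_{p_{i+1}}(r)) \ge 1-\eta$, and for $r \in G$ at least one such step exists, so $\sum_{i=1}^{N-1}(\beta_{p_i}(r)+\beta_{p_{i+1}}(r)) \ge (1-\eta)\Ind[r \in G]$. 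Taking expectations, using $\sum_{i=1}^{N-1}(\beta_{p_i}+\beta_{p_{i+1}}) \le 2\sum_{i=1}^N \beta_{p_i}$ and fact (i): $(1-\eta)\tfrac34 \le 2N\rho$. Choosing $N := \lceil 8C\tau\sqrt m\rceil + 1$ makes $\eta \le 1/4$, so $N \ge \tfrac{9}{32\rho}$; combined with $N \le 8C\tau\sqrt m + 2$ this yields $\sqrt m = \Omega(\tfrac{1}{\tau\rho})$ once $\rho$ is below a fixed constant, hence $m = \Omega(\tau^{-2}\rho^{-2})$. For $\rho$ above that constant the target bound is just $\Omega(\tau^{-2})$ and follows from the classical non-reproducible coin-problem lower bound at constant $\delta$.

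The main obstacle is choosing the grid correctly: $N$ must be large enough that a single step has tiny TV distance $\eta$ (so a canonical-answer flip genuinely costs $\approx 1$ in $\beta_{p_i} + \beta_{p_{i+1}}$), yet the total reproducibility budget is only $2N\rho$, so $N$ cannot be large --- it is exactly the tension $N \asymp \tau\sqrt m$ versus $N \gtrsim 1/\rho$ that produces the $\tau^{-2}\rho^{-2}$ threshold, and landing the clean statement requires tracking the constants and peeling off the trivial large-$\rho$ regime. A secondary point to be careful about is that the argument invokes reproducibility at the \emph{intermediate} biases $p_i$, which is legitimate only because a reproducible SQ oracle is reproducible on all distributions.
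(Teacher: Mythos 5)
Your proof is correct, and its first half coincides with the paper's: the reduction that thresholds the SQ output at $1/2$ to get a $\rho$-reproducible coin-problem algorithm, together with the observation that reproducibility is available at every intermediate bias, is exactly how the paper proves Theorem~\ref{thm:sq-reproducibility-lb} (via its Lemma~\ref{lem:coin-problem-lb}). Where you genuinely diverge is in the proof of the coin-problem lower bound itself. The paper first applies Markov and a union bound to fix a single random string $r^*$ that is simultaneously correct at both endpoint biases and $4\rho$-reproducible for a uniformly random bias $p \in [1/2-\tau, 1/2+\tau]$; it then treats the acceptance probability $\Acc(p) = \sum_j a_j \binom{m}{j} p^j (1-p)^{m-j}$ as a differentiable function of $p$, shows $\Acc'(p) = O(\sqrt m)$ via $\E_j\left[|j-mp|\right]$, deduces an interval of length $\Omega(1/\sqrt m)$ on which $\Acc \in (1/3,2/3)$, and converts the chance that a random $p$ lands in that interval into $\rho \ge \Omega(1/(\tau\sqrt m))$. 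You instead never fix $r^*$: you keep the expectation over $r$, discretize the bias range into $N \asymp \tau\sqrt m$ grid points, replace the derivative bound by the Pinsker/tensorization estimate $|a_{p_i}(r) - a_{p_{i+1}}(r)| \le C\sqrt m\,\Delta$ (legitimate per fixed $r$ because $A_r$ is a fixed event), and charge the forced flip of the canonical answer on the good set $G$ against the total reproducibility budget $2N\rho$. Both arguments exploit the same underlying $O(\sqrt m)$-Lipschitz behavior of the acceptance probability in the bias and give the same $m = \Omega(1/(\tau^2\rho^2))$; yours is somewhat more elementary (no calculus on the acceptance polynomial, no extraction of a single good $r^*$) at the cost of tracking grid constants, and it also explicitly peels off the large-$\rho$ regime, which the paper's lemma simply excludes by assuming $\rho < 1/16$.
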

Note that this nearly matches the reproducible statistical query upper bound in Theorem~\ref{thm:reproducible-sq-oracle}, in the case that $\delta \in \Theta(\rho)$. 

Recall the coin problem: promised that a $0$-$1$ coin has bias either $1/2-\tau$ or $1/2 + \tau$ for some fixed $\tau > 0$, how many flips are required to identify the coin's bias with high probability?

\begin{proof}[Proof of Theorem~\ref{thm:sq-reproducibility-lb}]
A $\tau$-tolerant $\rho$-reproducible SQ algorithm $\AAA$ for $\phi$ naturally induces a $\rho$-reproducible algorithm $\BBB$ for the $\tau$-coin problem --- $\BBB$ runs $\AAA$ (the results of the coin flips are the $\phi(x)$'s), and $\BBB$ accepts (outputs $1$) if $\AAA$'s output is $\ge 1/2$, otherwise rejects. 
The success probability of $\BBB$ is at least that of $\AAA$. 
As $\AAA$ is reproducible for all distributions, $\BBB$ also satisfies the assumption in Lemma~\ref{lem:coin-problem-lb} that $\BBB$ is $\rho$-reproducible for coins with bias in $(1/2 - \tau, 1/2 + \tau)$. 
By Lemma~\ref{lem:coin-problem-lb}, any reproducible algorithm solving the coin problem with these parameter has sample complexity $m \in \Omega(1/(\tau^2 \rho^2))$, implying the lower bound. 
\end{proof}

\begin{lemma}[Sample Lower Bound for the Coin Problem]
	\label{lem:coin-problem-lb}
	Let $\tau < 1/4$ and $\rho < 1/16$. 
	Let $\BBB$ be a $\rho$-reproducible algorithm that decides the coin problem 
	with success probability at least $1 - \delta$ for $\delta = 1/16$.
	Furthermore, assume $\BBB$ is $\rho$-reproducible, even if its samples are drawn from a coin $\mathbf{C}$ with bias in $(1/2 - \tau, 1/2 + \tau)$. 
	Then $\BBB$ requires sample complexity $m \in \Omega(1/(\tau^2 \rho^2))$, i.e. $\rho \in \Omega(1/\tau\sqrt{m})$. 
\end{lemma}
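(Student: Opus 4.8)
My plan is to show that $\rho$-reproducibility forces the acceptance probability of $\BBB$, viewed as a function of the coin's bias, to behave like a $\{0,1\}$-valued function that must still travel from near $0$ to near $1$ across $[1/2-\tau,1/2+\tau]$, while a single ``jump'' of this function already requires a bias-window of width $\Omega(1/\sqrt m)$ and moves it by only $O(\rho)$; balancing these gives $\tau\sqrt m=\Omega(1/\rho)$, i.e.\ $m=\Omega(1/(\tau^2\rho^2))$.

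Set $m=|\vec s|$. For a bias $p$ and a fixed internal randomness string $r$, let $q_r(p)=\Pr_{\vec s}[\BBB(\vec s;r)=1]$ over $m$ i.i.d.\ flips of the $p$-biased coin, and let $f(p)=\E_r[q_r(p)]$ be the overall acceptance probability. Correctness at the two promised biases gives $f(1/2-\tau)\le\delta$ and $f(1/2+\tau)\ge 1-\delta$. For any bias $p\in(1/2-\tau,1/2+\tau)$ -- exactly the range in which the hypothesis guarantees reproducibility -- conditioning on $r$ and using that the two sample sets are independent turns $\rho$-reproducibility into $\E_r\!\big[q_r(p)^2+(1-q_r(p))^2\big]\ge 1-\rho$, i.e.\ $\E_r[q_r(p)(1-q_r(p))]\le\rho/2$. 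Fix a small constant $\alpha$ (e.g.\ $\alpha=1/5$). By Markov's inequality $\Pr_r[q_r(p)\in[\alpha,1-\alpha]]\le\beta:=\tfrac{\rho/2}{\alpha(1-\alpha)}=\Theta(\rho)$; call the remaining $r$ \emph{$p$-decisive} and set $\mathrm{vote}_r(p)=\Ind[q_r(p)>1-\alpha]$ for them. Define $V(p)=\Pr_r[r\text{ is }p\text{-decisive and }\mathrm{vote}_r(p)=1]$. Since $|q_r(p)-\mathrm{vote}_r(p)|<\alpha$ on the decisive set (mass $\ge 1-\beta$), $V$ is a near-$\{0,1\}$ proxy for $f$, and a direct Markov bound at the endpoints (using $f(1/2-\tau)\le\delta$, $f(1/2+\tau)\ge 1-\delta$) shows $V(1/2-\tau)=O(\delta)$ and $V(1/2+\tau)=1-O(\delta)-O(\beta)$, so $V$ travels by at least a constant.

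The key structural fact is that $V$ is Lipschitz on a \emph{coarse} scale. For any $p,p'$, since $\{\vec s:\BBB(\vec s;r)=1\}$ is a fixed event, $|q_r(p)-q_r(p')|\le\dtv\!\big(\mathrm{Bern}(p)^{\otimes m},\mathrm{Bern}(p')^{\otimes m}\big)\le\sqrt{\tfrac12\,m\cdot\mathrm{KL}(\mathrm{Bern}(p)\,\|\,\mathrm{Bern}(p'))}=O(\sqrt m\,|p-p'|)$, using Pinsker and $\mathrm{KL}(\mathrm{Bern}(p)\|\mathrm{Bern}(p'))=O((p-p')^2)$ for biases in $(1/4,3/4)$ (valid since $\tau<1/4$). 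Hence there is a constant $c_1>0$ such that $|p-p'|\le c_1/\sqrt m$ forces this $\dtv$ to be $\le 1/2<1-2\alpha$; for such $p,p'$ no $r$ can be decisive for both while \emph{flipping} its vote, as that would require $|q_r(p)-q_r(p')|>1-2\alpha$. Therefore the events defining $V(p)$ and $V(p')$ differ only on $r$ that fail to be decisive for one of the two, so $|V(p)-V(p')|\le 2\beta$. Choosing $1/2-\tau=p_0<p_1<\dots<p_k=1/2+\tau$ with consecutive gaps $\le c_1/\sqrt m$, so that $k=O(\tau\sqrt m)$, and telescoping gives
\[
\text{(constant)}\ \le\ |V(1/2+\tau)-V(1/2-\tau)|\ \le\ 2k\beta+O(\beta).
\]
Since $\beta=\Theta(\rho)$, one checks with the chosen constants that the hypotheses $\delta=1/16$ and $\rho<1/16$ make the left-hand constant exceed the additive $O(\beta)$ and $+O(1)$ losses by a constant factor; hence $k\ge c/\rho$ for an explicit $c>0$, i.e.\ $k=\Omega(1/\rho)$. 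Combined with $k=O(\tau\sqrt m)$ this yields $\tau\sqrt m=\Omega(1/\rho)$, that is $m=\Omega(1/(\tau^2\rho^2))$.

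The main obstacle is the middle step: one must resist bounding $|f(p)-f(p')|$ directly, since chaining total-variation distances along a fine partition telescopes to nothing useful (the TV accumulated along a path does not shrink under refinement). The point is to extract from reproducibility a genuinely near-$\{0,1\}$ quantity $V$ whose value is essentially determined by a ``majority vote'' that cannot flip unless two biases are $\dtv$-far, so that $V$ is Lipschitz against a partition into only $O(\tau\sqrt m)$ pieces. The remaining work -- the $\dtv$-versus-$|p-p'|$ estimate and verifying the constants against $\delta=1/16$ and $\rho<1/16$ -- is routine bookkeeping.
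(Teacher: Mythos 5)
Your proposal is correct, but it follows a genuinely different route from the paper's proof. The paper first uses the reproducibility guarantee with a \emph{uniformly random} bias $p \in_U [1/2-\tau,1/2+\tau]$ and a Markov/union-bound argument to fix a single good random string $r^*$; it then studies $\Acc(p)=\sum_j a_j\binom{m}{j}p^j(1-p)^{m-j}$ for that one seed, bounds its derivative by $O(\sqrt m)$ on $[1/4,3/4]$, and concludes that around the crossing point $\Acc(q)=1/2$ there is an interval of width $\Omega(1/\sqrt m)$ on which non-reproducibility has constant conditional probability, so a random $p$ witnesses non-reproducibility with probability $\Omega(1/(\tau\sqrt m))$. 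You instead keep the whole ensemble of seeds: pointwise reproducibility at each fixed bias forces all but $O(\rho)$ of the seeds to be near-deterministic (``decisive''), and your vote fraction $V(p)$ can move by only $O(\rho)$ when the bias moves by $O(1/\sqrt m)$, because Pinsker's inequality caps $|q_r(p)-q_r(p')|$ by $O(\sqrt m\,|p-p'|)$ and hence forbids a decisive seed from flipping its vote; chaining over $O(\tau\sqrt m)$ steps against a constant total travel of $V$ gives $\tau\sqrt m=\Omega(1/\rho)$. The quantitative heart is the same in both arguments --- acceptance probability cannot change faster than $O(\sqrt m)$ per unit bias, realized in the paper as a derivative bound on the Bernstein polynomial and in your write-up as a KL/TV estimate --- but your version never needs the random-bias trick, the fixed $r^*$, or differentiability of $\Acc$, making it somewhat more robust and portable, while the paper's version is shorter and gets the constant-probability non-reproducibility event more directly. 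Two small points to tidy: the decisiveness (Markov) bound is needed at every partition point, including $1/2\pm\tau$, which is fine since the first clause of the lemma already gives reproducibility at the promise biases (or you can start the chain at interior points arbitrarily close to the endpoints); and your endpoint estimate $V(1/2+\tau)\ge 1-O(\delta)-O(\beta)$ can be obtained from correctness alone as $V(1/2+\tau)\ge 1-\delta/\alpha$, which slightly simplifies the constant-checking.
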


\begin{proof}

Assume we have an algorithm $\BBB(b_1..b_m; r)$  of sample complexity $m$ so that (i) if the $b_i$'s are chosen i.i.d. in $\{0,1\}$ with bias $ 1/2-\tau$, $\BBB$ accepts with at most $\delta$ 
probability (over both random $r$ and the $b_i$'s), and (ii) if the $b_i$'s are drawn i.i.d. with bias $ 1/2+\tau$, $\BBB$ accepts with at least $1 -\delta$ probability.


Let $p \in [0,1]$ denote the bias of a coin. 
Since $\BBB$ is $\rho$-reproducible, $\BBB$ is $\rho$-reproducible for any distribution on $p$. 
In particular, pick $p \in_U [1/2-\tau, 1/2+\tau]$. 
Let $\mathbf C_{-\tau}$ denote a coin with bias $1/2 - \tau$, and let $\mathbf C_{+\tau}$ denote a coin with bias $1/2 + \tau$. 
By Markov's inequality, each of the following is true with probability at least $1 - 1/4$ over choice of $r$:
\begin{itemize}
	\item $\Pr_{b_1..b_m \sim_{\text{i.i.d.}} \mathbf C_{-\tau}} [\BBB(b_1..b_m ; r) \text{ accepts}] \le 4\delta$ 
	
	\item $\Pr_{b_1..b_m \sim_{\text{i.i.d.}} \mathbf C_{+\tau}} [\BBB(b_1..b_m ; r) \text{ accepts}] \ge 1 - 4\delta
	$
	
	\item When $p$ is chosen between $1/2-\tau $ and $1/2 + \tau$ uniformly, and then $b_1..b_m, b'_1..b'_m$
	are sampled i.i.d. with expectation $p$, 
	$\Pr[ \BBB(b_1..b_m;r)= A(b'_1..b'_m;r)] \ge 1 - 4\rho$.
	
\end{itemize}

By a union bound, there exists an $r^*$ so that every above statement is true.
Note that for any $p$, given $\sum b_i =j$, the samples $b_1..b_m$ are uniformly distributed among all Boolean vectors of Hamming weight $j$.
Let $a_j \eqdef \Pr[\BBB(b_1..b_m; r^*)$ accepts $ | \sum b_i = j ]$. 
Then the probability $\BBB$ accepts using $r^*$ on bits 
with bias $p$ is $\Acc(p)= \sum_j a_j {m \choose j}  p^j (1-p)^{m-j}$. 
In particular, this is a continuous and differentiable function.

Since $\Acc(1/2-\tau) < 4 \delta < 1/4$ and $\Acc(1/2+\tau) > 1-4 \delta> 3/4$, there is a $q \in (1/2-\tau, 1/2+ \tau)$ with $\Acc(q)=1/2$.  We show that $\Acc(p)$ is close to $1/2$ 
for all $p$ close to $q$ by bounding the derivative $\Acc'(p)$ within 
the interval $[1/4, 3/4]$, which contains $[1/2-\tau , 1/2+\tau ]$.

By the standard calculus formulas for derivatives,
\begin{equation*}
	\begin{split}
		\Acc'(p) 
		&= \sum_j a_j {m \choose j} (j p^{j-1}(1-p)^{m-j} - (m-j) p^{j}(1-p)^{m-j-1} )
		\\&= \sum_j a_j {m \choose j} p^j (1-p)^{m-j } ( j/p - (m-j)/(1-p)) 
		\\&= \sum_j a_j {m \choose j}  p^j(1-p)^{m-j}
		 (j-mp)/(p(1-p)).
	\end{split}
\end{equation*}
Since $1/4 < p < 3/4$, $p(1-p) > 3/16> 1/6$, and $0 \le a_j \le 1$.
So this sum is at most
$$ \sum_j {m \choose j} p^j(1-p)^{m-j} 6 |j-mp| = 6 \Exp_j [|j-mp|]$$ where the last expectation is over $j$ chosen as the sum of $m$ random Boolean variables of expectation $p$.
 This expectation is $O(m^{1/2})$ because the expectation of the absolute
value of the difference between any variable and its expectation is at most
the standard deviation for the variable.

 Since the derivative is at most $O(\sqrt{m})$, there is an interval $I$ of length $\Omega(1/\sqrt{m})$ around $q$ so that $1/3< \Acc(p) < 2/3$ for all $p$ in this interval.
Since $\Acc(p) \not \in (1/3, 2/3)$ at $p=1/2-\tau$ and $p =1/2+\tau$, interval $I$ is entirely contained in $(1/2-\tau, 1/2+\tau)$.
So, there is an  
$\Omega (1/ \tau \sqrt{m})$ chance that a random 
$p \in_U [1/2-\tau, 1/2+\tau]$ falls in interval $I$. 
For $p \in I$, there is a $2 \Acc(p) (1-\Acc(p)) > 4/9 $ conditional probability of non-reproducibility for $\BBB$.
Therefore, $\rho \ge \Omega(1/\tau \sqrt{m})$ and  
$m  \in \Omega (1/\tau^2 \rho^2)$. 

\end{proof}

\paragraph{Acknowledgements.}

The authors would like to thank Mark Bun, Cynthia Dwork, Max Hopkins, Pasin Manurangsi, Rahul Santhanam, Geelon So, Ryan Williams, and TPDP 2021 participants for interesting discussions.

\newpage

\bibliography{allrefs,toni-refs}

\newpage
\appendix

\section{Reproducibility: Alternative Definitions and Properties}
\label{app:additional-properties-of-reproducibility}

In this section, we consider a few alternative criteria for
reproducibility and show how they relate to our definition of
reproducibility. 
We also demonstrate other robustness properties of reproducibility
such as amplifying the parameters, as well as data/randomness reuse. 

\medskip

\noindent {\bf Alternative Definitions and Amplification.}
In the main body of the paper,
 we have chosen 
 to define
$\AAA$ as have two sources of random inputs:
samples ${\vec s}$ drawn from distribution $D$ and internal randomness $r$. $\AAA$ has no additional inputs.
However, we could more generally define $\AAA$ to have
additional, nonrandom inputs.
In this more general definition, we define $\AAA(x;\vec{s};r)$
where $\vec{s}$ and $r$ are as defined previously, and $x$ is an auxiliary
input (or tuple of inputs). $\AAA(x;\vec{s};r)$ is $\rho$-reproducible
with respect to distribution $D$ if for every input $x$,
$\AAA(x;\vec{s};r)$ is $\rho$-reproducible.
This definition generalizes both pseudodeterministic
algorithms (in which there is no underlying distribution, so $\vec{s}$ is
empty) as well as our definition of reproducible learning algorithms (in which there are no additional inputs, so $x$ is empty).

~

Rather than parameterize reproducibility by a single parameter $\rho$, one could use two variables $(\eta, \nu)$. 

\begin{definition}[$(\eta, \nu)$-reproducibility]
	\label{def:two-parameter-reproducibility}
Let $\AAA(x;\vec{s}; r)$ be an algorithm, where $\vec{s}$ 
are samples from $D$, and $r$ is the internal randomness.
We say that a particular random string $r$ is  $\eta$-good for
$\AAA$ on $x$ with respect to $D$ if
there is a single ``canonical" output $Z_r$ such that
$\Pr[\AAA(x; \vec{s}; r)=Z_r] \geq 1-\eta$.
Then
$\AAA$ is $(\eta, \nu)$-reproducible with
respect to $D$ if, for each $x$,
the probability that a random $r$ is $\eta$-good for $\AAA$ (on $x$ and $D$)
is at least $1-\nu$.
\end{definition}

$(\eta, \nu)$-reproducibility is qualitatively the same as $\rho$-reproducibility, but might differ by polynomial factors.
If $\AAA$ is $(\eta, \nu)$-reproducible,
then $\AAA$ is $\rho$-reproducible on $D$, where $\rho \le 2\eta + \nu$. The probability that two runs of $\AAA$, using the same internal randomness $r$, output different results
is at most $\Pr[r \text{ not }\eta\text{-good}]$ plus the probability that at least one run is not the special output $Z_r$ (conditioned on $r$ being $\eta$-good).
In the other direction,
if $\AAA$ is $\rho$-reproducible,
then $\AAA$ is $(\rho/\nu, \nu)$-reproducible
for any $\rho \le \nu < 1$.
Say there is a $\nu$ probability that $r$ is not $\eta$-good. Conditioned on picking a not $\eta$-good $r$,
there is a conditional (at least) $\eta$ probability of the second run of $\AAA$ returning something different than the first 
run.\footnote{If $1 - \eta \ge 1/2$, then the probability that two runs of $\AAA$ using the same randomness returns the same result is at most $(1-\eta)^2 + \eta^2$, i.e., when $\AAA$ has only two possible outputs. This is less than $1-\eta$, assuming $1-\eta \ge 1/2$. If $1-\eta < 1/2$, then there must be more than two outputs, and the probability of nonreproducibility is again larger than $\eta$.}
Thus, $\rho \ge \eta \nu$.

A similar definition called ``pseudo-global stability", developed independently to our work, appears in \cite{GKM:21}. That definintion parametrizes by the sample complexity $m$ and does not explicitly parametrize by auxiliary input $x$. Additionally, their definition includes an $(\alpha, \beta)$-accuracy guarantee on $Z_r$, the very likely output. To keep both definitions consistent with their original conventions, we write $\eta' \eqdef 1 - \eta$ and $\nu' \eqdef 1- \nu$. 

\begin{definition}[Pseudo-global stability, Definition 15 in \cite{GKM:21}]
	\label{def:pseudo-global-stability}
	A learning algorithm $\AAA$ with sample complexity $m$ is said to be $(\alpha, \beta)$-accurate, $(\eta', \nu')$-pseudo-globally stable if there exists a hypothesis $h_r$ for every $r \in \supp(R)$ (depending on $D$) such
	that 
	$\Pr_{r \sim R} [\err_D(h_r) \le \alpha] \ge 1-\beta$ and
	$$\Pr_{r \sim R} \left[ 
	\Pr_{\vec{s} \sim D^m}[\AAA(\vec{s}; r) = h_r] 
	\ge \eta'
	\right] \ge \nu'$$
where $\vec{s}$ is a sample of $m$ (labeled) examples $(x_i, y_i)$ drawn from distribution $D$. 
\end{definition}

The final condition of Definition~\ref{def:pseudo-global-stability} is equivalent to saying that i) a randomly chosen $r$ is $\eta'$-good with probability at least $\nu'$, and ii) for every $r$, $h_r$ is the output that witnesses the $\eta$-goodness. 
Carrying the accuracy guarantee through the previous argument, an $(\alpha, \beta)$-accurate $(\eta', \nu')$-pseudo-globally-stable algorithm $\AAA$ implies a $(2(1-\eta') + (1-\nu')) = (2\eta + \nu)$-reproducible algorithm $\AAA$ also with $(\alpha, \beta)$-accuracy.

If we are willing to increase the sample complexity of $\AAA$, we can make the connection stronger:

\begin{theorem}[Amplification of Reproducibility]
	\label{thm:amplification-of-reproducibility}
	Let $0 < \eta, \nu, \beta < 1/2$ and $m > 0$.
	Let $\AAA$ be an $(\eta, \nu)$-reproducible algorithm for distribution $D$ with sample complexity $m$ and failure rate $\beta$. 
	If $\rho > 0$ and $\nu + \rho < 3/4$,
	then 
	there is a $\rho$-reproducible algorithm $\mathcal{A'}$ for $D$ with
	sample complexity $m' = \widetilde{O}( m (\log 1/\beta)^3/(\rho^2(1/2-\eta)^2)$ and failure rate
	at most $O(\beta + \rho)$.
	The construction of $\mathcal{A'}$ does not depend on $D$.
\end{theorem}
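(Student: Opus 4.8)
The plan is to combine the reproducible heavy-hitters routine $\rhh$ of Lemma~\ref{lem:rHeavyHitters-correctness} --- which ``locks in'' a single canonical output once the internal randomness of $\AAA$ is fixed --- with a cheap amplification over independently sampled copies of that randomness, so as to escape the (at most $\nu$ fraction of) ``bad'' random strings. First I would dispose of the trivial case: if $\beta$ is larger than a suitable absolute constant (say $\beta \ge 1/8$), then $O(\beta+\rho)$ is already bounded below by a constant and $\mathcal{A}'$ can simply output a fixed value, which is $0$-reproducible; so assume $\beta < 1/8$.

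For a fixed internal string $r$ of $\AAA$, let $\mathcal{P}_r$ be the distribution of $\AAA(\vec{s};r)$ over $\vec{s}\sim D^m$; a single draw from $\mathcal{P}_r$ costs $m$ samples from $D$. The key observation is that if $r$ is $\eta$-good then its canonical output $Z_r$ has mass at least $1-\eta$ under $\mathcal{P}_r$ while every other output has mass at most $\eta$. Since $\eta<1/2$, running $\rhh_{\rho_0,1/2,\eps}$ with $\eps\eqdef(1/2-\eta)/2$ uses a random threshold $v'\in[1/2-\eps,1/2+\eps]\subseteq(\eta,1-\eta)$, so by Lemma~\ref{lem:rHeavyHitters-correctness}: except with probability $\rho_0$ it returns exactly the true $v'$-heavy-hitter list of $\mathcal{P}_r$, which for $\eta$-good $r$ equals $\{Z_r\}$; it is $\rho_0$-reproducible on $\mathcal{P}_r$ for \emph{every} $r$; and it draws only $\widetilde{O}(1/(\rho_0^2\eps^2)) = \widetilde{O}(1/(\rho_0^2(1/2-\eta)^2))$ samples from $\mathcal{P}_r$ (here $v-\eps\ge 1/4$, a constant).

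The algorithm $\mathcal{A}'$ then draws independent strings $r_1,\dots,r_N$ (internal randomness for $\AAA$) and $\sigma_1,\dots,\sigma_N$ (internal randomness for the heavy-hitter calls), with $N=O(\log(1/(\beta\rho)))$ and $\rho_0\eqdef\rho/(2N)$; for $i=1,\dots,N$ it runs $\rhh_{\rho_0,1/2,\eps}$ with internal randomness $\sigma_i$ on $\mathcal{P}_{r_i}$ (drawing fresh $D$-samples, always invoking $\AAA$ with $r_i$), and it outputs the first element of the first nonempty returned list $L_i$ (a default value if all are empty). Reproducibility is then immediate: two runs of $\mathcal{A}'$ share every $r_i$ and $\sigma_i$ and differ only in their fresh samples, so for each $i$ both runs feed i.i.d.\ samples from the \emph{same} distribution $\mathcal{P}_{r_i}$ and the same internal randomness $\sigma_i$ into $\rhh$; hence $L_i$ agrees across the two runs except with probability $\rho_0$, and a union bound over $i$ shows $\mathcal{A}'$ is $(N\rho_0)$-reproducible, i.e.\ $\rho/2$-reproducible.

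The bookkeeping-heavy failure-rate bound is the step I expect to require the most care. Here I would call $r$ \emph{excellent} if it is $\eta$-good with $Z_r$ a correct answer, and \emph{risky} if $\Pr_{\vec{s}}[\AAA(\vec{s};r)\text{ is incorrect}]\ge 1/4$. Markov's inequality gives $\Pr_r[r\text{ risky}]\le 4\beta$; and since an $\eta$-good $r$ with incorrect $Z_r$ has $\Pr_{\vec{s}}[\AAA(\vec{s};r)\text{ incorrect}]\ge 1-\eta>1/2$, Markov again (using $\nu<1/2$) yields $\Pr_r[r\text{ excellent}]\ge(1-\nu)-2\beta>1/2-2\beta$. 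If $r_i$ is not risky then the total mass of incorrect outputs under $\mathcal{P}_{r_i}$ is below $1/4\le v'$, so whenever the $\rhh$ call succeeds its list $L_i$ contains only correct outputs; thus halting at a non-risky $r_i$ returns a correct answer, and halting at an excellent $r_i$ returns $Z_{r_i}$. Consequently $\mathcal{A}'$ errs only if (a) some $\rhh$ call fails (probability $\le N\rho_0 = \rho/2$), (b) none of $r_1,\dots,r_N$ is excellent (probability $\le (1/2+2\beta)^N\le(\beta+\rho)/2$ by the choice of $N$), or (c) the first string that is risky-but-not-excellent comes before the first excellent string and yields a nonempty list --- and since the strings are i.i.d.\ and the events ``risky-but-not-excellent'' and ``excellent'' are disjoint with probabilities $\le 4\beta$ and $>1/2-2\beta$ respectively, this last event has probability at most $4\beta/(1/2-2\beta)=O(\beta)$. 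Summing, the failure rate is $O(\beta+\rho)$. For the sample complexity, $\mathcal{A}'$ makes $N$ heavy-hitter calls, each drawing $\widetilde{O}(1/(\rho_0^2(1/2-\eta)^2)) = \widetilde{O}(N^2/(\rho^2(1/2-\eta)^2))$ draws from some $\mathcal{P}_{r_i}$, each worth $m$ samples from $D$, for a total of $\widetilde{O}(mN^3/(\rho^2(1/2-\eta)^2)) = \widetilde{O}(m(\log 1/\beta)^3/(\rho^2(1/2-\eta)^2))$ (the $\widetilde{O}$ absorbing $\mathrm{polylog}(1/\rho)$ factors); and since $\mathcal{A}'$ invokes only $\AAA$ and $\rhh$ as black boxes, the construction does not depend on $D$.
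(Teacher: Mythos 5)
Your proposal is correct and follows essentially the same route as the paper's own proof: run the reproducible heavy-hitters algorithm on the output distributions $D_{r_i}$ of $\AAA$ for $O(\log)$ many independently drawn internal strings $r_i$, output the first heavy-hitter found, get reproducibility from $\rhh$ plus a union bound, and bound the error via Markov-type arguments over the choice of $r$. The differences (centering the heavy-hitter threshold at $1/2$ rather than at $(3/2-\eta)/2$, the ``risky versus excellent'' race argument in place of the paper's per-round conditional-probability bound, and the slightly larger $N = O(\log(1/(\beta\rho)))$) are cosmetic and do not change the bounds.
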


\begin{proof}
Set $k = 3 \log 1/\beta$.
For each random string $r$, let  $D_r$ be the  distribution on outputs of
$\AAA(x; \vec{s};r)$ (over random $\vec{s}$).
Algorithm $\mathcal{A'}$ randomly picks $k$-many strings $r_1, \dots, r_k$, runs the reproducible heavy-hitters algorithm (Algorithm~\ref{alg:heavy-hitter-rep}) on the distributions $D_{r_1}, \dots, D_{r_k}$, and outputs the first returned heavy-hitter (or $\perp$ if each subroutine returns the empty list). 
We say there are $k$ \textit{rounds} of $\mathcal{A'}$, one per random string $r$. 

The reproducibility of $\rhh$ implies the reproducibility of $\mathcal{A'}$. We show that a heavy-hitter in $D_r$ for randomly chosen $r$ is often a correct answer, except with probability comparable to $\beta$.

By definition, $r$ is $\eta$-good iff
$D_r$ has a $1-\eta$ heavy-hitter.
Since $\eta < 1/2$, this heavy-hitter will be unique, and there
will be no other $1-\eta > 1/2$ heavy-hitters.
Given any $r$, we can draw from distribution $D_r$ by running algorithm $\AAA$ with fresh samples $\vec{s}$.
Consider running the reproducible heavy-hitters algorithm 
 with parameters 
 $v = (3/2-\eta)/2$,
$\epsilon = (1/2 - \eta)/2$, 
and reproducibility $\rho' = \rho/k$.
These are chosen so that $v + \eps = 1-\eta$ and $v - \eps = 1/2$. 
If $r$ is $\eta$-good, then $\rhh$ will return the (unique) majority element for $D_r$ with probability at least $1-\rho/k$. 
If $r$ is not $1/2$-good\footnote{Since $\eta < 1/2$ by assumption, $r$ being not $1/2$-good implies $r$ is not $\eta$-good.}, 
the reproducible heavy-hitters algorithm 
with the same parameters 
will return the empty list with probability at least $1- \rho/k$.

Next, we compute the conditional probability that the first element returned by $\rhh$ is correct.
The probability that $\rhh$ produces an empty list in one round is at most $\nu$ (when the randomly chosen $r$ is not $\eta$-good) plus $\rho/k$ (when $r$ is $\eta$-good but the heavy-hitters algorithm fails). 
At most a $(2\beta)$-fraction of random strings $r$ 
satisfy both of the following two conditions:
i) $D_r$ has a majority element $Z_r$ and 
ii) $Z_r$ is an incorrect output.
Thus, the conditional probability of outputting an incorrect answer, given $\rhh$ produces a non-empty output, is at most $(2 \beta  + \rho/k)/(1-\nu-\rho/k)$. By assumption, $\nu + \rho/k < 3/4$, so this is $O(\beta + \rho)$. 

So far, we have bounded the probability that $\mathcal{A'}$ returns an incorrect answer. $\mathcal{A'}$ could also fail if $\rhh$ returns the empty list in each of $k$ rounds. Since $\nu + \rho/k < 3/4$, this happens with probability at most $(3/4)^{k} \le \beta$.
So, the overall probability of error is at most $O(\beta+\rho)$.

If two runs of $\mathcal{A'}$ use the same $r_i$'s and same randomness for each heavy-hitters call,
they only produce different answers if a pair of $\rhh$ calls produces
different answers in the same round.  By the reproducibility of $\rhh$, this only
happens with probability $\rho/k$ each round, 
for a total non-reproducibility probability at most $\rho$.

$\mathcal{A'}$ calls $\rhh$ $k = O(\log 1/\beta)$ times. Each example used by $\rhh$ is created by running $\AAA$, which has sample complexity $m$. 
By Lemma~\ref{lem:rHeavyHitters-correctness}, $\rhhfullrhoprime$ has sample complexity
$
\widetilde{O}\left(
\frac{1}
{\rho'^2\epsilon^2(v - \epsilon)^2}
\right)$. 
Substituting in $\rho' = \rho/k, \eps = (1/2-\eta)/2,$ and $v-\eps = 1/2$,
$\AAA$ has sample complexity
$km \cdot \widetilde{O}\left(\frac{k^2}{\rho^2(1/2-\eta)^2} \right)
= 
\widetilde{O}\left(\frac{m \log^3 (1/\beta)}{\rho^2(1/2-\eta)^2} \right)
$.
\end{proof}

\begin{corollary}
	Let $\alpha > 0$ and $\rho < 1/4-\alpha$. 
Let $\AAA$ be a $\rho$-reproducible algorithm
 using $m$ samples is correct except with error at most $\beta$.
Then for arbitrary $\rho'$ satisfying $\rho > \rho' >0$,
there is a $\rho'$-reproducible algorithm $\mathcal{A'}$ with sample complexity
$m' = \widetilde{O}\left(\frac{m \log^3 (1/\beta)}{\rho'^2 \alpha^2} \right)$
that is correct except with error at most $O(\beta + \rho')$.
\end{corollary}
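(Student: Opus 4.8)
The plan is to reduce everything to the two–parameter amplification result, Theorem~\ref{thm:amplification-of-reproducibility}. The only real content is to pass from the single‑parameter hypothesis (``$\AAA$ is $\rho$-reproducible'') to an $(\eta,\nu)$-reproducibility guarantee with the right numerics, and to pick the intermediate slack parameter $\nu$ so that all of the side conditions of Theorem~\ref{thm:amplification-of-reproducibility} are satisfied while the $(1/2-\eta)^{-2}$ factor it produces is bounded by $\alpha^{-2}$.

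First I would invoke the conversion recorded just before Theorem~\ref{thm:amplification-of-reproducibility}: if $\AAA$ is $\rho$-reproducible, then for every $\nu$ with $\rho\le\nu<1$ it is $(\rho/\nu,\nu)$-reproducible, and this reinterpretation does not touch correctness, so $\AAA$ is still correct except with error $\beta$. (If $\beta\ge 1/2$ the claim is vacuous, so assume $\beta<1/2$.) Then set $\nu \eqdef 1/2-\alpha$ and $\eta \eqdef \rho/\nu$. Using $\rho<1/4-\alpha$ (and hence $\rho'<1/4-\alpha$ as well), one checks the needed inequalities: $\rho\le\nu$; $2\rho<\nu<1/2$, so $\eta=\rho/\nu<1/2$ (this is exactly what makes the $(1-\eta)$-heavy-hitter of each output distribution $D_r$ unique, which the amplifier relies on); and $\nu+\rho' < (1/2-\alpha)+(1/4-\alpha)=3/4-2\alpha<3/4$. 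Thus $\AAA$, viewed as an $(\eta,\nu)$-reproducible algorithm with sample complexity $m$ and failure rate $\beta$, meets every hypothesis of Theorem~\ref{thm:amplification-of-reproducibility} with target reproducibility $\rho'$.

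Next I would lower-bound $1/2-\eta$. Since $\eta=\rho/(1/2-\alpha)$,
\[
\tfrac{1}{2}-\eta \;=\; \frac{\tfrac{1}{4}-\tfrac{\alpha}{2}-\rho}{\tfrac{1}{2}-\alpha}\;>\;\frac{\alpha/2}{1/2}\;=\;\alpha,
\]
using $\rho<1/4-\alpha$ in the numerator and $1/2-\alpha<1/2$ in the denominator. Applying Theorem~\ref{thm:amplification-of-reproducibility} then produces a $\rho'$-reproducible algorithm $\mathcal{A}'$, whose construction does not depend on $D$, with failure rate $O(\beta+\rho')$ and sample complexity
\[
m' \;=\; \widetilde O\!\left(\frac{m(\log 1/\beta)^3}{\rho'^2\,(1/2-\eta)^2}\right)\;=\;\widetilde O\!\left(\frac{m\log^3(1/\beta)}{\rho'^2\alpha^2}\right),
\]
where the last equality uses $1/2-\eta>\alpha$. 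This is the asserted statement.

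The ``main obstacle'' is bookkeeping rather than anything conceptual: $\nu$ must simultaneously clear $2\rho$ (to keep $\eta$ strictly below $1/2$) and stay below $3/4-\rho'$ (to satisfy the hypothesis of Theorem~\ref{thm:amplification-of-reproducibility}), and it is precisely this pair of constraints that forces $1/2-\eta$ down to only $\Omega(\alpha)$ in the worst case $\rho\approx 1/4-\alpha$ — which is the source of the $\alpha^{-2}$ factor and the reason it cannot be removed by this argument. One should also double-check that the conversion to $(\eta,\nu)$-reproducibility genuinely leaves the $\beta<1/2$ correctness guarantee intact, which it does since it only reshuffles how the reproducibility probability is accounted for.
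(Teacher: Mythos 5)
Your proposal is correct and follows essentially the same route as the paper: convert $\rho$-reproducibility to $(\rho/\nu,\nu)$-reproducibility with $\nu=1/2-\alpha$ and invoke Theorem~\ref{thm:amplification-of-reproducibility}, with the $\alpha^{-2}$ factor coming from bounding $1/2-\eta$ from below by (a constant times) $\alpha$. Your write-up in fact spells out the side conditions ($\eta<1/2$, $\nu+\rho'<3/4$) and the bound $1/2-\eta>\alpha$ more explicitly than the paper's brief proof, which records the same quantity as $\alpha/(1-2\alpha)$ in the worst case.
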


\begin{proof}
	By the arguments immediately after Definition~\ref{def:two-parameter-reproducibility}, a $\rho$-reproducible algorithm implies a $(\rho/x, x)$-reproducible algorithm. Choosing $x = 1/2-\alpha$ allows us to apply Theorem~\ref{thm:amplification-of-reproducibility} for any $\rho < 1/4$. 
	The $(1/2-\eta)$ term in Theorem~\ref{thm:amplification-of-reproducibility} simplifies to $\alpha/(1-2\alpha)$ in this context. When $\alpha$ can be chosen as a constant, the sample complexity simplifies to
	$m' = \widetilde{O}(m \log^3 (1/\beta)/\rho'^2)$.
\end{proof}



\medskip

\noindent{\bf Public versus Private Randomness.} We define reproducibility as the probability 
that when run twice using the same (public) randomness, but with independently chosen data samples, the algorithm returns the same answer.
In \cite{GrossmanLiu:19}, the authors define a related concept, but divide 
up the randomness into two parts, where only the first randomness part gets reused in the second run of the algorithm. 
In their applications, there are no data samples, so re-running the algorithm using identical randomness would always give identical results; rather, they were trying to minimize the amount of information about the random choices that would guarantee reproducibility, i.e., minimize the length of the first part.  

Similarly, we could define a model of reproducibility that involved two kinds
of random choices.  
Define  
$\AAA(x; \vec{s};r_{\textrm pub}, r_{\textrm priv})$, 
$\vec{s}=(s_1,\dots, s_m)$ to be $\rho$-reproducible with respect to 
$r_{\textrm pub}$ and $D$ if for every $x$,
random $\vec{s_1}$ and $\vec{s_2}$ drawn from $D^m$,
and random $r_{\textrm pub},r_{\textrm priv}, r'_{\textrm priv}$,
$$\Pr[\AAA(x; \vec{s_1}; r_{\textrm pub}, r_{\textrm priv})= \AAA(x; \vec{s_2}; r_{\textrm pub}, r'_{\textrm priv})]  \ge  1-\rho.$$

If we want to minimize the amount of information we need to store
to guarantee reproducibility, keeping $r_{\textrm priv}$ and $r_{\textrm pub}$ distinct
may be important.  
However, if all we want is to have a maximally reproducible algorithm, 
the following observation shows that it is always better to make the entire randomness public.

\begin{lemma}
If $\AAA(x; \vec{s}; r_{\textrm pub}, r_{\textrm priv})$
is $\rho$-reproducible with respect to $r_{\textrm pub}$ over $D$,
then $\AAA(x;\vec{s}; r_{\textrm pub}, r_{\textrm priv})$ is
 $\rho$-reproducible with respect to $(r_{\textrm pub}, r_{\textrm priv})$ over $D$.
\end{lemma}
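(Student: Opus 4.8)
The plan is to reduce the statement to a convexity (Cauchy--Schwarz) inequality on collision probabilities. Fix the auxiliary input $x$; the whole argument is uniform in $x$, so it suffices to work with this fixed $x$. For a fixed public string $r_{\textrm pub}$ and private string $r_{\textrm priv}$, write $q_{r_{\textrm pub}, r_{\textrm priv}}(z) = \Pr_{\vec{s}}[\AAA(x;\vec{s};r_{\textrm pub},r_{\textrm priv}) = z]$ for the output distribution induced by the sample alone, and write $p_{r_{\textrm pub}}(z) = \E_{r_{\textrm priv}}[\, q_{r_{\textrm pub},r_{\textrm priv}}(z)\,] = \Pr_{\vec{s},r_{\textrm priv}}[\AAA(x;\vec{s};r_{\textrm pub},r_{\textrm priv}) = z]$ for the output distribution when the private randomness is also averaged out.

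First I would rewrite both reproducibility probabilities as expected collision probabilities. The hypothesis that $\AAA$ is $\rho$-reproducible with respect to $r_{\textrm pub}$ says exactly that
\[
\E_{r_{\textrm pub}}\Bigl[\, \textstyle\sum_z p_{r_{\textrm pub}}(z)^2 \,\Bigr] \;\ge\; 1 - \rho,
\]
since two independent draws $(\vec{s_1},r_{\textrm priv})$ and $(\vec{s_2},r'_{\textrm priv})$, with $r_{\textrm pub}$ held fixed, produce the same output with probability $\sum_z p_{r_{\textrm pub}}(z)^2$. Likewise, $\rho$-reproducibility with respect to $(r_{\textrm pub},r_{\textrm priv})$ is the statement
\[
\E_{r_{\textrm pub},r_{\textrm priv}}\Bigl[\, \textstyle\sum_z q_{r_{\textrm pub},r_{\textrm priv}}(z)^2 \,\Bigr] \;\ge\; 1 - \rho .
\]

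The remaining step is to show that the left-hand side of the second display is at least the left-hand side of the first. Since $p_{r_{\textrm pub}}(z) = \E_{r_{\textrm priv}}[\, q_{r_{\textrm pub},r_{\textrm priv}}(z)\,]$, convexity of $t \mapsto t^2$ (equivalently Jensen's inequality) gives $p_{r_{\textrm pub}}(z)^2 \le \E_{r_{\textrm priv}}[\, q_{r_{\textrm pub},r_{\textrm priv}}(z)^2\,]$ for every $z$ and every $r_{\textrm pub}$; summing over $z$ and taking the expectation over $r_{\textrm pub}$ yields $\E_{r_{\textrm pub}}[\sum_z p_{r_{\textrm pub}}(z)^2] \le \E_{r_{\textrm pub},r_{\textrm priv}}[\sum_z q_{r_{\textrm pub},r_{\textrm priv}}(z)^2]$, and combining with the hypothesis completes the proof. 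Intuitively, reusing the private randomness across the two runs only couples them more tightly, so it can never decrease the probability of agreement; I do not expect a real obstacle here beyond carefully setting up the collision-probability bookkeeping and noting that every bound above holds for each fixed $x$.
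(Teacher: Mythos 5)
Your proposal is correct and follows essentially the same route as the paper's proof: both fix $x$ and $r_{\textrm pub}$, rewrite the two reproducibility probabilities as expected collision probabilities of the per-randomness output distributions, and then observe that averaging out the private randomness can only decrease the collision probability. The only cosmetic difference is that you apply Jensen coordinatewise to $p_{r_{\textrm pub}}(z)=\E_{r_{\textrm priv}}[q_{r_{\textrm pub},r_{\textrm priv}}(z)]$, whereas the paper phrases the same convexity step as Cauchy--Schwarz on $\langle \vec{q}_R,\vec{q}_{R'}\rangle$ followed by Jensen on $\E_R\lVert\vec{q}_R\rVert_2$.
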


\begin{proof}
We show for each value of $x$ and $r_{\textrm pub}$, 
$$
\Pr[\AAA(x; \vec{s_1}; r_{\textrm pub}, r_{\textrm priv})=  
\AAA(x; \vec{s_2}; r_{\textrm pub}, r'_{\textrm priv})]  \le  
\Pr[\AAA(x; \vec{s_1}; r_{\textrm pub}, r_{\textrm priv}) = 
 \AAA(x; \vec{s_2}; r_{\textrm pub}, r_{\textrm priv})].$$  
Fix $x$ and $r_{\textrm pub}$.  For each possible value $R$ of $r_{\textrm priv}$ and
each possible output $Z$, let 
$q_{R,Z} = \Pr[\AAA[(x;\vec{s_1};r_{\textrm pub}, R)] =Z$,
and let $\vec{q}_R$ 
be the vector indexed by $Z$ whose $Z^{th}$ coordinate is $q_{R,Z}$.  
Then 

$$\Pr[\AAA[(x; \vec{s_1}; r_{\textrm pub}, R)= 
\AAA(x; \vec{s_2}; r_{\textrm pub}, R)]   = 
\sum_Z (q_{R,Z})^2 = ||\vec{q}_R||^2_2,$$
and   
$$\Pr[\AAA(x; \vec{s_1}; r_{\textrm pub}, R) =
\AAA(x; \vec{s_2}; r_{\textrm pub}, R')]   = 
\sum_Z (q_{R,Z}q_{R',Z}) =  \langle \vec{q}_R,\vec{q}_{R'} \rangle.$$

Thus,
\begin{align*}
\Pr[\AAA(x; \vec{s_1}; r_{\textrm pub}, r_{\textrm priv}) 
=  
\AAA(x; \vec{s_2}; r_{\textrm pub}, r'_{\textrm priv})] 
& = \Exp_{R,R'} [\langle \vec{q}_R, \vec{q}_{R'} \rangle] \\
& \le \Exp_{R,R'} [||\vec{q}_R||_2 ||\vec{q}_{R'}||_2  ]  \\
& = (\Exp_R [ ||\vec{q}_R||_2])^2 \\
& \le \Exp_R [||\vec{q}_R||_2^2 ] \\
& = \Pr[\AAA(x; \vec{s_1}; r_{\textrm pub}, r_{\textrm priv})
 = \AAA(x; \vec{s_2}; r_{\textrm pub}, r_{\textrm priv})]. 
\end{align*}
\end{proof}

We will implicitly use this observation in the boosting algorithm section,
since it will be convenient to think of the two runs of the boosting algorithm as picking samples each step independently, when using the same random string
would create some correlation.

\medskip

\noindent\textbf{Reproducibility Implies Generalization.}
We show that a hypothesis generated by a reproducible algorithm has a high probability of having generalization error close to the empirical error. 
Let $h$ be a hypothesis, $c$ be a target concept, and $D$ be a distribution. 
The \textit{risk} (generalization error) of $R(h) \eqdef \Pr_{x \sim D}[h(x) \ne c(x)]$. 
If $\vec{s}$ is a sample drawn i.i.d. from $D$, then the \textit{empirical risk} $\widehat{R}_\vecs(h) \eqdef \Pr_{x \in \vecs}[h(x) \ne c(x)]$. 

\begin{lemma}[Reproducibility Implies Generalization]
	\label{lem:reproducibility-implies-generalization}
	Let sample $\vecs \sim D^n$, and let $\delta > 0$.
	Let $h$ be a hypothesis output by $\rho$-reproducible learning algorithm $\AAA(\vecs; r)$, where $r$ is a random string. 
	Then, with probability at least $1-\rho-\delta$ over the choice of $\vecs$ and $r$,  
	$R(h) \le \widehat{R}_\vecs(h) + \sqrt{\ln(1/\delta)/(2n)}$.
\end{lemma}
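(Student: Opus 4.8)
The plan is to make precise the intuition from the introduction: treat the run of $\mathcal{A}$ that produced $h$ as a ``training'' run, and compare it against an imaginary ``test'' run on a fresh independent sample that uses the \emph{same} internal randomness. So, fix the target concept $c$ and distribution $D$, and suppose $h = \mathcal{A}(\vecs; r)$. First I would introduce a second sample $\vecs'$ of size $n$ drawn i.i.d.\ from $D$, independent of $\vecs$, and set $h' \eqdef \mathcal{A}(\vecs'; r)$ using the same random string $r$. By $\rho$-reproducibility (Definition~\ref{def:reproducibility-sample}), the event $E_1 \eqdef \{\, h = h' \,\}$ has probability at least $1 - \rho$ over the joint choice of $\vecs, \vecs', r$.

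The key observation is that $h'$ is a deterministic function of $(\vecs', r)$ and is therefore \emph{independent of} $\vecs$. Hence, conditioned on any fixed outcome $h' = g$, the sample $\vecs$ is still $n$ i.i.d.\ draws from $D$, so the indicators $\Ind[g(x)\neq c(x)]$ for $x \in \vecs$ are i.i.d.\ Bernoulli random variables with mean $R(g)$, and $\widehat{R}_\vecs(g)$ is their empirical average. A one-sided additive Chernoff/Hoeffding bound then gives $\Pr_\vecs\!\left[\widehat{R}_\vecs(g) < R(g) - t\right] \le e^{-2 n t^2}$ for every fixed $g$; integrating over the distribution of $h'$ (which is legitimate precisely because $h'$ is independent of $\vecs$, so no union bound over hypotheses is needed), the event $E_2 \eqdef \{\, \widehat{R}_\vecs(h') \ge R(h') - t \,\}$ has probability at least $1 - e^{-2 n t^2}$. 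Choosing $t = \sqrt{\ln(1/\delta)/(2n)}$ makes $e^{-2 n t^2} = \delta$, so $\Pr[E_2] \ge 1 - \delta$.

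Finally I would intersect the two events. On $E_1 \cap E_2$ we have $h = h'$, hence $\widehat{R}_\vecs(h) = \widehat{R}_\vecs(h')$ and $R(h) = R(h')$, so
\[
R(h) \;=\; R(h') \;\le\; \widehat{R}_\vecs(h') + t \;=\; \widehat{R}_\vecs(h) + t .
\]
A union bound gives $\Pr[E_1 \cap E_2] \ge 1 - \rho - \delta$, which is exactly the claimed statement. I do not expect a genuine obstacle here; the only point that requires care is the independence/conditioning step --- recognizing that the ``test'' run must be performed on a \emph{fresh} sample so that its output $h'$ is independent of the training sample $\vecs$, and that reproducibility is then invoked only to transfer the concentration bound from $h'$ back to the actual output $h$, simultaneously in the empirical term (same sample $\vecs$) and in the true-risk term. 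It is also worth noting that only the lower tail of the concentration inequality is used, since overfitting corresponds to the empirical risk falling below the true risk.
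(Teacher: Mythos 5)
Your proposal is correct and follows essentially the same route as the paper: introduce a ghost run $h' = \mathcal{A}(\vec{s}'; r)$ on a fresh sample with the same internal randomness, apply Hoeffding to $\widehat{R}_{\vec{s}}(h')$ using the independence of $h'$ from $\vec{s}$, and then transfer the bound to $h$ via $\rho$-reproducibility and a union bound. The only cosmetic difference is that the paper phrases the main lemma with $\vec{s}$ as a possibly separate evaluation sample and relegates the case $\vec{s} = $ training sample to a follow-up remark, whereas you handle the training-sample case directly; the argument is identical.
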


\begin{proof}

	Consider running $\AAA(\vecstwo; r)$, where $\vecstwo$ is an independent sample of size $m$ drawn from $D$, but $r$ is the same as before. Let  $h_2$ denote the returned hypothesis. Since $h_2$ is independent of $\vecs$, 
	$
	\Pr_{\vecs \sim D^n} 
	[ \widehat{R}_{\vecs}(h_2) - R(h_2) \ge \eps ] 
	\le \exp(-2n\eps^2)
	$
	for $\eps>0$ by Hoeffding's inequality.
	By the reproducibility of $\AAA$, $h_2 = h$ with probability at least $1-\rho$. 
	By a union bound,
	$R(h) \ge \widehat{R}_{\vecs}(h) + \sqrt{\ln(1/\delta)/2n}
	$
	with probability at least $1- \rho - \delta$.
\end{proof}

In the above argument, we use the definition of reproducibility to create independence between $\vecs$ and $h$, allowing us to use Hoeffding's inequality.

\medskip

\noindent\textbf{Connections to Data Reuse.}
We consider the adaptive data analysis model discussed in \cite{Dworketal:2014} and \cite{Dworketal:2015}, and we prove that reproducible algorithms are resiliant against adaptive queries (Lemma~~\ref{lem:data-reuse}). The proof is via a hybrid argument.


\begin{lemma}[Reproducibility $\Longrightarrow$ Data Reusability]\label{lem:data-reuse}
	Let $D$ be a distribution over domain $\X$. Let $\mathcal{M}$ be a mechanism that answers queries of the form 
	$q: \X \rightarrow \{0,1\}$ by drawing a sample $S$ of $n$ i.i.d. examples from $D$ and returning answer $a$. 
	Let $\mathcal{A}$ denote an algorithm making $m$ \emph{adaptive} queries, chosen from a set of queries $Q$, so that the choice of $q_i$ may depend on $q_j, a_j$ for all $j < i$. Denote by $[\mathcal{A}, \mathcal{M}]$ the distribution over transcripts $\{q_1, a_1, \dots q_m, a_m\}$ of queries and answers induced by $\mathcal{A}$ making queries of $\mathcal{M}$. 
	Let $\mathcal{M}'$ be a mechanism that behaves identically to $\mathcal{M}$, except it draws a single sample $S'$ of $n$ i.i.d. examples from $D$ and answers all queries with $S'$. 
	
	If $\M$ answers all queries $q \in Q$ with $\rho$-reproducible procedures, 
	then
	$SD_{\Delta}([\mathcal{A}, \mathcal{M}], [\mathcal{A}, \mathcal{M}']) \leq (m-1)\rho$, 
	where $SD_{\Delta}(D_1, D_2)$ denotes the statistical distance between distribtuions $D_1$ and $D_2$. 
	
\end{lemma}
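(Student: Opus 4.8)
The plan is a hybrid argument over $m-1$ intermediate mechanisms interpolating between $\mathcal{M}$ (a fresh sample per query) and $\mathcal{M}'$ (one shared sample for all queries). The crucial design choice — the one that makes the telescoping sum come out to exactly $(m-1)\rho$ — is to place the fresh-sample queries \emph{first} and the shared-sample queries \emph{last} in each hybrid. For $q \in Q$ let $P_q$ denote the $\rho$-reproducible procedure that $\mathcal{M}$ uses to answer $q$, taking a sample and internal randomness as input.

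For $k = 0, 1, \dots, m-1$, define a mechanism $\mathcal{M}_k$ that answers queries $1, \dots, k$ each with its own fresh i.i.d.\ sample from $D^n$ (exactly as $\mathcal{M}$ does) and answers queries $k+1, \dots, m$ all with a single shared sample $S \sim D^n$ drawn once (as $\mathcal{M}'$ does). Then $\mathcal{M}_0 = \mathcal{M}'$, and $\mathcal{M}_{m-1}$ is distributed identically to $\mathcal{M}$, since a sample "shared" among only the last query is the same as a fresh one. By the triangle inequality, $SD_{\Delta}([\mathcal{A}, \mathcal{M}], [\mathcal{A}, \mathcal{M}']) \le \sum_{k=0}^{m-2} SD_{\Delta}([\mathcal{A}, \mathcal{M}_k], [\mathcal{A}, \mathcal{M}_{k+1}])$, so it suffices to bound each transition by $\rho$.

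For the transition $\mathcal{M}_k \to \mathcal{M}_{k+1}$, I would exhibit a coupling: run both mechanisms with the same fresh samples and internal randomness for queries $1, \dots, k$, the same shared sample $S$, the same internal randomness $r_{k+1}$ for query $k+1$, and the same randomness for $\mathcal{A}$, while letting $\mathcal{M}_{k+1}$ additionally draw an independent fresh sample $S''$ which it uses only for query $k+1$. Since queries $1,\dots,k$ are handled identically, the two transcripts agree through the choice of $q_{k+1}$; and once $a_{k+1}$ agrees, queries $k+2, \dots, m$ are answered from the same $S$ with the same randomness in both runs, so the full transcripts agree. Hence $SD_{\Delta}([\mathcal{A}, \mathcal{M}_k], [\mathcal{A}, \mathcal{M}_{k+1}]) \le \Pr\big[P_{q_{k+1}}(S; r_{k+1}) \ne P_{q_{k+1}}(S''; r_{k+1})\big]$ under the coupling.

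The last step is the point that justifies the entire ordering: $q_{k+1}$ is a deterministic function of $(q_1, a_1, \dots, q_k, a_k)$ together with $\mathcal{A}$'s own randomness, and all of those were produced from the \emph{fresh} samples for queries $1, \dots, k$, hence $q_{k+1}$ is independent of the triple $(S, S'', r_{k+1})$. Conditioning on $q_{k+1} = q$, the samples $S$ and $S''$ remain i.i.d.\ from $D^n$ and $r_{k+1}$ remains uniform, so $\Pr[P_q(S; r_{k+1}) \ne P_q(S''; r_{k+1})] \le \rho$ by $\rho$-reproducibility of $P_q$; averaging over $q$ keeps it $\le \rho$. Summing over the $m-1$ transitions gives $SD_{\Delta}([\mathcal{A}, \mathcal{M}], [\mathcal{A}, \mathcal{M}']) \le (m-1)\rho$. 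The main obstacle to be careful about is exactly this independence claim: if the hybrid instead put shared-sample queries first, then $q_{k+1}$ would depend on the shared sample, its conditional distribution would be biased, and reproducibility — a statement about two \emph{independent} fresh samples — would not apply. All the care therefore goes into ordering the hybrid so that the query whose sample changes always sees two genuinely fresh, independent samples.
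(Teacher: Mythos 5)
Your proposal is correct and follows essentially the same argument as the paper: the same hybrid family (fresh samples for the early queries, one shared sample for the later ones, indexed with an off-by-one shift), the same coupling of adjacent hybrids, and the same key observation that the boundary query is determined by the earlier fresh-sample transcript and hence independent of the two i.i.d.\ samples to which $\rho$-reproducibility is applied, giving $(m-1)\rho$ by the triangle inequality.
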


\begin{proof}
	For $i \in 
	[m]$, let $[\AAA, \M_i]$ denote the distribution on transcripts output by algorithm $\AAA$'s interaction with $\M_i$, where $\M_i$ is the analogous mechanism that
	draws new samples $S_1, \dots, S_{i}$ for the first $i$ queries, and reuses sample $S_i$ for the remaining $m-i$ queries. 
	Note that $\M' = \M_1$ and $\M = \M_m$. 
	
	For $i \in [m-1]$, consider distributions 
	$[\AAA, \M_i]$ and $[\AAA, \M_{i+1}]$. 
	We will bound the statistical distance by a coupling argument. 
	Let $S_1, \dots, S_{i+1}$ denote random variables describing the samples used, and let $r$ denote the randomness used over the entire procedure. 
	$[\AAA, \M_i]$ can be described as running the entire procedure (with randomness $R$) on 
	$S_1, \dots, S_{i-1}, S_{i+1}, S_{i+1}, \dots, S_{i+1}$, 
	and $[\AAA, \M_{i+1}]$ can be described as running the entire procedure (with randomness $R$) on 
	$S_1, \dots, S_{i-1}, S_{i}, S_{i+1}, S_{i+1}, \dots, S_{i+1}$.
	
	These distributions are identical for the first $i-1$ queries and answers, so the $i$'th query $q_i$ is identical, conditioned on using the same randomness.
	Both $S_i$ and $S_{i+1}$ are chosen by i.i.d. sampling from $D$, so 
	$\Pr_{S_{i}, S_{i+1}, r} \left[
	\mathcal{A}( q_i, S_{i+1} ; r) 
	= A( q_i, S_i ; r)
	\right]
	\ge 1- \rho
	$
	by reproducibility.
	Conditioned on both transcripts including the same $(i+1)$'th answer $a_{i+1}$ (and continuing to couple $S_{i+1}$ and $r$ for both runs), the remaining queries and answers $q_{i+1}, a_{i+1}, \dots, q_m, a_m$ is identical.
	Therefore, 
	$SD_{\Delta}([\AAA, \M_i], [\AAA, \M_{i+1}]) 
	\leq \rho$ for all $i \in [m-1]$. 
	Unraveling, $SD_{\Delta}([\mathcal{A}, \mathcal{M}], [\mathcal{A}, \mathcal{M}']) \leq (m-1)\rho.$
\end{proof}

\begin{remark}
	This connection may be helpful for showing that reproducibility cannot be achieved efficiently in contexts where data reuse is not efficiently achievable. 
\end{remark}

\section{Concentration of Sum of Vectors}
\label{apps:concentration-inequality}

In this Section, we use Azuma's inequality to prove a concentration bound on the sum of vectors from a distribution. 

Let $D$ be a distribution on $\R^n$.
Let ${\bf v} = \{ {\bf v_1}, \ldots, {\bf v_T}\} \in D^T$
be a random sample of $T$ vectors from ${D}$ with the
following properties:
\begin{enumerate}
	\item $\Exp_{{\bf v} \in {\ D}^T} [ \sum_{i=1}^{T} {\bf v_i}] - 
	\Exp_{v \in {\ D}} [v]  =0$.
	\item $\forall v \in {\ D}$, $|| v ||_2 \leq c$.
\end{enumerate}

The following lemma shows that the length of 
${\bf v^{1}} + {\bf v^{2}} + \ldots + {\bf v^{T}}$
is tightly concentrated.

\concentrationinequalitylem*

The intuition behind Lemma~\ref{lem:concentration-inequality} is similar
to the one-dimensional case, where ${\ D}$ is a distribution
over $(-1,1)$, 
${\bf v} \in {\ D}^T$, and $\sum_{i=1}^T {\bf v_i}$ is
concentrated around zero, with standard deviation ${\sqrt T}$.
Let ${\bf v^{\leq i}}$ denote $\sum_{i=1}^i {\bf v_i}$.
In the one-dimensional case, we can prove concentration
of ${\bf v^{\leq T}}$ via a Chernoff or martingale argument 
since the expected value of ${\bf v^{\leq i}}$ (the sum of the first $i$ numbers) is equal to ${\bf v^{\leq i-1}}$.
However for the higher dimensional case, ${\bf v^{\leq i}}$ is
now the sum of the first $i$ vectors, and it is in general not the
case that the expected length of ${\bf v^{\leq i}}$ is
equal or even not much larger than the length of ${\bf v^{\leq i-1}}$.
However, if the length of ${\bf v^{\leq i-1}}$ is sufficiently large (greater than ${\sqrt T}$), then
$\Exp[||{\bf v^{\leq i}}||_2 ~|~ {\bf v^{\leq i-1}}]$ can be upper bounded (approximately) by $||{\bf v^{\leq i-1}}||_2 + 1/{\sqrt T}$. 
Therefore, if we want to bound the probability that the length of
${\bf v^{\leq T}}$ is
large (at least ${\sqrt T} + \Delta$),
there must be some time $t$ such that the vector ${\bf v^{\leq t}}$ is outside of the ball of radius $\sqrt{T}$ around the origin, and never returns.
So we can bound the probability that $||{\bf v^{\leq T}}||_2 \geq {\sqrt t} + \Delta$, by considering the sequence of
random variables ${\bf x^{\leq t}},\ldots,{\bf x^{\leq T}}$ such that
${\bf x^{\leq t}}$ is equal to the length of ${\bf v^{\leq t}}$, and for each $t' \geq t$,
${\bf x^{\leq t'}}$ is the length of ${\bf v^{\leq t'}}$ minus
a correction term (so that we can upper bound 
$\Exp[{\bf x^{\leq t'+1}} ~|~ {\bf x^{\leq t'}}]$ by ${\bf x^{\leq t'}}$.)
We will show that ${\bf x^{\leq t}},\ldots,{\bf x^{\leq T}}$ is a supermartingale
where $|{\bf x^{\leq t' +1}} - {\bf x^{\leq t'}}|$ is bounded by a constant, and then the concentration inequality will follow from Azuma's Lemma.


\begin{definition}
	Let ${\ D}$ be a distribution over $\R^n$ satisfying
	the above two properties. 
	\begin{enumerate}
		\item Let ${\bf v} = \{{\bf v_1},\ldots,{\bf v_{T'}}\} \in {\ D}^{T'}$ be a 
		sequence of $T' \leq T$ random variables, and 
		let ${\bf v_0} \in \R^n$ have length $\sqrt{T}$.
		For $0 \leq i \leq T'$, let ${\bf v^{\leq i}} = \sum_{i=1}^{T'} {\bf v_i}$.
		
		\item The {\it stopping} {\it time} $\tau \in [T']$ (with respect to $\{{\bf v^{\leq i}}\}$)
		is equal to: 
		$$\text{min} \{ \{i \in [T'] ~|~ ||{\bf v^{\leq i}}||_2 < \sqrt{T} \} \cup \{ T'\}\}.$$
		That is, $\tau$ is the first time $i$ such that
		the length of ${\bf v^{\leq i}}$ drops below
		$\sqrt{T} + \frac{i}{3\sqrt{T}}$ (and otherwise
		$\tau =T'$).
		
		
		\item For each $i \in [T']$, we define the sequence
		of random variables ${\bf x^{\leq 0}}, {\bf x^{\leq 1}},{\bf x^{\leq 2}},\ldots, {\bf x^{\leq T'}}$ where 
		${\bf x^{\leq 0}} = ||{\bf v^0}||_2 = {\sqrt T}$, and
		for all $i \geq 1$, ${\bf x^{\leq i}}$ will be the adjusted length of the first $i$ vectors, $||{\bf v^{\leq i}}||$ with stopping condition $\tau$:
		\begin{equation*}
		{\bf x^{\leq i}} =
		\begin{cases}
		||{\bf v^{\leq i}}||_2 - \frac{ci}{2\sqrt{T}} & \text{if}~ \tau > i \\
		{\bf x^{\leq \tau}} & \text{otherwise}
		\end{cases}
		\end{equation*}
	\end{enumerate}
\end{definition}

\begin{claim}
	The sequence of random variables ${\bf x^{\leq 1}},\ldots,{\bf x^{\leq T'}}$
	is a supermartingale.
\end{claim}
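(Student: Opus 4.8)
The claim is that $\{\mathbf{x^{\leq i}}\}$ is a supermartingale with respect to the natural filtration $\mathcal{F}_i = \sigma(\mathbf{v_1},\dots,\mathbf{v_i})$ (with $\mathbf{v_0}$ fixed), i.e.\ that $\Exp[\mathbf{x^{\leq i+1}}\mid\mathcal{F}_i]\le \mathbf{x^{\leq i}}$ for each $i$. Each $\mathbf{x^{\leq i}}$ is a function of $\mathbf{v_0},\dots,\mathbf{v_i}$, so the sequence is adapted, and $\tau$ is a stopping time: $\{\tau\le i\}$ is $\mathcal{F}_i$-measurable. The plan is to condition on this event. On $\{\tau\le i\}$ the definition freezes the process, $\mathbf{x^{\leq i+1}}=\mathbf{x^{\leq \tau}}=\mathbf{x^{\leq i}}$, so equality holds pointwise given $\mathcal{F}_i$; all the content is on $\{\tau>i\}\in\mathcal{F}_i$, where $\mathbf{v^{\leq i}}=\mathbf{v_0}+\mathbf{v_1}+\dots+\mathbf{v_i}$ satisfies $\|\mathbf{v^{\leq i}}\|_2\ge\sqrt T$ (this is exactly the not-yet-stopped condition that the truncation at level $\sqrt T$ was built to guarantee), $\mathbf{x^{\leq i}}=\|\mathbf{v^{\leq i}}\|_2-\frac{ci}{2\sqrt T}$, and — whether or not $\tau=i+1$ — $\mathbf{x^{\leq i+1}}=\|\mathbf{v^{\leq i+1}}\|_2-\frac{c(i+1)}{2\sqrt T}$.

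The analytic core is a one-step drift estimate: for any fixed $w$ with $\|w\|_2\ge\sqrt T>0$ and $v\sim\mathcal{D}$ independent (so $\Exp[v]=0$ and $\|v\|_2\le c$),
$$\Exp_v\!\left[\|w+v\|_2\right]\;\le\;\|w\|_2+\frac{c^2}{2\sqrt T}.$$
I would prove this from concavity of $t\mapsto\sqrt t$: since $\|w+v\|_2^2=\|w\|_2^2+2\langle w,v\rangle+\|v\|_2^2$ and $\sqrt{a+b}\le\sqrt a+\frac{b}{2\sqrt a}$ for $a=\|w\|_2^2>0$,
$$\|w+v\|_2\;\le\;\|w\|_2+\frac{2\langle w,v\rangle+\|v\|_2^2}{2\|w\|_2};$$
taking expectations over $v$, the cross term vanishes because $\Exp[v]=0$, and $\Exp[\|v\|_2^2]\le c^2$, so the right-hand side is at most $\|w\|_2+\frac{c^2}{2\|w\|_2}\le\|w\|_2+\frac{c^2}{2\sqrt T}$ using $\|w\|_2\ge\sqrt T$. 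Applying this conditionally with $w=\mathbf{v^{\leq i}}$ and $v=\mathbf{v_{i+1}}$ (independent of $\mathcal{F}_i$), on $\{\tau>i\}$ we obtain
$$\Exp[\mathbf{x^{\leq i+1}}\mid\mathcal{F}_i]\;\le\;\|\mathbf{v^{\leq i}}\|_2+\frac{c^2}{2\sqrt T}-\frac{c(i+1)}{2\sqrt T}\;\le\;\|\mathbf{v^{\leq i}}\|_2-\frac{ci}{2\sqrt T}\;=\;\mathbf{x^{\leq i}},$$
where the middle inequality is precisely the statement that the per-step increment of the correction term $\frac{ci}{2\sqrt T}$ absorbs the drift $\frac{c^2}{2\sqrt T}$ — this is where the norm bound $\|v\|_2\le c$ together with the stopping level $\sqrt T$ is used. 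Combining the two cases, $\Exp[\mathbf{x^{\leq i+1}}\mid\mathcal{F}_i]=\mathbf{x^{\leq i}}\,\mathbf{1}_{\{\tau\le i\}}+\Exp[\mathbf{x^{\leq i+1}}\mid\mathcal{F}_i]\,\mathbf{1}_{\{\tau>i\}}\le\mathbf{x^{\leq i}}$, which is the supermartingale property.

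I expect the main obstacle to be the bookkeeping around the stopping time rather than the analysis: one must verify that $\{\tau>i\}\in\mathcal{F}_i$ so the conditioning is legitimate, that on $\{\tau>i\}$ the correction index in $\mathbf{x^{\leq i+1}}$ really is $i+1$ (freezing at $\tau=i+1$ does not change the value), and that on $\{\tau>i\}$ we genuinely have $\|\mathbf{v^{\leq i}}\|_2\ge\sqrt T$ — this last point is the whole reason the truncation is defined relative to the norm $\sqrt T$, since without it $\Exp_v[\|w+v\|_2]$ can exceed $\|w\|_2$ by far more than $O(1/\sqrt T)$ when $w$ is near the origin. Once the claim is established, Lemma~\ref{lem:concentration-inequality} follows by Azuma's inequality applied to $\{\mathbf{x^{\leq i}}\}$, whose increments are bounded by roughly $c$ (by $\big|\|\mathbf{v^{\leq i+1}}\|_2-\|\mathbf{v^{\leq i}}\|_2\big|\le\|\mathbf{v_{i+1}}\|_2\le c$ plus the deterministic correction), together with $\mathbf{x^{\leq 0}}=\sqrt T$ and $\mathbf{x^{\leq T'}}\ge\|\mathbf{v^{\leq T'}}\|_2-\frac{cT'}{2\sqrt T}\ge\|\mathbf{v^{\leq T'}}\|_2-\frac{c\sqrt T}{2}$.
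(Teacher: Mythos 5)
Your proof follows essentially the same route as the paper's: condition on whether the process has already stopped, use the stopping rule to guarantee $\|\mathbf{v}^{\leq i}\|_2 \ge \sqrt T$ on the live event, kill the linear term with the zero-mean assumption, and let the deterministic correction absorb the remaining second-order drift. Your analytic step --- expanding $\|w+v\|_2^2$ and using $\sqrt{a+b}\le\sqrt a+\frac{b}{2\sqrt a}$ --- is a streamlined packaging of what the paper does by splitting $\mathbf{v}_i$ into components parallel and orthogonal to $\mathbf{v}^{\leq i-1}$ (your version also sidesteps the paper's unstated need that $\|\mathbf{v}^{\leq i-1}+\mathbf{v}_i^{\parallel}\|_2$ stay large so the parallel part cannot flip sign), and your stopping-time bookkeeping is more explicit than the paper's.

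The one substantive discrepancy is the constant in the absorption step. Your (correct) one-step drift bound is $\frac{c^2}{2\sqrt T}$, coming from $\Exp\|v\|_2^2\le c^2$; but the compensator in the definition of $\mathbf{x}^{\le i}$ grows by only $\frac{c}{2\sqrt T}$ per step, so your middle inequality $\frac{c^2}{2\sqrt T}-\frac{c(i+1)}{2\sqrt T}\le-\frac{ci}{2\sqrt T}$ holds only when $c\le 1$, and in particular not in the paper's application of Corollary~\ref{cor:concentration-inequality}, where $c=2$. You have not missed an idea that the paper supplies: the paper's own proof makes the same slip, bounding the orthogonal contribution by $(\|\mathbf{v}^{\leq i-1}\|_2^2+c)^{1/2}$, which implicitly uses $\Exp\|\mathbf{v}_i^{\orth}\|_2^2\le c$ rather than $c^2$ (and the displayed chain there is garbled besides). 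The repair, in either write-up, is to take the correction term to be $\frac{c^2 i}{2\sqrt T}$ (equivalently, assume a second-moment bound $\Exp\|v\|_2^2\le c$), which changes the conclusion of Lemma~\ref{lem:concentration-inequality} to $\sqrt T(1+c^2/2)+\Delta$ and only perturbs absolute constants downstream; with that adjustment your argument goes through verbatim.
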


\begin{proof}
	We need to show that for every $i \in [T']$,
	$\Exp[{\bf x^{\leq i}} ~|~ {\bf x^{\leq i-1}}] \leq {\bf x^{\leq i-1}}$.
	Fix $i \in [T']$; if $\tau \leq i-1$ then 
	${\bf x^{\leq i}}={\bf x^{\leq i-1}}$ so the condition holds.
	Otherwise assume that $\tau \geq i$.
	Since 
	$$\Exp[{\bf x^{\leq i}} ~|~ {\bf x^{\leq i-1}}]  =   \Exp[{\bf x^{\leq i}} ~|~ {\bf v^{\leq i-1}}]  = 
	\Exp[||{\bf v^{\leq i-1}} + {\bf v_i}||_2] - \frac{ci}{2\sqrt{T}}$$
	and 
	${\bf x^{\leq i-1}} = ||{\bf v^{\leq i-1}}||_2 - \frac{c(i-1)}{2\sqrt{T}},$
	it suffices to show that
	$ \Exp[||{\bf v^{\leq i-1}} + {\bf v_i}||_2 \leq ||{\bf v^{\leq i-1}}||_2 + \frac{c}{2\sqrt{T}}.$
	
	To prove this, we can write ${\bf v_i} = {\bf v_i^{\parallel}} + {\bf v_i^{\orth}}$ where
	${\bf v_i^{\parallel}}$ is the component of ${\bf v_i}$ in the direction of 
	${\bf v^{\leq i-1}}$,
	and ${\bf v_i^{\orth}}$ is the orthogonal component.
	Since the expected length of ${\bf v^{\leq i-1}}+{\bf v_i^{\parallel}}$ is
	equal to the length of ${\bf v^{\leq i-1}}$ (by property 1), we just have to show
	that the expected length of ${\bf v^{\leq i-1}} + {\bf v_i^{\orth}}$ is at most
	$\frac{c}{2\sqrt{T}}$.
	Since ${\bf v_i}$ has length at most $c$, so does ${\bf v_i^{\orth}}$,
	so we have:
	\begin{eqnarray*}
		\Exp[|| {\bf v^{\leq i-1}} + {\bf v_i^{\orth}}||_2]
		\leq  (||{\bf v^{\leq i-1}}||^2_2 + c)^{1/2} 
		\leq  \frac{c}{2\sqrt{T}}
	\end{eqnarray*}
	where the last inequality holds since $\tau \geq i$ implies
	$||{\bf v^{\leq i-1}}||_2 \geq \sqrt{T}$.
\end{proof}

\begin{claim}
	For all $i$, $|{\bf x^{\leq i}} - {\bf x^{\leq i-1}}| \leq c.$
\end{claim}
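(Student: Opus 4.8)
The plan is a short case split on where $i$ falls relative to the stopping time $\tau$, with the reverse triangle inequality $\bigl|\,\|a+b\|_2-\|a\|_2\,\bigr|\le\|b\|_2$ as essentially the only ingredient. First I would dispose of the ``stopped'' regime: if $\tau\le i-1$, then also $\tau\le i$, so by definition ${\bf x^{\leq i}}={\bf x^{\leq\tau}}={\bf x^{\leq i-1}}$ and the increment is exactly $0$. This already shows that freezing the walk at time $\tau$ cannot spoil the bounded-difference property, so it remains only to handle the ``active'' regime $\tau\ge i$.

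In the active regime both steps use the non-frozen formula, so ${\bf x^{\leq i}}-{\bf x^{\leq i-1}}=\bigl(\|{\bf v^{\leq i-1}}+{\bf v_i}\|_2-\|{\bf v^{\leq i-1}}\|_2\bigr)-\tfrac{c}{2\sqrt T}$, using ${\bf v^{\leq i}}={\bf v^{\leq i-1}}+{\bf v_i}$ and that the drift-correction term changes by exactly $\tfrac{c}{2\sqrt T}$ per step. The parenthesized difference of norms has absolute value at most $\|{\bf v_i}\|_2\le c$ by property (2) together with the reverse triangle inequality, and the correction term is non-negative and at most $\tfrac{c}{2}$; hence $|{\bf x^{\leq i}}-{\bf x^{\leq i-1}}|\le c$ (on the side where the correction is subtracted there is slack, and on the other side the $\|{\bf v_i}\|_2\le c$ bound dominates). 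If one wants a perfectly clean constant it is enough to record $|{\bf x^{\leq i}}-{\bf x^{\leq i-1}}|\le c+\tfrac{c}{2\sqrt T}$, which is all that is fed into Azuma's inequality afterwards.

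Honestly there is no serious obstacle here --- the claim is a bookkeeping step --- and the only points to be careful about are (i) making sure the $\tau\le i-1$ case really gives a zero increment, so the bounded-difference constant is not wrecked by stopping, and (ii) not losing track of the $\tfrac{c}{2\sqrt T}$ per-step change of the correction term, which is anyway negligible against the $\|{\bf v_i}\|_2\le c$ contribution. With this claim and the supermartingale claim in place, I would then apply Azuma's inequality to ${\bf x^{\leq 0}},\dots,{\bf x^{\leq T'}}$ (with $T'=T$ and ${\bf x^{\leq 0}}=\sqrt T$) and observe that ${\bf v^{\leq T}}$ having norm at least $\sqrt T(1+c/2)+\Delta$ forces ${\bf x^{\leq T}}-{\bf x^{\leq 0}}\ge\Delta$, which yields the tail bound in Lemma~\ref{lem:concentration-inequality}.
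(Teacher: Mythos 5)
Your proof is correct and follows essentially the same route as the paper: the reverse triangle inequality gives $\bigl|\,\|{\bf v^{\leq i}}\|_2-\|{\bf v^{\leq i-1}}\|_2\,\bigr|\le\|{\bf v_i}\|_2\le c$, the per-step drift correction $\tfrac{c}{2\sqrt T}$ is accounted for, and the stopped regime contributes a zero increment. If anything you are more careful than the paper's own argument, which glosses over the correction term and the frozen case; your honest constant $c+\tfrac{c}{2\sqrt T}$ is harmless for the subsequent application of Azuma's inequality.
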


\begin{proof}
	Since ${\bf v_i}$ has length at most $c$
	the absolute value of the difference between $||{\bf v^{\leq i}}||_2$ and
	$||{\bf v^{\leq i-1}}||_2$ is at most 2.
	The claim easily follows since ${\bf x^{\leq i}} = ||{\bf v^{\leq i}}|| + \frac{ci}{2\sqrt{T}}$.
\end{proof}

The above two Claims together with Azuma's inequality gives: 
$$Pr[|{\bf x^{\leq T'}} - {\bf x^{\leq 0}}| \geq \Delta] \leq e^{-\Delta^2/2c^2T}.$$


\begin{proof}(of Lemma~\ref{lem:concentration-inequality})
	
	In order for ${\bf v^{\leq T}}$ to have length at least
	${\sqrt T}(1+c/2) + \Delta$, there must be some largest time $t \in [T]$
	such that $||{\bf v^{\leq t}}||_2 \in (\sqrt{T}, \sqrt{T}+1]$.
	That is, at all times $t' \geq t$ the vector ${\bf v^{\leq t'}}$ is outside the
	ball of radius ${\sqrt T}$.
	Thus by the above argument, the random variables
	${\bf x^{\leq i}}_{i=t} ^T$ are a supermartingale where
	the absolute value of the difference between successive variables is at most $c$, and by Azuma,
	$\Pr[{\bf x^{\leq T}} \geq \sqrt{T} + \Delta]$
	is at most $e^{-\Delta^2/2c^2T}$. 
	Since ${\bf x^{\leq T}} = ||{\bf v^{\leq T}}||_2 - \frac{Tc}{2\sqrt{T}} =
	||{\bf v^{\leq T}}||_2 - \frac{\sqrt{T}c}{2}$,
	$\Pr[||{\bf v^{\leq T}}||_2 \geq \sqrt{T}(1 + c/2) + \Delta]$
	is at most $e^{-\Delta^2/2c^2T}$.
	
	
	
	%
\end{proof}

\end{document}